\def\oneb{{\bf 1}}
\def\a{{\bf a}}
\def\b{{\bf b}}
\def\c{{\bf c}}
\def\eb{{\bf e}}
\def\f{{\bf f}}
\def\gbold{{\bf g}}
\def\rbold{{\bf r}}
\def\v{{\bf v}}
\def\y{{\bf y}}
\def\z{{\bf z}}
\def\B{{\cal B}}
\def\C{{\cal C}}
\def\F{{\cal F}}
\def\K{{\cal K}}
\def\KR{{\cal KR}}
\def\U{{\cal U}}
\def\X{{\cal X}}
\def\Y{{\cal Y}}
\def\R{{\mathbb R}}
\def\Sm{{\mathbb S}}
\def\al{\alpha}
\def\d{\delta}
\def\e{\epsilon}
\def\g{\gamma}
\def\l{\lambda}
\def\om{\omega}
\def\OM{\Omega}
\def\s{\sigma}
\def\t{\tau}
\def\th{\theta}
\def\balpha{{\boldsymbol \alpha}}
\def\bmu{{\boldsymbol \mu}}
\def\bth{{\boldsymbol \theta}}
\def\bphi{{\boldsymbol \phi}}
\def\bxi{{\boldsymbol \xi}}
\def\bths{\bth^*}
\def\vb{\bar{v}}
\def\vh{\hat{v}}
\def\Vh{\hat{V}}
\def\tai{t \ap \infty}
\def\ap{\rightarrow}
\def\bz{{\bf 0}}
\def\fa{\; \forall}
\def\as{\mbox{ a.s.}}
\def\nm{\Vert}
\renewcommand{\and}{\mbox{$\wedge$}}
\newcommand{\bc}{\begin{center}}
\newcommand{\ec}{\end{center}}
\newcommand{\be}{\begin{equation}}
\newcommand{\ee}{\end{equation}}
\newcommand{\bd}{\begin{displaymath}}
\newcommand{\ed}{\end{displaymath}}
\newcommand{\ba}{\begin{array}}
\newcommand{\ea}{\end{array}}
\newcommand{\ben}{\begin{enumerate}}
\newcommand{\een}{\end{enumerate}}
\newcommand{\bit}{\begin{itemize}}
\newcommand{\eit}{\end{itemize}}
\newcommand{\beq}{\begin{eqnarray}}
\newcommand{\eeq}{\end{eqnarray}}
\newcommand{\btab}{\begin{tabular}}
\newcommand{\etab}{\end{tabular}}
\newcommand{\bfig}{\begin{figure}}
\newcommand{\efig}{\end{figure}}
\newcommand{\btp}{\begin{tikzpicture}}
\newcommand{\etp}{\end{tikzpicture}}
\newcommand{\argmin}{\operatornamewithlimits{arg~min}}
\newcommand{\argmax}{\operatornamewithlimits{arg~max}}
\newcommand{\nmm}[1]{ \nm #1 \nm }
\newcommand{\nmeu}[1]{ \nm #1 \nm_2 }
\newcommand{\nmeusq}[1]{ \nm #1 \nm_2^2 }
\newcommand{\nmi}[1]{ \nm #1 \nm_\infty}
\newcommand{\nmS}[1]{ \nm #1 \nm_S }
\def\gJ{\nabla J}
\newcommand{\IP}[2]{ \langle #1 , #2 \rangle }
\newcommand{\halmos}{\hfill $\blacksquare$}
\def\nmsl1{\nm_{{\rm SL1}}}
\definecolor{verm}{rgb}{0.6,0.2,0.2}
\definecolor{purp}{rgb}{0.3,0.1,0.6}
\definecolor{purple}{rgb}{0.4,0.0,0.6}
\definecolor{bggreen}{rgb}{0.1,0.3,0.1}
\definecolor{dgreen}{rgb}{0.1,0.6,0.1}
\definecolor{black}{rgb}{0.0,0.0,0.0}
\definecolor{crim}{rgb}{0.3,0.1,0.1}
\definecolor{dred}{rgb}{0.5,0.1,0.1}
\definecolor{Blue}{cmyk}{0.65,0.13,0,0}
\definecolor{Black}{cmyk}{0,0,0,1}
\definecolor{Red}{cmyk}{0,1,1,0}
\definecolor{Green}{cmyk}{1,0,1,0}
\definecolor{Orange}{cmyk}{0,0.61,0.87,0.1}
\definecolor{Fuchsia}{cmyk}{0.47,0.91,0,0.08}
\definecolor{PineGreen}{cmyk}{0.92,0,0.59,0.25}
\newtheorem{definition}{Definition}{\bf}{\it}
\newtheorem{example}{Example}{\bf}{\rm}
\newtheorem{lemma}{Lemma}{\bf}{\it}
\newtheorem{theorem}{Theorem}{\bf}{\it}
{\bf}{\it}
{\bf}{\it}
{\bf}{\rm}
\def\Ah{\hat{A}}
\def\Vd{\dot{V}}
\def\gJ{\nabla J}
\def\Tl{T^{\l}}
\def\TDl{TD(\l)}
\def\vbb{\bar{\v}}
\def\vbh{\hat{\v}}
\def\Vh{\hat{V}}
\begin{document}

\title{A Tutorial Introduction to Reinforcement Learning}

\author{
Mathukumalli Vidyasagar
\thanks{
SERB National Science Chair, Indian Institute of Technology Hyderabad,
Kandi, Telangana 502284, India.
Email: m.vidyasagar@iith.ac.in
This research was supported by the Science and Engineering
Research Board, Government of India.
}
}

\maketitle

\begin{abstract}

In this paper, we present a brief survey of Reinforcement Learning (RL),
with particular emphasis on Stochastic Approximation (SA) as a unifying theme.
The scope of the paper includes Markov Reward Processes, Markov Decision
Processes, Stochastic Approximation algorithms, and widely used
algorithms such as Temporal Difference Learning and $Q$-learning.

\end{abstract}

\section{Introduction}\label{sec:Intro}

In this paper, we present a brief survey of Reinforcement Learning (RL),
with particular emphasis on Stochastic Approximation (SA) as a unifying theme.
The scope of the paper includes Markov Reward Processes, Markov Decision
Processes, Stochastic Approximation methods, and widely used
algorithms such as
Temporal Difference Learning and $Q$-learning.
Reinforcement Learning is a vast subject, and this brief survey can
barely do justice to the topic.
There are several excellent texts on RL, such as
\cite{Ber-Tsi96,Puterman05,Csaba10,Sutton-Barto18}.
The dynamics of the Stochastic Approximation (SA) algorithm are
analyzed in \cite{Ljung78,Kushner-Clark78,BMP92,Kushner-Yin97,Benaim99,Borkar08,Borkar22}.
The interested reader may consult those sources for more information.

In this survey, we use the phrase ``reinforcement learning'' 
to refer to decision-making
with uncertain models, \textit{and in addition, current actions
alter the future behavior of the system}.
Therefore, if the same action
is taken at a future time, the consequences might not be the same.
This additional feature
distinguishes RL from ``mere'' decision-making under uncertainty.
Figure \ref{fig:11} rather arbitrarily divides decision-making problems
into four quadrants.
Examples from each quadrant are now  briefly described.
\bit
\item Many if not most decision-making problems fall into the lower-left
quadrant of ``good model, no alteration'' (meaning that the control
actions do not alter the environment).
An example  is a fighter aircraft which usually
has an excellent model thanks to aerodynamical modelling and/or wind tunnel
tests.
In turn this permits the control system designers to formulate and to solve
an optimal (or some other form of) control problem.
\item Controlling a chemical reactor would be an example from the lower-right
quadrant.
As a traditional control system, it can be assumed that the environment
in which the reactor operates does not change as a consequence of
the control strategy adopted.
However, due to the complexity of a reactor, it is difficult to obtain
a very accurate model, in contrast with a fighter aircraft for example.
In such a case, one can adopt one of two approaches.
The first, which is a traditional approach in control system theory,
is to use a nominal model of the system and to treat the deviations
from the nominal model as uncertainties in the model.
A controller is designed based on the nominal model, and robust control
theory would be invoked to ensure that the controller would still
perform satisfactorily (though not necessarily optimally) for the actual
system.
The second, which would move the problem from the lower right to the
upper right quadrant, is to attempt to ``learn'' the unknown dynamical
model by probing its response to various inputs.
This approach is suggested in \cite[Example 3.1]{Sutton-Barto18}.
A similar statement can be made about robots, where the geometry determines
the \textit{form} of the dynamical equations describing it, but not
the parameters in the equations; see for example \cite{SHV20}.
In this case too, it is possible to ``learn'' the dynamics through
experimentation.
In practice, such an approach is far slower than the traditional
control systems approach of using a nominal model and designing a
``robust'' controller.
However, ``learning control'' is a popular area in the world of machine
learning.
One reason is that the initial modelling error is too large, then
robust control theory alone would not be sufficient to ensure the
stability of the actual system with the designed controller.
In contrast (and in principle), a ``learning control'' approach
can withstand larger modelling errors.
The widely-used Model Predictive Control (MPC) paradigm can be viewed
as an example of a learning-based approach.
\item A classic example of a problem belonging to the upper-left corner
is a Markov Decision Process (MDP),
which forms the backbone of one approach to RL.
In an MDP, there is a state space $\X$, and an action space $\U$,
both of which are usually assumed to be finite.
In most MDPs, $|\X| \gg |\U|$.
Board games without an element of randomness such as tic-tac-toe or chess
would belong to the upper-left quadrant, at least in principle.
Tic-tac-toe belongs here, because the rules of the game are
clear, and the number of possible games is manageable.
\textit{In principle}, games such as chess which are ``deterministic'' (i.e.,
there is no throwing of dice as in Backgammon for example) would also
belong here.
Chess is a two-person game in which, for each board position, it is possible
to assign the likelihood of the three possible outcomes:
White wins, Black wins, or it is a draw.
However, due to the enormous number of possibilities, it is often not possible
to \textit{determine} these likelihoods precisely.
It is pointed out explicitly in \cite{Shannon-chess} that, merely because
we cannot explicitly compute this likelihood function, that does not
mean that the likelihood does not exist!
However, as a practical matter, it is not a bad idea to treat this
likelihood function as being unknown, and to \textit{infer} it on the basis of
experiment / experience.
Thus, as with chemical reactors, it is not uncommon to move chess-playing
from the lower-right corner to the upper-right corner.
\item The upper-right quadrant is the focus of RL.
There are many possible ways to formulate RL problems, each of which 
leads to its own solution methodologies.
A very popular approach to RL is to formulate it as MDPs whose dynamics
are unknown.
That is the approach adopted in this paper.
\eit

\bfig
\bc
\btp[every text node part/.style={align=center}, line width = 2pt]

\draw [->] (0,0) -- (8.5,0) node [right] {Model Quality} ;
\draw [->] (0,0) -- (0,4.5) node [above]
{Interaction Between \\ Action \& Environment} ;
\draw [blue] (0.1,0.1) rectangle node [pos=0.5] 
{Good model. \\ Action doesn't alter \\ Environment} (3.9,1.9) ;
\draw [purp] (0.1,2.1) rectangle node [pos=0.5] 
{Good model. \\ Action can alter \\ Environment} (3.9,3.9) ;
\draw [bggreen] (4.1,0.1) rectangle node [pos=0.5] 
{Poor model. \\ Action doesn't alter \\ Environment} (7.9,1.9) ;
\draw [red] (4.1,2.1) rectangle node [pos=0.5] 
{Poor model. \\ Action can alter \\ Environment} (7.9,3.9) ;

\etp
\ec
\caption{The four quadrants of decision-making under uncertainty}
\label{fig:11}
\efig

In an MDP,
at each time $t$, the learner (also known as the actor or the agent)
measures the state $X_t \in \X$.
Based on this measurement, the learner chooses an action $U_t \in \U$,
and receives a reward $R(X_t,U_t)$.
Future rewards are discounted by a discount factor $\g \in (0,1)$.
The rule by which the current action $U_t$ is chosen as a function of
the current state $X_t$ is known as a policy.
With each policy, one can associate the expected value of the total
(discounted) reward over time.
The problem is to find the best policy.
There is a variant called POMDP (Partially Observable Markov Decision Process)
in which the state $X_t$ cannot be measured directly;
rather, there is an output (or observation) $Y_t \in \Y$ which is
a memoryless function, either deterministic or random, of $X_t$.
These problems are not studied here; it is always assumed that $X_t$
can be measured directly.
When the parameters of the MDP are known, there are several approaches
to determining the optimal policy.
RL is distinct from an MDP in that, in RL, the parameters of the underlying
MDP are constant but not known to the learner; they must be learnt on the 
basis of experimentation.
Figure \ref{fig:12} depicts the situation.

\bfig
\bc
\btp[line width = 2pt]


\draw (-3,0) rectangle (3,1.2)
node (MC) [pos=0.5,align=center]
{{\color{verm}Environment:Markov Process} \\
{\color{blue}States $\{X_0,X_1, \cdots \}$}} ;

\draw (-1.5,-3.2) rectangle (1.5,-2)
node(pol) [pos=0.5,align=center]
{{\color{verm}Policy $\pi$} \\
{\color{blue}$U_t = \pi(X_t)$} } ;

\draw (4,-1.6) rectangle (7,-0.4)
node(rew) [pos=0.5,align=center]
{{\color{verm} Reward $R$} \\
{\color{blue}$R_t = R(X_t,U_t)$} } ;


\draw [->] (0,0) -- node [left] {{\color{blue}$X_t$}} (0,-2) ;
\draw [->] (-1.5,-2.6) -- node [above] {{\color{blue}$U_t$}}
(-3.5,-2.6) -- (-3.5,0.6) -- (-3,0.6) ;
\draw [->] (0,-1) -- node [above] {{\color{blue}$X_t$}} (4,-1) ;
\draw [->] (1.5,-2.6) -- node [above] {{\color{blue}$U_t$}}
(5.5,-2.6) -- (5.5,-1.6) ;

\etp
\ec
\caption{Depiction of a Reinforcement Learning Problem}
\label{fig:12}
\efig

The remainder of the paper is organized as follows:
In Section \ref{sec:MRP}, Markov Reward Processes are introduced.
These are a precursor to Markov Decision Processes (MDPs),
which are introduced in Section \ref{sec:MDP}.
Specifically, in Section \ref{ssec:31}, the relevant problems
in the study of MDPs are formulated.
In Section \ref{ssec:32}, the solutions to these problems are given
in terms of the Bellman value iteration, the action-value function,
and the $F$-iteration to determine the optimal action-value function.
In Section \ref{ssec:33}, we study the situation where the dynamics
of the MDP under study are not known precisely.
Instead, one has access only to a sample path $\{ X_t \}$ of the
Markov process under study.
For this situation, we present two standard algorithms, known as
Temporal Difference Learning, and $Q$-Learning.
Starting
from Section \ref{sec:SA}, the paper consists of results due to the author.
In Section \ref{ssec:41}, the concept of stochastic approximation (SA)
is introduced, and its relevance to Reinforcement Learning is outlined
in Section \ref{ssec:42}.
In Section \ref{ssec:43}, a new theorem on the global asymptotic
stability of nonlinear ODEs is stated; this theorem is of independent interest.
Some theorems on the convergence of the SA algorithm are
presented in Sections \ref{ssec:44} and \ref{ssec:45}.
In Section \ref{sec:Appl}, the results of Section \ref{sec:SA}
are applied to RL problems.
In Section \ref{ssec:51}, a technical result on the sample paths of
an irreducible Markov process is stated.
Using this result, simplified conditions are given for the convergence
of the Temporal Difference algorithm (Section \ref{ssec:52})
and $Q$-learning (Section \ref{ssec:53}).
A brief set of concluding remarks ends the paper.


\section{Markov Reward Processes}\label{sec:MRP}

Markov reward processes are standard (stationary) Markov processes
where each state has a ``reward'' associated with it.
Markov Reward Processes are a precursor to Markov Decision Processes;
so we review those in this section.
There are several standard texts on Markov processes, one of which is
\cite{MV-HMM-14}.

Suppose $\X$ is a finite set of cardinality $n$, written as
$\{ x_1 , \ldots , x_n \}$.
If $\{ X_t \}_{t \geq 0}$ is a stationary Markov process assuming values
in $\X$, then the corresponding state transition matrix $A$ is defined by
\be\label{eq:S11}
a_{ij} = \Pr \{ X_{t+1} = x_j | X_t = x_i \} .
\ee
Thus the $i$-th row of $A$ is the conditional probability vector of $X_{t+1}$
when $X_t = x_i$.
Clearly the row sums of the matrix $A$ are all equal to one.
This can be expressed as $A \oneb_n = \oneb_n$, where $\oneb_n$
denotes the $n$-dimensional column vector whose entries all equal one.
Therefore, if we define the induced matrix norm $\nmm{A}_{\infty \ap \infty}$ as
\bd
\nmm{A}_{\infty \ap \infty} := \max_{\v \neq \bz} 
\frac{\nmi{A\v}} {\nmi{\v}},
\ed
then $\nmm{A}_{\infty \ap \infty}$ equals one,
which also equals the spectral radius of $A$.

Now suppose that there is a ``reward'' function $R: \X \ap \R$ associated
with each state.
There is no consensus within the community about whether the reward
corresponding to the state $X_t$ is paid at time $t$ as in \cite{Csaba10},
or time $t+1$, as in \cite{Puterman05,Sutton-Barto18}.
In this paper, it is assumed
that the reward is paid at time $t$, and is denoted by $R_t$;
the modifications required to handle the other approach are easy and
left to the reader.
The reward $R_t$ can either be a deterministic function of $X_t$,
or a random  function.
If $R_t$ is a deterministic function of $X_t$, then we have that
$R_t = R(X_t)$ where $R$ is the reward function mapping $\X$ into 
(a finite subset of) $\R$.
Thus, whenever the trajectory $\{ X_t \}$ of the Markov process equals
some state $x_i \in \X$, the resulting reward $R(X_t)$ will always
equal $R(x_i) =: r_i$.
Thus the reward is captured by an $n$-dimensional vector $\rbold$,
where $r_i = R(x_i)$.
On the other hand, if $R_t$ is a random function of $X_t$, then one
would have to provide the probability distribution of $R_t$ given $X_t$.
Since $X_t$ has only $n$ different values, we would have to provide
$n$ different probability distributions.
To avoid technical difficulties, it is common to assume that $R(x_i)$
is a \textit{bounded} random variable for each index $i$.
Note that, because the set $\X$ is finite, if the reward function is
deterministic, then we have that
\bd
\max_{x_i \in \X} R(x_i) < \infty .
\ed
In case the reward function $R$ is random, as mentioned above, it is
common to assume that $R(x_i)$ is a \textit{bounded} random variable
for each index $i \in [n]$, where the symbol $[n]$ equals
$\{ 1 , \cdots , n \}$.
With this assumption, it follows that
\bd
\max_{x_i \in \X} E[R(x_i)] < \infty .
\ed

Two kinds of Markov reward processes are widely studied, namely:
Discounted reward processes, and average reward processes.
In this paper, we restrict attention to discounted reward processes.
However, we briefly introduce average reward processes.
Define (if it exists)
\bd
V(x_i) :=  \lim_{T \ap 0} \frac{1}{T+1}
E \left[ \sum_{t=0}^T R(X_t) | X_0 = x_i \right] .
\ed
An excellent review of average reward processes can be found in
\cite{Avg-MDP-Survey93}.

In each discounted Markov Reward Process,
there is a ``discount factor'' $\g \in (0,1)$.
This factor captures the extent to which future rewards are less valuable
than immediate rewards.
Fix an initial state $x_i \in \X$.
Then the \textbf{expected discounted future reward} $V(x_i)$ is defined as
\be\label{eq:S12}
V(x_i) := E \left[ \sum_{t=0}^\infty \g^t R_t | X_0 = x_i \right] 
= E \left[ \sum_{t=0}^\infty \g^t R(X_t) | X_0 = x_i \right] .
\ee
We often just use ``discounted reward'' instead of the longer phrase.
With these assumptions, because $\g < 1$, the above summation converges and is
well-defined.
The quantity $V(x_i)$ is referred to as the \textbf{value function}
associated with $x_i$, and the vector
\be\label{eq:S13}
\v = [ \ba{lll} V(x_1) & \cdots & V(x_n) \ea ]^\top ,
\ee
is referred to as the \textbf{value vector}.
Note that, throughout this paper, we view the value as both a
\textit{function} $V : \X \ap \R$ as well as a \textit{vector} $\v \in \R^n$.
The relationship between the two is given by \eqref{eq:S13}.
We shall use whichever interpretation is more convenient in a given context.

This raises the question as to how the value function and/or value vector
is to be determined.
Define the vector $\rbold \in \R^n$ via
\be\label{eq:S14}
\rbold := [ \ba{lll} r_1 & \cdots & r_n \ea ]^\top ,
\ee
where $r_i = R(x_i)$ if $R$ is a deterministic function, and
if $R_t$ is a random function of $X_t$, then
\be\label{eq:S14a}
r_i := E[R(x_i)] .
\ee
The next result gives a useful characterization of the value vector.

\begin{theorem}\label{thm:S11}
The vector $\v$ satisfies the recursive relationship
\be\label{eq:S15}
\v = \rbold + \g A \v ,
\ee
or, in expanded form,
\be\label{eq:S15f}
V(x_i) = r_i + \g \sum_{j=1}^n a_{ij} V(x_j) .
\ee
\end{theorem}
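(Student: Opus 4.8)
The plan is to establish the recursion \eqref{eq:S15} directly from the definition \eqref{eq:S12} of the value function, using the law of total expectation together with the time-homogeneity (stationarity) of the Markov process. The key structural observation is that the infinite discounted sum can be split into its first term (the immediate reward) and the remaining tail, and that the tail, after factoring out one power of $\g$, is itself a discounted future reward but started one step later.

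Concretely, first I would fix a state $x_i$ and write
\bd
V(x_i) = E \left[ \sum_{t=0}^\infty \g^t R(X_t) \Big| X_0 = x_i \right]
= E[R(X_0) | X_0 = x_i] + E \left[ \sum_{t=1}^\infty \g^t R(X_t) \Big| X_0 = x_i \right] .
\ed
The first term is simply $r_i$, by the definition \eqref{eq:S14} (in the random-reward case, $r_i = E[R(x_i)]$ as in \eqref{eq:S14a}, so this is where the expectation of the reward enters). For the second term I would reindex the sum by pulling out a factor of $\g$, rewriting it as $\g \, E[\sum_{s=0}^\infty \g^s R(X_{s+1}) | X_0 = x_i]$.

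Next I would condition on the value of $X_1$ and apply the tower property of conditional expectation. Summing over the possible successor states $x_j$, the probability that $X_1 = x_j$ given $X_0 = x_i$ is exactly $a_{ij}$ by \eqref{eq:S11}. By the Markov property and stationarity, the conditional expected discounted reward of the trajectory started at $X_1 = x_j$ equals $V(x_j)$ — the same value function, since the process is time-homogeneous. This gives
\bd
\g \, E \left[ \sum_{s=0}^\infty \g^s R(X_{s+1}) \Big| X_0 = x_i \right]
= \g \sum_{j=1}^n a_{ij} V(x_j) ,
\ed
which combined with the immediate-reward term yields the componentwise form \eqref{eq:S15f}; collecting these $n$ scalar identities into vector form is exactly \eqref{eq:S15}.

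The main obstacle, such as it is, lies in justifying the interchange of the conditional expectation with the infinite summation and the shift in the time index — that is, making the tower-property step rigorous rather than merely formal. This is controlled by the boundedness assumptions on the rewards together with $\g \in (0,1)$: since $\max_i |r_i| < \infty$ and the series $\sum_t \g^t$ converges absolutely, dominated convergence permits moving the expectation inside the sum and legitimizes the reindexing. Once absolute convergence is invoked, the argument is routine, and the stationarity of the process is what guarantees that the shifted tail is governed by the identical value function $V$ rather than some time-dependent variant.
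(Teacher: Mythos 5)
Your proposal is correct and follows essentially the same route as the paper's proof: split off the immediate reward $r_i$, condition on $X_1$ (which occurs with probability $a_{ij}$), and use stationarity to identify the shifted tail with $V(x_j)$, yielding \eqref{eq:S15f}. The only differences are cosmetic — you reindex the tail before conditioning rather than after, and you explicitly note the dominated-convergence justification that the paper leaves implicit.
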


\begin{proof}
Let $x_i \in \X$ be arbitrary.
Then by definition we have
\be\label{eq:S15a}
V(x_i) = E \left[ \sum_{t=0}^\infty \g^t R_t | X_0 = x_i \right] 
= r_i + E \left[ \sum_{t=1}^\infty \g^t R_t | X_0 = x_i \right] .
\ee
However, if $X_0 = x_i$, then $X_1 = x_j$ with probability $a_{ij}$.
Therefore we can write
\beq
E \left[ \sum_{t=1}^\infty \g^t R_t | X_0 = x_i \right]
& = & \sum_{j=1}^n a_{ij}
E \left[ \sum_{t=1}^\infty \g^t R_t | X_1 = x_j \right] \nonumber \\
& = & \g \sum_{j=1}^n a_{ij}
E \left[ \sum_{t=0}^\infty \g^t R_t | X_0 = x_j \right] \nonumber \\
& = & \g \sum_{j=1}^n a_{ij} V(x_j) . \label{eq:S15b}
\eeq
In the second step we use fact that the Markov process is stationary.
Substituting from \eqref{eq:S15b} into \eqref{eq:S15a} gives the
recursive relationship \eqref{eq:S15f}.
\end{proof}

\begin{example}\label{exam:S11}
As an illustration of a Markov Reward process,
we analyze a toy snakes and ladders game with the transitions shown in
Figure \ref{fig:3}.
Here $W$ and $L$ denote ``win'' and ``lose'' respectively.
The rules of the game are as follows:
\bit
\item Initial state is S.
\item A four-sided, fair die is thrown at each stage.
\item Player must land exactly on $W$ to win and exactly on $L$ to lose.
\item If implementing a move causes crossing of $W$ and $L$,
then the move is not implemented.
\eit
There are twelve possible states in all: S, 1, $\ldots$ , 9 , $W$, $L$.
However, 2, 3, 9 can be omitted, leaving nine states,
namely S, 1, 4, 5, 6, 7, 8, $W$, $L$.
At each step, there are at most four possible outcomes.
For example, from the state S, the four outcomes are 1, 7, 5, 4.
From state 6, the four outcomes are 7, 8, 1, and W.
From state 7, the four outcomes are 8, 1, W, 7.
From state 8, there four possible outcomes are 1, $W$, $L$ and 8 with
probability $1/4$ each, because if the die comes up with 4, then the move
cannot be implemented.
It is time-consuming but straight-forward to compute the state transition
matrix as
\bd
\ba{|c|ccccccc|cc|}
\hline
& S & 1 & 4 & 5 & 6 & 7 & 8 & W & L \\
\hline
S &      0 &  0.25 &  0.25 &  0.25 &     0 &  0.25 &     0  &    0 & 0 \\
1 &      0 &     0 &  0.25 &  0.50 &     0 &  0.25 &     0  &    0 & 0 \\
4 &      0 &     0 &     0 &  0.25 &  0.25 &  0.25 &  0.25  &    0 & 0 \\
5 &      0 &  0.25 &     0 &     0 &  0.25 &  0.25 &  0.25  &    0 & 0 \\
6 &      0 &  0.25 &     0 &     0 &     0 &  0.25 &  0.25  & 0.25 & 0 \\
7 &      0 &  0.25 &     0 &     0 &     0 &  0 &  0.25  & 0.25 & 0.25 \\
8 &      0 &  0.25 &     0 &     0 &     0 &     0 &  0.25  & 0.25 & 0.25 \\
\hline
W &      0 &     0 &     0 &     0 &     0 &     0 &     0  & 1 & 0 \\
L & 0 &     0 &     0 &     0 &     0 &     0 &     0  & 0 & 1 \\
\hline
\ea
\ed

\bfig
\bc
\btp[line width=2pt]


\draw (-1,-0.5) rectangle (11,0.5) ;
\draw (0,-0.5) -- node [left] {S} (0,0.5) ;
\draw (1,-0.5) -- node [left] {1} (1,0.5) ;
\draw (2,-0.5) -- node [left] {2} (2,0.5) ;
\draw (3,-0.5) -- node [left] {3} (3,0.5) ;
\draw (4,-0.5) -- node [left] {4} (4,0.5) ;
\draw (5,-0.5) -- node [left] {5} (5,0.5) ;
\draw (6,-0.5) -- node [left] {6} (6,0.5) ;
\draw (7,-0.5) -- node [left] {7} (7,0.5) ;
\draw (8,-0.5) -- node [left] {8} (8,0.5) ;
\draw (9,-0.5) -- node [left] {9} (9,0.5) ;
\draw (10,-0.5) -- node [left] {W} (10,0.5) ;
\draw (11,-0.5) -- node [left] {L} (11,0.5) ;


\draw [->,bggreen] (2.5,-0.5) .. controls (3.16,-1) and (3.83,-1) .. (4.5,-0.5) ;
\draw [->,bggreen] (1.5,-0.5) .. controls (3.17,-2) and (4.83,-2) .. (6.5,-0.5) ;
\draw [->,red] (8.5,0.5) .. controls (5.83,2) and (3.17,2) .. (0.5,0.5) ;

\etp
\ec
\caption{A Toy Snakes and Ladders Game}
\label{fig:3}
\efig

We define a reward function for this problem, as follows:
We set $R_t = f(X_{t+1})$, where $f$ is defined as follows:
$f(W) = 5$, $f(L) = -2$, $f(x) = 0$ for all other states.
However, there is an expected reward \textit{depending on the state at the next
time instant}.
For example, if $X_0 = 6$, then the expected value of $R_0$ is $5/4$,
whereas if $X_0 = 7$ or $X_0 = 8$, then the expected value of $R_0$ is $3/4$.
\halmos
\end{example}

Now let us see how the implicit equation \eqref{eq:S15} can be solved
to determine the value vector $\v$.
Since the induced matrix norm $\nmm{A}_{\infty \ap \infty} = 1$
and $\g < 1$, it follows that the matrix $I - \g A$ is nonsingular.
Therefore, for every reward function $\rbold$, there is
a unique $\v$ that satisfies \eqref{eq:S15}.
In principle it is possible to deduce from \eqref{eq:S15} that
\be\label{eq:S15d}
\v = (I - \g A)^{-1} \rbold .
\ee
The difficulty wth this formula however is that in most actual
applications of Markov Decision Problems, the integer $n$ denoting
the size of the state space $\X$ is quite large.
Moreover, inverting a matrix has cubic complexity in the size of the
matrix.
Therefore it may not be practicable to invert the matrix $I - \g A$.
So we are forced to look for alternate approaches.
A feasible approach is provided by the Contraction Mapping Theorem.

\begin{theorem}\label{thm:S11a}
The map $\y \mapsto T\y := \rbold + \g A \y$ is monotone and is a contraction
with respect to the $\ell_\infty$-norm, with contraction constant $\g$.
Therefore, we can choose some vector $\y^0$ arbitrarily, and then define
\bd
\y^{i+1} = \rbold + \g A \y^i .
\ed
Then $\y^i$ converges to the value vector $\v$.
\end{theorem}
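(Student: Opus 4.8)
The plan is to verify the two structural claims about $T$ --- monotonicity and the $\ell_\infty$-contraction property --- and then to deduce convergence of the iterates by appealing to the Contraction Mapping Theorem together with the fixed-point characterization already established in Theorem \ref{thm:S11}.

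First I would prove monotonicity. The key observation is that every entry $a_{ij}$ of the state transition matrix is a conditional probability by \eqref{eq:S11}, hence $a_{ij} \geq 0$, and that $\g \in (0,1)$ is strictly positive. Thus if $\y \leq \z$ holds componentwise, then for each $i$ we have $(A\y)_i = \sum_{j=1}^n a_{ij} y_j \leq \sum_{j=1}^n a_{ij} z_j = (A\z)_i$, and multiplying by $\g$ and adding $r_i$ preserves the inequality. Hence $T\y \leq T\z$, which is monotonicity.

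Next comes the contraction estimate, and here the row-stochasticity of $A$ is what pins the constant to exactly $\g$. Since the linear part of $T$ is $\g A$, for any $\y,\z$ we have $T\y - T\z = \g A(\y - \z)$. Taking the $\ell_\infty$-norm and using the definition of the induced norm $\nmm{A}_{\infty \ap \infty}$ gives
\bd
\nmi{T\y - T\z} = \g\, \nmi{A(\y - \z)} \leq \g\, \nmm{A}_{\infty \ap \infty}\, \nmi{\y - \z}.
\ed
The decisive input is the fact, recorded earlier in this section, that $\nmm{A}_{\infty \ap \infty} = 1$ because the rows of $A$ sum to one; this yields $\nmi{T\y - T\z} \leq \g\, \nmi{\y - \z}$ with $\g < 1$, so $T$ is a contraction with constant $\g$.

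Finally I would assemble the convergence claim. The space $(\R^n, \nmi{\cdot})$ is complete, so by the Contraction Mapping Theorem the map $T$ has a unique fixed point and the iterates $\y^{i+1} = T\y^i$ converge to it from any starting vector $\y^0$; iterating the contraction inequality in fact gives the geometric rate $\nmi{\y^i - \v} \leq \g^i \nmi{\y^0 - \v}$. It remains only to identify the fixed point. By Theorem \ref{thm:S11} the value vector satisfies $\v = \rbold + \g A \v = T\v$, so $\v$ is a fixed point of $T$; uniqueness then forces the limit of the iterates to equal $\v$. The only point requiring any care is the contraction constant itself, which could a priori exceed one were $A$ an arbitrary nonnegative matrix --- it is precisely the row-stochasticity, equivalently $\nmm{A}_{\infty \ap \infty} = 1$, that keeps the constant equal to $\g$ and hence strictly below one.
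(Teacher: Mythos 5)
Your proposal is correct and follows essentially the same route as the paper's proof: monotonicity from the nonnegativity of the entries of $A$, and the contraction estimate from $T\y_1 - T\y_2 = \g A(\y_1 - \y_2)$ together with $\nmm{A}_{\infty \ap \infty} = 1$ for a row-stochastic matrix. Your final paragraph (completeness of $(\R^n, \nmi{\cdot})$, the Contraction Mapping Theorem, and identification of the fixed point with $\v$ via Theorem \ref{thm:S11}) merely makes explicit what the paper leaves implicit, so there is no substantive difference.
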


\begin{proof}
The first statement is that if $\y_1 \leq \y_2$ componentwise (and note
that the vectors $\y_1 , \y_2$ need not consist of only positive components),
then $T\y_1 \leq T\y_2$.
This is obvious from the fact that the matrix $A$ has only nonnegative
components, so that $A\y_1 \leq A\y_2$.
For the second statement, note that, because the matrix $A$ is row-stochastic,
the induced matrix norm $\nmm{A}_{\infty \ap \infty}$ is equal to one.
Therefore
\bd
\nmm{T\y_1 - T\y_2}_\infty = \nmm{ \g A(\y_1 - \y_2)}_\infty \leq
\g \nmm{\y_1 - \y_2}_\infty.
\ed
This completes the proof.
\end{proof}

There is however a limitation to this approach, namely, that it requires
that the state transition matrix $A$ has to be known.
In Reinforcement Learning, this assumption is often  not satisfied.
Instead, one has access to a single sample path $\{ X_t \}$ of a Markov
process over $\X$, whose state transition matrix is $A$.
The question therefore arises: How can one compute the value vector $\v$
in such a scenario?
The answer is provided by the so-called Temporal Difference algorithm,
which is discussed in Section \ref{ssec:33}.

\section{Markov Decision Processes}\label{sec:MDP}

\subsection{Problem Formulation}\label{ssec:31}

In a Markov reward process, the state $X_t$ evolves on its own, according to
a predetermined state transition matrix.
In contrast, in a MDP, there is also another variable called the ``action''
which affects the dynamics.
Specifically, in addition to the state space $\X$, there is also a finite
set of actions $\U$.
Each action $u_k \in \U$ leads to a corresponding state transition matrix
$A^{u_k} = [ a_{ij}^{u_k} ]$.
So at time $t$, if the state is $X_t$, and an action $U_t \in \U$ is applied,
then
\be\label{eq:S16}
\Pr \{ X_{t+1} = x_j | X_t = x_i , U_t = u_k \} = a_{ij}^{u_k} .
\ee
Obviously, for each fixed $u_k \in \U$, the corresponding state transition
matrix $A^{u_k}$ is row-stochastic.
In addition, there is also a ``reward'' function $R : \X \times \U \ap \R$.
Note that in a Markov reward process, the reward depends only on the
current state, whereas in a Markov decision process, the reward depends
on both the current state as well as the action taken.
As in Markov reward processes,
it is possible to permit $R$ to be a random function of $X_t$ and $U_t$
as opposed to a deterministic function.
Moreover, to be consistent with the earlier convention,
it is assumed that the reward $R(X_t,U_t)$ is paid at time $t$.

The most important aspect of an MDP is the concept of a ``policy,''
which is just a systematic way of choosing $U_t$ given $X_t$.
If $\pi: \X \ap \U$ is any map, this would be called a
\textit{deterministic} policy, and the set of all deterministic
policies is denoted by $\Pi_d$.
Alternatively, let $\Sm(\U)$ denote the set of probability distributions on the
finite set $\U$.
Then a map $\pi : \X \ap \Sm(\U)$ would be called a probabilistic policy,
and the set of probabilistic policies is denoted by $\Pi_p$.
Note that the cardinality of $\Pi_d$ equals $|\U||^{|\X|}$,
while the set $\Pi_p$ is uncountable.

A vital point about MDPs is this:
Whenever any policy $\pi$, whether deterministic or probabilistic, 
is implemented, the resulting process $\{ X_t \}$ is a Markov process
with an associated state transition matrix, which is denoted by $A_\pi$.
This matrix can be determined as follows:
If $\pi \in \Pi_d$, then at time $t$, if $X_t = x_i$, then the
corresponding action $U_t$ equals $\pi(x_i)$.
Therefore
\be\label{eq:S17}
\Pr \{ X_{t+1} = x_j | X_t = x_i , \pi \} = a_{ij}^{\pi(x_i)} .
\ee
If $\pi \in \Pi_p$ and
\be\label{eq:S18}
\pi(x_i) = [ \ba{lll} \phi_{i1} & \cdots & \phi_{im} \ea ] ,
\ee
where $m = |\U|$, then
\be\label{eq:S19}
\Pr \{ X_{t+1} = x_j | X_t = x_i , \pi \} =
\sum_{k=1}^m \phi_{ik} a_{ij}^{u_k} .
\ee
In a similar manner, for every policy $\pi$,
the reward function $R: \X \times \U \ap \R$ can
be converted into a reward map $R_\pi : \X \ap \R$, as follows:
If $\pi \in \Pi_d$, then
\be\label{eq:S110}
R_\pi(x_i) = R(x_i , \pi(x_i)) ,
\ee
whereas if $\pi \in \Pi_p$, then
\be\label{eq:S111}
R_\pi(x_i) = \sum_{k=1}^m \phi_{ik} R(x_i,u_k) .
\ee
Thus, given any policy $\pi$, whether deterministic or probabilistic,
we can associate with it a reward vector $\rbold_\pi$.
To summarize, given any MDP, once a policy $\pi$ is chosen,
the resulting process $\{ X_t \}$ is a Markov reward process with
state transition matrix $A_\pi$ and reward vector $\rbold_\pi$.

\begin{example}
To illustrate these ideas, suppose $n = 4 , m = 2$, so that there
four states and two actions.
Thus there are two $4 \times 4$
state transition matrices $A^1, A^2$ corresponding to the two actions.
(In the interests of clarity, we write $A^1$ and $A^2$ instead of
$A^{u_1}$ and $A^{u_2}$.)
Suppose $\pi_1$ is a deterministic policy, represented as a $n \times m$
matrix (in this  case a $4 \times 2$ matrix), as follows:
\bd
M_1 = \left[ \ba{cc} 
0 & 1 \\ 0 & 1 \\ 1 & 0 \\ 0 & 1 \ea \right] .
\ed
This means that if $X_t = x_1, x_2$ or $x_4$, then $U_t = u_2$,
while if $X_t = x_3$, then $U_t = u_1$.
Let us use the notation $(A^k)^i$ to denote the $i$-th row of the 
matrix $A^k$, where $k = 1, 2$ and $i = 1 , 2, 3, 4$.
Then the state transition matrix $A_{\pi_1}$ is given by
\bd
A_{\pi_1} = \left[ \ba{c} (A^2)^1 \\ (A^2)^2 \\ (A^1)^3 \\ (A^2)^4 
\ea \right] .
\ed
Thus the first, third, and fourth rows of $A_{\pi_1}$ come from $A^2$,
while the second row comes from $A^1$.

Next, suppose $\pi_2$ is a probabilistic policy, represented by
the matrix
\bd
M_2 = \left[ \ba{cc} 0.3 & 0.7 \\ 0.2 & 0.8 \\ 0.9 & 0.1 \\ 0.4 & 0.6 
\ea \right] .
\ed
Thus, if $X_t = x_1$, then the action $U_t = u_1$ with probability $0.3$
and equals $u_2$ with probability $0.7$, and so on.
For this policy, the resulting state transition matrix is determined
as follows:
\bd
A_{\pi_2} = \left[ \ba{c} 0.3 (A^1)^1 + 0.7 (A^2)^1 \\
0.2 (A^1)^2 + 0.8 (A^2)^2 \\
0.9 (A^1)^3 + 0.1 (A^2)^3 \\
0.4 (A^1)^4 + 0.6 (A^2)^4 
\ea \right] .
\ed
\halmos
\end{example}

For a MDP, one can pose three questions:
\ben
\item \textbf{Policy evaluation:}
We have seen already that, given a Markov reward process, with
a reward vector $\rbold$ and a discount factor $\g$, 
there corresponds a unique value vector $\v$.
We have also seen that, for any choice of a policy $\pi$, whether
deterministic or probabilistic,
there corresponds a state transition matrix $A_\pi$ and
a reward vector $\rbold_\pi$.
Therefore, once a policy $\pi$ is chosen, the Markov \textit{decision}
process becomes a Markov
\textit{reward} process with state transition matrix $A_\pi$ and reward vector
$\rbold_\pi$.
We can define $\v_\pi(x_i)$ to be
the value vector associated with this Markov reward process.
The question is: How can $\v_\pi(x_i)$ be computed?
\item \textbf{Optimal Value Determination:}
For each policy $\pi$, there is an associated value vector $\v_\pi$.
Let us view $\v_\pi$ as a map from $\X$ to $\R$, so that $V_\pi(x_i)$
is the $i$-th component of $\v_\pi$.
Now suppose $x_i \in \X$ is a specified initial state, and define
\be\label{eq:S112}
V^*(x_i) := \max_{\pi \in \Pi_p} V_\pi(x_i) ,
\ee
to be the \textbf{optimal value} over all policies, when the MDP
is started in the initial state $X_0 = x_i$.
How can $V^*(x_i)$ be computed?
Note that in \eqref{eq:S112}, the optimum is taken over all
\textit{probabilistic} policies.
However, it can be shown that the optimum is the same even if
$\pi$ is restricted to only \textit{deterministic} policies.
\item \textbf{Optimal Policy Determination:}
In \eqref{eq:S112} above, we associate an optimal policy with
each state $x_i$.
Now we can extend the idea and
define the \textbf{optimal policy} map $\X \ap \Pi_d$ via
\be\label{eq:S113}
\pi^*(x_i) := \argmax_{\pi \in \Pi_d} V_\pi(x_i) .
\ee
How can the optimal policy map $\pi^*$ be determined?
Note that it is not \textit{a priori} evident that there exists
\textit{one} policy that is optimal for \textit{all} initial states.
But the existence of such an optimal policy can be shown.
Also, we can restrict to $\pi \in \Pi_d$ in \eqref{eq:S113}
because it can be shown that
the maximum over $\pi \in \Pi_p$ is not any larger.
In other words,
\bd
\max_{\pi \in \Pi_d} V_\pi(x_i) = \max_{\pi \in \Pi_p} V_\pi(x_i) .
\ed
\een

\subsection{Markov Decision Processes: Solution}\label{ssec:32}

In this subsection we present answers to the three questions above.

\subsubsection{Policy Evaluation:}

Suppose a policy $\pi$ in $\Pi_d$ or $\Pi_p$ is specified.
Then the corresponding state transition matrix $A_\pi$ and reward 
vector $\rbold_\pi$ are given by
\eqref{eq:S17} (or \eqref{eq:S19}) and \eqref{eq:S110} respectively.
As pointed out above, once the policy is chosen, the process
becomes just a Markov reward process.
Then it readily follows from Theorem \ref{thm:S11} that
$\v_\pi$ satisfies an equation analogous to \eqref{eq:S15}, namely
\be\label{eq:S116}
\v_\pi = \rbold_\pi + \g A_\pi \v_\pi .
\ee
As before, it is inadvisable to compute $\v_\pi$ via
$\v_\pi = (I -\g A_\pi)^{-1} \rbold_\pi$.
Instead, one should use value iteration to solve \eqref{eq:S116}.
Observe that, whatever the policy $\pi$ might be, the resulting
state transition matrix $A_\pi$ satisfies $\nmm{A}_{\infty \ap \infty} = 1$.
Therefore the map $\y \mapsto \rbold_\pi + \g A_\pi \y$ is a contraction
with respect to $\nmi{\cdot}$, with contraction constant $\g$.

\subsubsection{Optimal Value Determination:}


Now we introduce one of the key ideas in Markov Decision Processes.
Define the \textbf{Bellman iteration map} $B: \R^n \ap \R^n$ via
\be\label{eq:S116f}
(B\v)_i := \max_{u_k \in \U} \left[ R(x_i,u_k)
+ \g \sum_{j=1}^n a_{ij}^{u_k} v_j \right] .
\ee

\begin{theorem}\label{thm:S17}
The map $B$ is monotone and a contraction with respect to the
$\ell_\infty$-norm.
Therefore the fixed point $\vbb$ of the map $B$ satisfies the relation
\be\label{eq:S116g}
(\vbb)_i := \max_{u_k \in \U} \left[ R(x_i,u_k)
+ \g \sum_{j=1}^n a_{ij}^{u_k} (\vbb)_j \right] .
\ee
\end{theorem}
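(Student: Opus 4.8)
The plan is to mirror the two-part structure of Theorem \ref{thm:S11a}: establish that $B$ is monotone and that it is an $\ell_\infty$-contraction with constant $\g$, and then invoke the Contraction Mapping Theorem on the complete space $(\R^n, \nmi{\cdot})$ to extract the fixed point $\vbb$ and equation \eqref{eq:S116g}. The new feature relative to Theorem \ref{thm:S11a} is the state-dependent maximization $\max_{u_k \in \U}$, so the work will be in showing that this nonlinearity does not destroy either property.

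First I would verify monotonicity. Suppose $\y_1 \leq \y_2$ componentwise. Since each transition matrix $A^{u_k}$ has nonnegative entries, for every fixed state $x_i$ and every fixed action $u_k$ the bracketed quantity $R(x_i,u_k) + \g \sum_j a_{ij}^{u_k} (\y_1)_j$ is no larger than the same expression evaluated at $\y_2$. Taking the maximum over $u_k \in \U$ preserves this inequality, so $(B\y_1)_i \leq (B\y_2)_i$ for every $i$, i.e. $B\y_1 \leq B\y_2$.

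For the contraction estimate, the key observation is the elementary inequality $|\max_k f_k - \max_k g_k| \leq \max_k |f_k - g_k|$, applied with the index ranging over the actions. Fixing $x_i$ and letting $f_{u_k}, g_{u_k}$ denote the two bracketed expressions evaluated at $\y_1$ and $\y_2$, their difference is $\g \sum_j a_{ij}^{u_k}((\y_1)_j - (\y_2)_j)$, the reward terms cancelling. Bounding this by $\g \sum_j a_{ij}^{u_k} \nmi{\y_1 - \y_2}$ and using that each row of $A^{u_k}$ sums to one gives $|(B\y_1)_i - (B\y_2)_i| \leq \g \nmi{\y_1 - \y_2}$ uniformly in $i$; taking the max over $i$ yields $\nmi{B\y_1 - B\y_2} \leq \g \nmi{\y_1 - \y_2}$, so $B$ is a contraction with constant $\g < 1$. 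An equivalent route would first establish the shift identity $B(\y + c\oneb) = B\y + \g c \oneb$, which holds because each $A^{u_k}$ is row-stochastic, and then combine it with monotonicity applied to $\y_1 \leq \y_2 + \nmi{\y_1 - \y_2}\,\oneb$.

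Finally, since $\R^n$ is complete under $\nmi{\cdot}$, the Contraction Mapping Theorem guarantees a unique fixed point $\vbb$ satisfying $\vbb = B\vbb$, which written out componentwise is precisely \eqref{eq:S116g}. I expect the only genuinely delicate point to be the contraction step: because the maximization over $u_k$ sits inside each coordinate, one cannot reduce to a single linear map as in Theorem \ref{thm:S11a}, and the whole argument hinges on the max-difference inequality (equivalently, the row-stochastic shift identity) to tame the nonlinearity introduced by $\max_{u_k}$.
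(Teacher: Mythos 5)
Your proposal is correct and complete. Note that the paper states Theorem \ref{thm:S17} \emph{without} proof, so there is no in-paper argument to compare against; your proof is the standard one and is precisely the extension of the paper's own proof of Theorem \ref{thm:S11a} that the nonlinearity demands: monotonicity from the nonnegativity of each $A^{u_k}$, the contraction estimate via $|\max_k f_k - \max_k g_k| \leq \max_k |f_k - g_k|$ combined with row-stochasticity, and the Contraction Mapping Theorem on $(\R^n, \nmi{\cdot})$ to produce the fixed point $\vbb$ satisfying \eqref{eq:S116g}. You also correctly identify the one genuinely new step relative to Theorem \ref{thm:S11a} — taming the maximization over actions — which is exactly what the paper leaves implicit.
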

Note that \eqref{eq:S116g} is known as the \textbf{Bellman Optimality equation}.
Thus, in principle at least, we can choose an arbitrary initial guess
$\v_0 \in \R^d$, and repeatedly apply the Bellman iteration.
The resulting iterations would converge to the unique fixed point
of the operator $B$, which we denote by $\vbb$.

The significance of the Bellman iteration is given by the next theorem.

\begin{theorem}\label{thm:S17a}
Define $\vbb \in \R^n$ to be the unique fixed point of $B$, and define
$\v^* \in \R^n$ to equal $[V^*(x_i), x_i \in \X]$, where
$V^*(x_i)$ is defined in \eqref{eq:S112}.
Then $\vbb = \v^*$.
\end{theorem}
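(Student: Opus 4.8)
The plan is to prove the two inequalities $\v^* \leq \vbb$ and $\vbb \leq \v^*$ separately, exploiting the fact that for each fixed policy $\pi$ the policy-evaluation map $T_\pi \y := \rbold_\pi + \g A_\pi \y$ is a monotone $\g$-contraction (this follows from Theorems \ref{thm:S11} and \ref{thm:S11a} applied to the Markov reward process induced by $\pi$) whose unique fixed point is $\v_\pi$, together with the fact that $B$ is itself a monotone contraction with fixed point $\vbb$ by Theorem \ref{thm:S17}.

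First I would record the key domination: for every policy $\pi \in \Pi_p$, with row $\pi(x_i) = [\phi_{i1} \cdots \phi_{im}]$, and every $\y \in \R^n$,
\[
(T_\pi \y)_i = \sum_{k=1}^m \phi_{ik}\Big[ R(x_i,u_k) + \g \sum_{j=1}^n a_{ij}^{u_k} y_j \Big] \leq \max_{u_k \in \U}\Big[ R(x_i,u_k) + \g \sum_{j=1}^n a_{ij}^{u_k} y_j \Big] = (B\y)_i ,
\]
since a convex combination never exceeds the maximum (the deterministic case is the special case of a point mass). Applying this at $\y = \vbb$ and using $B\vbb = \vbb$ gives $T_\pi \vbb \leq \vbb$. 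Because $T_\pi$ is monotone (as $A_\pi$ has nonnegative entries), iterating produces the decreasing chain $T_\pi^{\,l+1}\vbb \leq T_\pi^{\,l}\vbb \leq \cdots \leq \vbb$; letting $l \to \infty$ and invoking the contraction property to identify $T_\pi^{\,l}\vbb \to \v_\pi$ yields $\v_\pi \leq \vbb$. Taking the componentwise maximum over $\pi \in \Pi_p$ gives $\v^* \leq \vbb$.

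For the reverse inequality I would exhibit a policy attaining $\vbb$. Define the greedy deterministic policy
\[
\pi^*(x_i) := \argmax_{u_k \in \U}\Big[ R(x_i,u_k) + \g \sum_{j=1}^n a_{ij}^{u_k} (\vbb)_j \Big] .
\]
By construction $(T_{\pi^*}\vbb)_i$ equals the bracketed maximum, which is exactly $(B\vbb)_i = (\vbb)_i$; hence $T_{\pi^*}\vbb = \vbb$, so $\vbb$ is a fixed point of $T_{\pi^*}$. Since that map has a \emph{unique} fixed point, namely $\v_{\pi^*}$, we conclude $\vbb = \v_{\pi^*}$, and therefore $(\vbb)_i = V_{\pi^*}(x_i) \leq \max_{\pi \in \Pi_p} V_\pi(x_i) = V^*(x_i)$, i.e. $\vbb \leq \v^*$. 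Combining the two inequalities gives $\vbb = \v^*$, and as a byproduct the deterministic policy $\pi^*$ is simultaneously optimal for all initial states, which also justifies the remark following \eqref{eq:S112} that restricting to $\Pi_d$ does not shrink the optimum.

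The contraction and monotonicity facts are routine and already in hand. The one step deserving care is the passage to the limit in the monotone chain $T_\pi^{\,l}\vbb \downarrow \v_\pi$: one must use the contraction property to pin the limit down as $\v_\pi$ rather than merely as some vector dominated by $\vbb$, and must check that the convex-combination bound covers all probabilistic policies uniformly so that the maximum in \eqref{eq:S112} is legitimately controlled. This is the main, though mild, obstacle.
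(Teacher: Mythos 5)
Your proof is correct, but there is nothing in the paper to compare it against: Theorem \ref{thm:S17a} is stated there without any proof (as are Theorems \ref{thm:S17} and \ref{thm:S18}), the paper being a survey that defers these standard facts of MDP theory to references such as \cite{Puterman05} and \cite{Ber-Tsi96}. What you have written is the classical two-sided argument. The first half --- the domination $(T_\pi \y)_i \leq (B\y)_i$ for every $\pi \in \Pi_p$ (convex combination versus maximum), applied at $\y = \vbb$ and propagated through the monotone contraction $T_\pi$ to get $\v_\pi \leq \vbb$ --- is sound; note that you do not even need the decreasing chain, only that $T_\pi^{\,l}\vbb \leq \vbb$ for all $l$ by induction, after which the $\ell_\infty$ convergence $T_\pi^{\,l}\vbb \to \v_\pi$ lets you pass to the limit componentwise. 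The second half, exhibiting the greedy deterministic policy $\pi^*$ so that $T_{\pi^*}\vbb = B\vbb = \vbb$ and invoking uniqueness of the fixed point of $T_{\pi^*}$ to conclude $\vbb = \v_{\pi^*} \leq \v^*$, is exactly the right device, and it buys you more than the theorem asks for: it simultaneously proves the paper's unproved side assertions that a single deterministic policy is optimal for \emph{all} initial states and that the maximum in \eqref{eq:S112} over $\Pi_p$ is not larger than the maximum over $\Pi_d$. The only caveat worth recording is that your argument settles optimality over the class of \emph{stationary} (Markov, time-invariant) policies, which is how the paper defines $\Pi_p$; extending the conclusion to history-dependent or non-stationary policies would require an additional (also standard) argument that the paper's formalism does not ask for.
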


Therefore, the optimal value vector can be computed using the
Bellman iteration.
However, knowing the optimal \textit{value} vector does not, by itself,
give us an optimal \textit{policy}.

\subsubsection{Optimal Policy Determination}

To solve the problem of optimal policy determination,
we introduce another function $Q_\pi: \X \times \U \ap \R$,
known as the \textbf{action-value function}, which is defined as follows:
\be\label{eq:S16a}
Q_\pi(x_i,u_k) := R(x_i,u_k) + E_\pi \left[ \sum_{t=1}^\infty \g^t R_\pi(X_t) | 
X_0 = x_i , U_0 = u_k \right] .
\ee
This function was first defined in \cite{Watkins-Dayan92}.
Note that $Q_\pi$ is defined only for deterministic policies.
In principle it is possible to define it for probabilistic policies,
but this is not commonly done.
In the above definition, the expectation $E_\pi$ is with respect to
the evolution of the state $X_t$ under the policy $\pi$.

The way in which a MDP is set up is that at time $t$, the Markov process
reaches a state $X_t$, based on the previous state $X_{t-1}$ and the
state transition matrix $A_\pi$ corresponding to the policy $\pi$.
Once $X_t$ is known, the policy $\pi$ determines the action $U_t = \pi(X_t)$,
and then the reward $R_\pi(X_t) = R(X_t,\pi(X_t))$ is generated.
In particular, when defining the value function $V_\pi(x_i)$ corresponding
to a policy $\pi$, we start off the MDP in the initial state $X_0 = x_i$,
\textit{and choose the action $U_0 = \pi(x_i)$}.
However, in defining the action-value function $Q_\pi$, we do not feel compelled
to set $U_0 = \pi(X_0) = \pi(x_i)$, and can
choose an \textit{arbitrary action} $u_k \in \U$.
From $t=1$ onwards however, the action $U_t$ is chosen as $U_t = \pi(X_t)$.
This seemingly small change leads to some simplifications.

Just as we can interpret $V_\pi:\X \ap \R$ as an $n$-dimensional vector,
we can interpret $Q_\pi: \X \times \U \ap \R$ as an $nm$-dimensional
vector, or as a matrix of dimension $n \times m$.
Consequently the $Q_\pi$-vector has higher dimension than the value vector.

\begin{theorem}\label{thm:S12}
For each policy $\pi \in \Pi_d$,
the function $Q_\pi$ satisfies the recursive relationship
\be\label{eq:S16b}
Q_\pi(x_i,u_k) = R(x_i,u_k) +
\g \sum_{j=1}^n a_{ij}^{u_k} Q_\pi(x_j,\pi(x_j)) .
\ee
\end{theorem}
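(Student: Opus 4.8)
The plan is to mimic the derivation of Theorem~\ref{thm:S11} (the value-vector recursion), but now tracking the extra conditioning on the initial action $U_0 = u_k$. Starting from the definition \eqref{eq:S16a}, I would first peel off the $t=1$ term and all later terms, writing
\bd
Q_\pi(x_i,u_k) = R(x_i,u_k) + E_\pi \left[ \sum_{t=1}^\infty \g^t R_\pi(X_t) \mid X_0 = x_i, U_0 = u_k \right] .
\ed
The reward $R(x_i,u_k)$ is deterministic given $x_i$ and $u_k$, so it comes straight out. The whole content of the proof lies in evaluating the remaining expectation.

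The key step is to condition on the first transition. Since $X_0 = x_i$ and the action applied at time $0$ is the \emph{prescribed} $u_k$ (not $\pi(x_i)$), the state $X_1$ equals $x_j$ with probability $a_{ij}^{u_k}$ — this is exactly the transition matrix $A^{u_k}$ of \eqref{eq:S16}, \emph{not} $A_\pi$. So I would write
\bd
E_\pi \left[ \sum_{t=1}^\infty \g^t R_\pi(X_t) \mid X_0 = x_i, U_0 = u_k \right]
= \sum_{j=1}^n a_{ij}^{u_k} \, E_\pi \left[ \sum_{t=1}^\infty \g^t R_\pi(X_t) \mid X_1 = x_j \right] .
\ed
Note that inside the conditional expectation on the right there is no longer any free initial action: from $t=1$ onward the action at each state $X_t$ is dictated by the policy, $U_t = \pi(X_t)$, so the future evolves as the Markov \emph{reward} process with transition matrix $A_\pi$ and reward $R_\pi$. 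I would then shift the time index, pulling out one factor of $\g$ and using stationarity of the process under $A_\pi$ to rewrite $E_\pi[\sum_{t=1}^\infty \g^t R_\pi(X_t) \mid X_1 = x_j] = \g \, E_\pi[\sum_{t=0}^\infty \g^t R_\pi(X_t) \mid X_0 = x_j]$. The bracketed quantity is by definition $V_\pi(x_j)$.

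The final move is to recognize $V_\pi(x_j)$ as $Q_\pi(x_j,\pi(x_j))$. This is the one identity that needs a sentence of justification rather than a mechanical substitution: comparing the definition of $V_\pi$ (which starts the MDP at $x_j$ and \emph{uses} the policy action $\pi(x_j)$ at time $0$) with the definition \eqref{eq:S16a} of $Q_\pi(x_j,u_k)$ specialized to $u_k = \pi(x_j)$, the two expressions coincide termwise, so $V_\pi(x_j) = Q_\pi(x_j,\pi(x_j))$. Substituting this in yields
\bd
Q_\pi(x_i,u_k) = R(x_i,u_k) + \g \sum_{j=1}^n a_{ij}^{u_k} Q_\pi(x_j,\pi(x_j)) ,
\ed
which is \eqref{eq:S16b}. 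The main obstacle, such as it is, is bookkeeping: keeping straight that the \emph{first} transition uses $A^{u_k}$ while everything afterward uses $A_\pi$, and justifying the $V_\pi = Q_\pi(\cdot,\pi(\cdot))$ identification cleanly from the definitions. Everything else is the same stationarity-and-reindexing argument already used in Theorem~\ref{thm:S11}.
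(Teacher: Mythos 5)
Your proof is correct, and its skeleton is the same as the paper's: peel off $R(x_i,u_k)$, condition on the first transition (correctly noting that it is governed by $A^{u_k}$ rather than $A_\pi$), and then use stationarity to shift the time index. The one place you diverge is the closing step. The paper never introduces $V_\pi$: after reindexing, it recognizes the inner expression $R(x_j,\pi(x_j)) + \sum_{t=1}^\infty \g^t R_\pi(X_t)$, conditioned on starting at $x_j$ with action $\pi(x_j)$, as \emph{literally} the definition \eqref{eq:S16a} of $Q_\pi(x_j,\pi(x_j))$, and stops there. You instead identify the tail expectation as $\g V_\pi(x_j)$ and then convert $V_\pi(x_j)$ into $Q_\pi(x_j,\pi(x_j))$. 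Be aware that this last identity is exactly the paper's Theorem \ref{thm:S13}, which is stated \emph{after} Theorem \ref{thm:S12} and proved \emph{from} it, via uniqueness of the solution of the policy-evaluation equation \eqref{eq:S116}; so citing that theorem at this point would be circular. You avoid the circularity by justifying the identity directly from the definitions --- once the initial action is $\pi(x_j)$, conditioning on $U_0 = \pi(x_j)$ is redundant and the two series coincide termwise --- which is valid, and as a bonus it establishes Theorem \ref{thm:S13} by a more elementary argument than the paper's uniqueness-based one. The only cost is that your route needs that extra sentence of justification, which you do supply.
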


\begin{proof}
Observe that at time $t=0$, the state transition matrix is $A^{u_k}$.
So, given that $X_0 = x_i$ and $U_0 = u_k$, the next state $X_1$ has
the distribution 
\bd
X_1 \sim [ a_{ij}^{u_k} , j = 1 , \cdots , n ] .
\ed
Moreover, $U_1 = \pi(X_1)$ because the policy $\pi$ is implemented
from time $t=1$ onwards.
Therefore
\begin{eqnarray*}
Q_\pi(x_i,u_k) & = & R(x_i,u_k) \\
& + & E_\pi \left[ \sum_{j=1}^n a_{ij}^{u_k}
\left( \g R(x_j,\pi(x_j)) + \sum_{t=2}^\infty \g^t R_\pi(X_t) | X_1 = x_j ,
U_1 = \pi(x_j) \right) \right] \\
& = & R(x_i,u_k) \\
& + & E_\pi \left[ \g \sum_{j=1}^n a_{ij}^{u_k}
\left( R(x_j,\pi(x_j)) + \sum_{t=1}^\infty \g^t R_\pi(X_t) | X_1 = x_j ,
U_1 = \pi(x_j) \right) \right] \\
& = & R(x_i,u_k) + \g \sum_{j=1}^n a_{ij}^{u_k} Q(x_j,\pi(x_j)) .
\end{eqnarray*}
This is the desired conclusion.
\end{proof}

\begin{theorem}\label{thm:S13}
The functions $V_\pi$ and $Q_\pi$ are related via
\be\label{eq:S16c}
V_\pi(x_i) = Q_\pi(x_i,\pi(x_i)) .
\ee
\end{theorem}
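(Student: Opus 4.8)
The plan is to prove the identity by direct substitution into the defining formula \eqref{eq:S16a} for $Q_\pi$, exploiting the fact that the only difference between the definitions of $V_\pi$ and $Q_\pi$ is that $Q_\pi$ permits an arbitrary initial action $U_0$, whereas $V_\pi$ forces $U_0 = \pi(x_i)$. Once the free action is specialized to the policy-prescribed action $u_k = \pi(x_i)$, that distinction disappears and the two conditional expectations must coincide.

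First I would set $u_k = \pi(x_i)$ in \eqref{eq:S16a}, obtaining
\bd
Q_\pi(x_i,\pi(x_i)) = R(x_i,\pi(x_i)) + E_\pi \left[ \sum_{t=1}^\infty \g^t R_\pi(X_t) | X_0 = x_i , U_0 = \pi(x_i) \right] .
\ed
Next I would rewrite the leading term using \eqref{eq:S110}, which gives $R(x_i,\pi(x_i)) = R_\pi(x_i)$. Since $X_0 = x_i$ on the conditioning event, this equals $\g^0 R_\pi(X_0)$, i.e.\ it is exactly the $t=0$ summand of the discounted reward series. Folding it into the sum yields
\bd
Q_\pi(x_i,\pi(x_i)) = E_\pi \left[ \sum_{t=0}^\infty \g^t R_\pi(X_t) | X_0 = x_i , U_0 = \pi(x_i) \right] .
\ed

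The final step is to observe that the conditioning on $U_0 = \pi(x_i)$ is now redundant: under the policy $\pi$ the action at time $0$ is always $\pi(X_0)$, so the constraint $U_0 = \pi(x_i)$ is automatically satisfied when $X_0 = x_i$ and imposes nothing extra. The right-hand side is therefore precisely the expression defining the value $V_\pi(x_i)$ via \eqref{eq:S12}, applied to the induced Markov reward process with reward vector $\rbold_\pi$ and transition matrix $A_\pi$. Hence \eqref{eq:S16c} follows.

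The argument is essentially bookkeeping; the single point that deserves care — and which I regard as the only genuine content — is the justification that setting $U_0 = \pi(x_i)$ makes the $Q_\pi$-process statistically identical to the $V_\pi$-process from time $0$ onward, so that the two conditional expectations agree term by term. This rests on the convention stated just before Theorem \ref{thm:S12}, that from $t = 1$ onward $Q_\pi$ already uses $U_t = \pi(X_t)$; once the time-$0$ action is also aligned with the policy, there is no residual difference between the two processes to track.
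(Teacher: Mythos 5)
Your proof is correct, but it follows a genuinely different route from the paper's. The paper does not go back to the definition \eqref{eq:S16a} at all: it takes the recursion of Theorem \ref{thm:S12}, specializes it to $u_k = \pi(x_i)$, observes that the function $x_i \mapsto Q_\pi(x_i,\pi(x_i))$ then satisfies exactly the policy-evaluation fixed-point equation $\v_\pi = \rbold_\pi + \g A_\pi \v_\pi$ (equation \eqref{eq:S116}, written componentwise), and concludes by uniqueness of the solution of that equation --- uniqueness being available because $I - \g A_\pi$ is nonsingular, i.e.\ the map is a contraction. You instead unroll the definition: substitute $u_k = \pi(x_i)$ into \eqref{eq:S16a}, fold the initial reward $R_\pi(x_i)$ into the series as the $t=0$ term, and then argue that once the initial action is aligned with the policy, the conditioned process is statistically identical to the Markov reward process $(A_\pi, \rbold_\pi)$, so the expectation is $V_\pi(x_i)$ by \eqref{eq:S12}. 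Each approach has its merits: the paper's argument is essentially algebraic and very short, but it leans on two prior results (the recursion of Theorem \ref{thm:S12} and the uniqueness of the fixed point); yours is self-contained, needing only the definitions, and it makes explicit the probabilistic content --- that conditioning on $U_0 = \pi(x_i)$ is vacuous under the policy, which is precisely the point you correctly flag as the only step requiring care. The trade-off is that your route carries the (mild) burden of justifying the identification of the two conditional expectations term by term, which the fixed-point argument sidesteps entirely.
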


\begin{proof}
If we choose $u_k = \pi(x_i)$ then \eqref{eq:S16b} becomes
\bd
Q_\pi(x_i,\pi(x_i)) = R_\pi(x_i)
+ \g \sum_{j=1}^n a_{ij}^{\pi(x_j)} Q(x_j,\pi(x_j)).
\ed
This is the same as \eqref{eq:S16} written out componentwise.
We know that \eqref{eq:S16} has a unique solution, namely $V_\pi$.
This shows that \eqref{eq:S16c} holds.
\end{proof}

The import of Theorem \ref{thm:S13} is the following:
In defining the function $Q_\pi(x_i,u_k)$ for a fixed policy $\pi \in \Pi_d$,
we have the freedom to choose the initial action $u_k$ as any
element we wish in the action space $\U$.
However, if we choose the initial action $u_k = \pi(x_i)$ for each
state $x_i \in \X$, then the corresponding action-value function 
$Q_\pi(x_i,u_k)$  equals the value function $V_\pi(x_i)$, for each
state $x_i \in \X$.

In view of \eqref{eq:S16c}, the recursive equation for $Q_\pi$ can be
rewritten as
\be\label{eq:S116d}
Q_\pi(x_i,u_k) = R(x_i,u_k) + \g \sum_{j=1}^n a_{ij}^{u_k} V_\pi(x_j) .
\ee
This motivates the next theorem.

\begin{theorem}\label{thm:S15}
Define $Q^*: \X \times \U \ap \R$ by
\be\label{eq:S117a}
Q^*(x_i,u_k) = R(x_i,u_k) + \g \sum_{j=1}^n a_{ij}^{u_k} V^*(x_j) .
\ee
Then $Q^*(\cdot,\cdot)$ satisfies the following relationships:
\be\label{eq:S117b}
Q^*(x_i,u_k) = R(x_i,u_k) + \g \sum_{j=1}^n a_{ij}^{u_k}
\max_{w_l \in \U} Q^*(x_j,w_l) .
\ee
\be\label{eq:S117c}
V^*(x_i) = \max_{u_k \in \U} Q^*(x_i,u_k) , 
\ee
Moreover, every policy $\pi \in \Pi_d$ such that
\be\label{eq:S117d}
\pi^*(x_i) = \argmax_{u_k \in \U} Q^*(x_i,u_k) 
\ee
is optimal.
\end{theorem}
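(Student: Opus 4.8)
The plan is to prove the three assertions in the order \eqref{eq:S117c}, then \eqref{eq:S117b}, then the optimality claim \eqref{eq:S117d}, since each builds on the previous one and ultimately on the Bellman optimality equation \eqref{eq:S116g}.

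First I would establish \eqref{eq:S117c}. By Theorem \ref{thm:S17a} the optimal value vector $\v^*$ coincides with the fixed point $\vbb$ of the Bellman operator $B$, so $\v^*$ satisfies \eqref{eq:S116g}. Comparing the bracketed term on the right of \eqref{eq:S116g} with the defining formula \eqref{eq:S117a}, one sees that the expression $R(x_i,u_k) + \g \sum_{j=1}^n a_{ij}^{u_k} V^*(x_j)$ is exactly $Q^*(x_i,u_k)$. Hence \eqref{eq:S116g} reads $V^*(x_i) = \max_{u_k \in \U} Q^*(x_i,u_k)$, which is \eqref{eq:S117c}. Next, \eqref{eq:S117b} follows by a single substitution: replacing $V^*(x_j)$ in the definition \eqref{eq:S117a} by $\max_{w_l \in \U} Q^*(x_j,w_l)$, which is just \eqref{eq:S117c} applied at the state $x_j$, yields \eqref{eq:S117b} at once.

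The substantive step is the optimality of the greedy policy $\pi^*$ defined by \eqref{eq:S117d}, and this is where I expect the real content to lie. The key observation is that $\v^*$ itself satisfies the policy-evaluation equation \eqref{eq:S116} for this particular policy $\pi^*$. Indeed, since $\pi^*(x_i)$ maximizes $Q^*(x_i,\cdot)$, we have $Q^*(x_i,\pi^*(x_i)) = \max_{u_k \in \U} Q^*(x_i,u_k) = V^*(x_i)$ by \eqref{eq:S117c}; expanding the left-hand side through \eqref{eq:S117a} gives $V^*(x_i) = R(x_i,\pi^*(x_i)) + \g \sum_{j=1}^n a_{ij}^{\pi^*(x_i)} V^*(x_j)$, which is precisely \eqref{eq:S116} with $\v = \v^*$ under the policy $\pi^*$. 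Because the map $\y \mapsto \rbold_{\pi^*} + \g A_{\pi^*} \y$ is a contraction with respect to $\nmi{\cdot}$ (Theorem \ref{thm:S11a} applied to the Markov reward process induced by $\pi^*$), it has a \emph{unique} fixed point, namely $\v_{\pi^*}$. Since $\v^*$ satisfies the same equation, uniqueness forces $\v_{\pi^*} = \v^*$. Thus $\pi^*$ attains the optimal value simultaneously at every initial state, so it is optimal.

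The main obstacle is precisely this final uniqueness argument: one must recognize that the greedy policy's evaluation equation is identical to the fixed-point equation already satisfied by $\v^*$, and then invoke the contraction property to conclude \emph{equality} $\v_{\pi^*} = \v^*$ rather than merely an inequality between the value vectors. The first two parts are essentially rearrangements of \eqref{eq:S116g} and the definition \eqref{eq:S117a}, and present no difficulty.
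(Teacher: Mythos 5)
Your proposal is correct, and for the two identities \eqref{eq:S117c} and \eqref{eq:S117b} it coincides exactly with the paper's proof: take the maximum over $u_k$ in the definition \eqref{eq:S117a}, recognize the right-hand side of the Bellman optimality equation \eqref{eq:S116g} (valid for $\v^*$ by Theorem \ref{thm:S17a}), and then substitute \eqref{eq:S117c} back into \eqref{eq:S117a}. Where you genuinely diverge is on the optimality claim \eqref{eq:S117d}. The paper disposes of it with the single sentence ``This establishes \eqref{eq:S117c} and \eqref{eq:S117d},'' i.e., it treats the greedy policy's optimality as an immediate consequence of the identity $\max_{u_k} Q^*(x_i,u_k) = V^*(x_i)$, without showing that the value vector $\v_{\pi^*}$ of the greedy policy actually equals $\v^*$. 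Your argument supplies precisely this missing step: you observe that greediness gives $Q^*(x_i,\pi^*(x_i)) = V^*(x_i)$, expand via \eqref{eq:S117a} to conclude that $\v^*$ satisfies the policy-evaluation equation $\v = \rbold_{\pi^*} + \g A_{\pi^*} \v$, and then invoke the contraction property (Theorem \ref{thm:S11a}) to get uniqueness of the fixed point, hence $\v_{\pi^*} = \v^*$. This is the standard and correct way to close the argument, and it is a genuine improvement in rigor over what the paper records: without it, one only knows that the greedy action attains the maximum in the Bellman equation, not that following the greedy policy forever recovers the optimal value at every initial state.
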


\begin{proof}
Since $Q^*(\cdot,\cdot)$ is defined by \eqref{eq:S117a}, it follows that
\bd
\max_{u_k \in \U} Q^*(x_i,u_k) =
\max_{u_k \in \U} \left[ R(x_i,u_k) + \g \sum_{j=1}^n a_{ij}^{u_k} V^*(x_j)
\right] = V^*(x_i) ,
\ed
This establishes \eqref{eq:S117c} and \eqref{eq:S117d}.
Substituting from \eqref{eq:S117c} into \eqref{eq:S117a} gives
\eqref{eq:S117b}.
\end{proof}

Theorem \ref{thm:S15} converts the problem of determining an optimal
policy into one of solving the implicit equation \eqref{eq:S117b}.
For this purpose, we define an iteration on action-functions that is analogous
to \eqref{eq:S116f} for value functions.
As with the value function, the action-value function can either
be viewed as a map $Q: \X \times \U \ap \R$, or as a vector in $\R^{nm}$,
or as an $n \times m$ matrix.
We use whichever interpretation is convenient in the given situation.

\begin{theorem}\label{thm:S18}
Define $F: \R^{|\X| \times |\U|} \ap \R^{|\X| \times |\U|}$ by
\be\label{eq:S120a}
[F(Q)](x_i,u_k) := R(x_i,u_k) + \g \sum_{j=1}^n a_{ij}^{u_k}
\max_{w_l \in \U} Q(x_j,w_l) .
\ee
Then the map $F$ is monotone and is a contraction.
Moreover, for all $Q_0: \X \times \U \ap \R$, the sequence of iterations
$\{ F^t(Q_0) \}$ converges to $Q^*$ as $\tai$.
\end{theorem}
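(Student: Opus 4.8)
The plan is to mimic the proof of Theorem~\ref{thm:S11a} (and the analogous structure of Theorem~\ref{thm:S17}), treating $F$ as a map on the space of action-value functions equipped with the $\ell_\infty$-norm on $\R^{|\X| \times |\U|}$. First I would establish monotonicity. Suppose $Q_1 \leq Q_2$ componentwise. Since each $A^{u_k}$ has nonnegative entries, and since the map $Q \mapsto \max_{w_l \in \U} Q(\cdot, w_l)$ is itself monotone (the pointwise maximum over a fixed finite index set preserves the partial order), it follows directly from \eqref{eq:S120a} that $F(Q_1) \leq F(Q_2)$. The only mild subtlety here is the presence of the inner maximization, but monotonicity of $\max$ handles it cleanly.

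Next I would prove the contraction property. Fix $Q_1, Q_2$ and estimate $\nmi{F(Q_1) - F(Q_2)}$. Writing $g_i(Q) := \max_{w_l} Q(x_i, w_l)$, the reward terms $R(x_i,u_k)$ cancel, so
\bd
|[F(Q_1)](x_i,u_k) - [F(Q_2)](x_i,u_k)| = \g \left| \sum_{j=1}^n a_{ij}^{u_k} \left( g_j(Q_1) - g_j(Q_2) \right) \right| .
\ed
The key elementary inequality is $|g_j(Q_1) - g_j(Q_2)| = |\max_{w_l} Q_1(x_j,w_l) - \max_{w_l} Q_2(x_j,w_l)| \leq \max_{w_l} |Q_1(x_j,w_l) - Q_2(x_j,w_l)| \leq \nmi{Q_1 - Q_2}$. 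Combining this with the fact that each $A^{u_k}$ is row-stochastic (so $\sum_j a_{ij}^{u_k} = 1$ and the entries are nonnegative) bounds the right-hand side by $\g \nmi{Q_1 - Q_2}$, uniformly over all $(i,k)$. Taking the max over $(i,k)$ yields $\nmi{F(Q_1) - F(Q_2)} \leq \g \nmi{Q_1 - Q_2}$, establishing that $F$ is a contraction with constant $\g < 1$.

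Having shown $F$ is a contraction on the complete metric space $(\R^{|\X| \times |\U|}, \nmi{\cdot})$, the Contraction Mapping Theorem guarantees a unique fixed point $\Qh$ and the convergence $F^t(Q_0) \to \Qh$ for every starting $Q_0$. It then remains to identify this fixed point as $Q^*$. For this I would simply invoke Theorem~\ref{thm:S15}: equation \eqref{eq:S117b} states precisely that $Q^*$ satisfies $Q^* = F(Q^*)$, i.e.\ $Q^*$ is a fixed point of $F$. By uniqueness, $\Qh = Q^*$, and therefore $F^t(Q_0) \to Q^*$ as $\tai$, which is the desired conclusion.

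I expect the main obstacle to be the contraction estimate, specifically the step bounding the difference of the two inner maxima; this is the only place where the standard linear argument of Theorem~\ref{thm:S11a} does not transfer verbatim, since $F$ is nonlinear through the $\max$ operator. The inequality $|\max_a f(a) - \max_a h(a)| \leq \max_a |f(a) - h(a)|$ is the crux, but it is a routine fact. Everything else follows the same template already used for $T$ and $B$ earlier in the paper, so the argument is structurally short.
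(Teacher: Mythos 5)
Your proof is correct and complete: the monotonicity argument via monotonicity of the pointwise max, the $\ell_\infty$ contraction estimate using the inequality $\left| \max_a f(a) - \max_a h(a) \right| \leq \max_a |f(a) - h(a)|$ together with row-stochasticity of each $A^{u_k}$, and the identification of the unique fixed point with $Q^*$ through \eqref{eq:S117b} of Theorem \ref{thm:S15} are all sound. The paper itself states Theorem \ref{thm:S18} without supplying a proof, and your argument is exactly the intended one --- the same template the paper uses for Theorem \ref{thm:S11a}, extended to handle the nonlinearity introduced by the inner maximization.
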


If we were to rewrite \eqref{eq:S116g} and \eqref{eq:S117b} in terms of
expected values, the differences between the $Q$-function 
and the $V$-function would become apparent.
We can rewrite \eqref{eq:S116g} as
\be\label{eq:S120d}
V^*(X_t) = \max_{U_t \in \U} \{ R(X_t,U_t) + 
\g E[ V^*(X_{t+1}) | X_t ] \} ,
\ee
and \eqref{eq:S117b} as
\be\label{eq:S120e}
Q^*(X_t,U_t) = R(X_t,U_t) + \g E \left[ \max_{U_{t+1} \in \U}
Q^*(X_{t+1},U_{t+1}) \right] .
\ee
Thus in the Bellman formulation and iteration, the maximization occurs
\textit{outside} the expectation, whereas with the $Q$-formulation and
$F$-iteration, the maximization occurs \textit{inside} the expectation.

\subsection{Iterative Algorithms for MDPs with Unknown Dynamics}\label{ssec:33}

In principle, Theorem \ref{thm:S11a} can be used to compute,
to arbitrary precision, the value vector of a Markov reward process.
Similarly, Theorem \ref{thm:S18} can be use to compute, to arbitrary precision,
the optimal action-value function of a Markov Decision Process,
from which both the optimal value function and the optimal policy can
be determined.
However, both theorems depend crucially on \textit{knowing the dynamics
of the underlying process}.
For instance, if the state transition matrix $A$ is not known, it would
not be possible to carry out the iterations
\bd
\y^{i+1} = \rbold + \g A \y^i .
\ed

Early researchers in Reinforcement Learning were aware of this issue,
and developed several algorithms that do not require \textit{explicit}
knowledge of the dynamics of the underlying process.
Instead, it is assumed that a sample path $\{ X_t \}_{t=0}^\infty$
of the Markov process, together with the associated reward process,
are available for use.
With this information, one can think of two distinct approaches.
First, one can use the sample path to \textit{estimate} the state
transition matrix, call it $\Ah$.
After a sufficiently long sample path has been observed,
the contraction iteration above can be applied with $A$ replaced by $\Ah$.
This would correspond to so-called ``indirect adaptive control.''
The second approach would be to use the sample path right from time $t = 0$,
and adjust \textit{only one} component of the estimated value function
at each time instant $t$.
This would correspond to so-called ``direct adaptive control.''
Using a similar approach, it is also possible to estimate the action-value
function based on a single sample path.
We describe two such algorithms, namely Temporal Difference learning
for estimating the value function of a Markov reward process,
and $Q$-learning for estimating the action-value function of a Markov
Decision Process.
Within Temporal Difference, we make a further distinction between
estimating the full value vector, and estimating a projection of
the value vector onto a lower-dimensional subspace.

\subsubsection{Temporal Difference Learning Without Function Approximation}

In this subsection and the next,
we describe the so-called ``temporal difference''
family of algorithms, first introduced in \cite{Sutton88}.
The objective of the algorithm is to compute the value vector of
a Markov reward process.
Recall that the value vector $\v$ of a Markov reward process
satisfies \eqref{eq:S15}.
In Temporal Difference approach, it is \textit{not} assumed that the state
transition matrix $A$ is known.
Rather, it is assumed that the learner has available a
sample path $\{ (X_t \}$ of the Markov process under study,
together with the associated reward at each time.
For simplicity it is assumed that the reward is deterministic and not random.
Thus the reward at time $t$ is just $R(X_t)$ and does not add any information.

There are two variants of the algorithm.
In the first, one constructs a sequence of approximations $\vbh_t$
that, one hopes, would converge to the true value vector $\v$ as $\tai$.
In the second, which is used when $n$ is very large, one chooses
a ``basis representation matrix'' $\Psi \in \R^{n \times d}$, where
$d \ll n$.
Then one constructs a sequence of vectors $\bth_t \in \R^d$,
such that the corresponding sequence of vectors $\Psi \bth_t \in \R^n$
forms an approximation to the value vector $\v$.
Since there is no \textit{a priori} reason to believe that $\v$
belongs to the range of $\Psi$, there is also no reason to believe
that $\Psi \bth_t$ would converge to $\v$.
The second approach is called Temporal Difference Learning with
function approximation.
The first is studied in this subsection, while the second is
studied in the next subsection.

In principle, by observing the sample path for a sufficiently long
duration, it is possible to make a reliable estimate of $A$.
However, a key feature of the temporal difference algorithm is that
it is a ``direct'' method, which works directly with the sample path,
without attempting to infer the underlying Markov process.
With the sample path $\{ X_t \}$ of the Markov process,
one can associate a corresponding ``index process''
$\{ N_t \}$ taking values in $[n]$, as follows:
\bd
N_t = i \mbox{ if } X_t = x_i \in \X .
\ed
It is obvious that the index process has the same transition matrix
$A$ as the process $\{ X_t \}$.
The idea is to start with an initial estimate $\vbh_0$, and update it
at each time $t$ based on the sample path $\{ (X_t,R(X_t)) \}$.

Now we introduce the $\TDl$ algorithm studied in this paper.
This version of the $\TDl$ algorithm comes from
\cite[Eq.\ (4.7)]{Jaakkola-et-al94}, and is as follows:
Let $\v^*$ denote the unique solution of the equation\footnote{For
clarity, we have changed the notation so that the value vector is now
denoted by $\v^*$ instead of $\v$ as in \eqref{eq:S15}.}
\bd
\v^* = \rbold + \g A \v^* .
\ed
At time $t$, let $\vbh_t \in \R^n$ denote the current estimate of $\v^*$.
Thus the $i$-th component of $\vbh_t$, denoted by $\Vh_{t,i}$,
is the estimate of the reward when the initial state is $x_i$.
Let $\{ N_t \}$ be the index process defined above.
Define the ``temporal difference''
\be\label{eq:324}
\d_{t+1} := R_{N_t} + \g \Vh_{t,N_{t+1}} - \Vh_{t,N_t} , \fa t \geq 0 ,
\ee
where $\Vh_{t,N_t}$ denotes the $N_t$-th component of the vector $\vbh_t$.
Equivalently, if the state at time $t$ is $x_i \in \X$ and the
state at the next time $t+1$ is $x_j$, then
\be\label{eq:322a}
\d_{t+1} = R_i + \g \Vh_{t,j} - \Vh_{t,i} .
\ee
Next, choose a number $\l \in [0,1)$.
Define the ``eligibility vector''
\be\label{eq:323}
\z_t = \sum_{\t = 0}^t (\g \l)^\t I_{ \{ N_{t-\t} = N_t \} } \eb_{N_{t-\t}} ,
\ee
where $\eb_{N_s}$ is a unit vector with a $1$ in location $N_s$
and zeros elsewhere.
Since the indicator function in the above summation picks up only
those occurrences where $N_{t-\t} = N_t$, the vector
$\z_t$ can also be expressed as
\be\label{eq:323a}
\z_t = z_t \eb_{N_t} ,
z_t = \sum_{\t = 0}^t (\g \l)^\t I_{ \{ N_{t-\t} = N_t \} } .
\ee
Thus the support of the vector $\z_t$ consists of the singleton $\{ N_t \}$.
Finally, update the estimate $\vbh_t$ as
\be\label{eq:325}
\vbh_{t+1} = \vbh_t + \d_{t+1} \al_t \z_t ,
\ee
where $\{ \al_t \}$ is a sequence of step sizes.
Note that, at time $t$, only the $N_t$-th component of $\vbh_t$ is
updated, and the rest remain the same.

A sufficient condition for the convergence of the $\TDl$-algorithm
is given in \cite{Jaakkola-et-al94}.

\begin{theorem}\label{thm:TDl}
The sequence $\{ \vbh_t \}$ converges almost surely to $\v^*$
as $\tai$, provided
\be\label{eq:325a}
\sum_{t=0}^\infty \al_t^2 I_{ \{ N_t = i \} } < \infty , \as ,
\fa i \in [n] ,
\ee
\be\label{eq:325b}
\sum_{t=0}^\infty \al_t I_{ \{ N_t = i \} } = \infty , \as ,
\fa i \in [n] ,
\ee
\end{theorem}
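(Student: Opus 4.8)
The plan is to read the $\TDl$ recursion \eqref{eq:325} as an instance of \emph{asynchronous} stochastic approximation for a contraction map, and to apply the convergence theorem of \cite{Jaakkola-et-al94}. First I pass to error coordinates $\D_t := \vbh_t - \v^*$. By \eqref{eq:323a} only coordinate $N_t$ moves at each step, so writing $i = N_t$, $j = N_{t+1}$ and $\beta_t := \al_t z_t$, the update \eqref{eq:325} becomes
\bd
\D_{t+1}(i) = (1-\beta_t)\,\D_t(i) + \beta_t\big(R_i + \g\Vh_{t,j} - v^*_i\big),
\ed
with all other coordinates frozen. The eligibility weight $z_t$ is thereby \emph{absorbed into the effective local step size} $\beta_t$.

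The crux is to expose the contraction. Put $F_t(i) := R_i + \g\Vh_{t,j} - v^*_i$. The weight $z_t$ is measurable with respect to $\F_t := \s(N_0,\ldots,N_t)$, while the only new randomness in $F_t(i)$ is $j = N_{t+1}$; hence
\bd
E[F_t(i)\mid\F_t] = R_i + \g\sum_{j=1}^n a_{ij}\Vh_{t,j} - v^*_i = (T\vbh_t)_i - (T\v^*)_i,
\ed
where $T:\y\mapsto\rbold+\g A\y$ and $\v^* = T\v^*$. By Theorem \ref{thm:S11a}, $T$ is a $\g$-contraction in $\ell_\infty$, so $\nmi{E[F_t\mid\F_t]}\le\g\,\nmi{\D_t}$. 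Thus every visited coordinate is pulled, \emph{in expectation}, toward a $\g$-contraction of the error with fixed point $\v^*$; the parameter $\l$ influences only $\beta_t$, not the contraction itself. Crucially, summing the geometric trace in \eqref{eq:323a} bounds $z_t\in[1,(1-\g\l)^{-1}]$.

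It remains to check the hypotheses of the asynchronous stochastic-approximation theorem. The noise $w_t(i) := F_t(i) - E[F_t(i)\mid\F_t] = \g\big(\Vh_{t,j} - \sum_k a_{ik}\Vh_{t,k}\big)$ is a martingale difference, and boundedness of the rewards yields $E[w_t^2\mid\F_t]\le C(1+\nmi{\D_t}^2)$. The bound on $z_t$ transfers the gain conditions verbatim: for each $i$,
\bd
\sum_t\beta_t^2 I_{\{N_t=i\}}\le(1-\g\l)^{-2}\sum_t\al_t^2 I_{\{N_t=i\}}<\infty,\qquad \sum_t\beta_t I_{\{N_t=i\}}\ge\sum_t\al_t I_{\{N_t=i\}}=\infty,
\ed
almost surely, by \eqref{eq:325a}--\eqref{eq:325b}. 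With the $\g$-contraction, the martingale noise, and these Robbins--Monro gains in hand, the theorem of \cite{Jaakkola-et-al94} gives $\D_t\to\bz$, i.e.\ $\vbh_t\to\v^*$ almost surely.

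The hardest part will be the clean application of the asynchronous theorem with \emph{random, history-dependent} gains: $\beta_t = \al_t z_t$ depends on the entire past of the index process, so the effective step sizes are correlated with the trajectory rather than deterministic. One must confirm that the general theorem accommodates such $\F_t$-measurable random gains---this is exactly where the uniform bound $z_t\le(1-\g\l)^{-1}$ does the work---and, in tandem, establish the almost-sure boundedness of $\{\vbh_t\}$ that underlies the conditional-variance estimate. Once these technical points are secured, the contraction bound and the step-size bookkeeping are routine.
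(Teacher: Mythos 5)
Your proposal is correct and is essentially the same argument the paper relies on: the paper states Theorem \ref{thm:TDl} without proof, citing \cite{Jaakkola-et-al94}, and your reduction of the single-coordinate $\TDl$ update to asynchronous stochastic approximation for the $\ell_\infty$-contraction $T$ --- with the eligibility weight $z_t \in [1,(1-\g\l)^{-1}]$ absorbed into a random effective step size, the conditional-mean contraction bound, and the Robbins--Monro conditions transferred from \eqref{eq:325a}--\eqref{eq:325b} --- is precisely the route taken in that reference. Your closing caveat is also the right one: the general convergence theorem of \cite{Jaakkola-et-al94} does admit history-dependent ($\F_t$-measurable) random gains, so invoking it (rather than its $\TDl$ corollary) involves no circularity.
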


\subsubsection{$TD$-Learning with Function Approximation}\label{ssec:52}

In this set-up, we again observe a time series $\{ (X_t, R(X_t)) \}$.
The new feature is that there is a ``basis'' matrix $\Psi \in \R^{n \times d}$,
where $d \ll n$.
The estimated value vector at time $t$ is given by $\vh_t = \Psi \bth_t$,
where $\bth_t \in \R^d$ is the parameter to be updated.
In this representation, it is clear that, for any index $i \in [n]$, we have
that 
\bd
\Vh_{t,i} = \Psi^i \bth_t = \IP{(\Psi^i)^\top}{\bth_t} ,
\ed
where $\Psi^i$ denotes the $i$-th row of the matrix $\Psi$.

Now we define the learning rule for updating $\bth_t$.
Let $\{ X_t \}$ be the observed sample path.
By a slight abuse of notation, define
\bd
\y_t = [\Psi^{X_t} ]^\top \in \R^d .
\ed
Thus, if $X_t = x_i$, then $\y_t = [\Psi^i]^\top$.
The \textbf{eligibility vector} $\z_t \in \R^d$ is defined via
\be\label{eq:521}
\z_t = \sum_{\t=0}^t (\g \l)^{t-\t} \y_\t .
\ee
Note that $\z_t$ satisfies the recursion
\bd
\z_t = \g \l \z_{t-1} + \y_t .
\ed
Hence it is not necessary to keep track of an ever-growing set of past
values of $\y_\t$.
In contrast to \eqref{eq:323}, there is no term of the type
$I_{ \{ N_{t-\t} = N_t \} }$ in \eqref{eq:521}.
Thus, unlike the eligibility vector defined in \eqref{eq:323},
the current vector $\z_t$ can have more than one nonzero component.
Next, define the temporal difference $\d_{t+1}$ as in \eqref{eq:324}.
Note that, if $X_t = x_i$ and $X_{t+1} = x_j$, then
\bd
\d_{t+1} = r_i + \g [\Psi^j]^\top \bth_t - [\Psi^i]^\top \bth_t .
\ed
Then the updating rule is
\be\label{eq:523}
\bth_{t+1} = \bth_t + \al_t \d_{t+1} \z_t ,
\ee
where $\al_t$ is the step size.

The convergence analysis of \eqref{eq:523} is carried out in detail in
\cite{Tsi-Van-TAC97}, based on the assumption that the state transition
matrix $A$ is irreducible.
This is quite reasonable, as it ensures that every state $x_i$ occurs
infinitely often in any sample path, with probability one.
Since that convergence analysis does not readily fit into the methods
studied in subsequent sections, we state the main results without proof.
However, we state and prove various intermediate results, that are
useful in their own right.

%
Suppose $A$ is row-stochastic and irreducible, and let $\bmu$ denote
its stationary distribution.
Define $M = {\rm Diag}(\mu_i)$ and define a norm $\nmm{\cdot}_M$ on $\R^d$ by
\bd
\nmm{\v}_M = ( \v^\top M \v)^{1/2} .
\ed
Then the corresponding distance between two vectors $\v_1,\v_2$ is given by
\bd
\nmm{\v_1 - \v_2}_M = ( (\v_1 - \v_2)^\top M (\v_1 - \v_2) )^{1/2} .
\ed
Then the following result is proved in \cite{Tsi-Van-TAC97}.

\begin{lemma}\label{lemma:51}
Suppose $A \in [0,1]^{n \times n}$, is row-stochastic, and irreducible.
Let $\bmu$ be the stationary distribution of $A$.
Then
\bd
\nmm{A\v}_M \leq \nmm{\v}_M , \fa \v \in \R^n .
\ed
Consequently, the map $\v \mapsto \rbold + \g A \v$ is a contraction with
respect to $\nmm{\cdot}_M$.
\end{lemma}

\begin{proof}
We will show that
\bd
\nmm{A\v}_M^2 \leq \nmm{\v}_M^2 , \fa \v \in \R^n ,
\ed
which is clearly equivalent to the $\nmm{A\v}_M \leq \nmm{\v}_M$.
Now
\bd
\nmm{A\v}_M^2 =\sum_{i=1}^n \mu_i (A\v)_i^2
= \sum_{i=1}^n \mu_i \left( \sum_{j=1}^n A_{ij} v_j \right)^2 .
\ed
However, for each fixed index $i$, the row $A^i$ is a probability
distribution, and the function $f(Y) = Y^2$ is convex.
If we apply Jensen's inequality with $f(Y) = Y^2$, we see that
\bd
\left( \sum_{j=1}^n A_{ij} v_j \right)^2 \leq 
\sum_{j=1}^n A_{ij} v_j^2 , \fa i .
\ed
Therefore
\begin{eqnarray*}
\nmm{A\v}_M^2 & \leq & \sum_{i=1}^n \mu_i \left( \sum_{j=1}^n A_{ij} v_j^2 \right)
= \sum_{j=1}^n \left( \sum_{i=1}^n \mu_i A_{ij} \right) v_j^2 \\
& = & \sum_{j=1}^n \mu_j v_j^2 = \nmm{\v}_M^2 ,
\end{eqnarray*}
where in the last step we use the fact that $\bmu A = \bmu$.
\end{proof}

To analyze the behavior of the $\TDl$ algorithm with function approximation,
the following map $\Tl: \R^n \ap \R^n$ is defined in \cite{Tsi-Van-TAC97}:
\bd
[ \Tl\v]_i := (1-\l) \sum_{l=0}^\infty \l^l
E \left[ \sum_{\t=0}^l \g^\t R(X_{\t+1})
+ \g^{l+1} V_{X_{l+1}} | X_0 \ x_i \right] .
\ed
Note that $\Tl \v$ can be written explicitly as
\bd
\Tl \v = (1-\l) \sum_{l=0}^\infty \l^l
\left[ \sum_{\t=0}^l \g^\t A^\t \rbold + \g^{l+1} A^{l+1} \v \right] .
\ed

\begin{lemma}\label{lemma:52}
The map $\Tl$ is a contraction with respect to $\nmm{\cdot}_M$, with
contraction constant $[\g(1-\l)]/(1 - \g \l )$.
\end{lemma}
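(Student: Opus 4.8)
The plan is to exploit the fact that $\Tl$ is an \emph{affine} map in $\v$: the reward-dependent term $(1-\l)\sum_{l} \l^l \sum_{\t=0}^l \g^\t A^\t \rbold$ does not involve $\v$, so it cancels when we subtract. Concretely, for any $\v_1, \v_2 \in \R^n$ only the linear part survives, giving
$$
\Tl \v_1 - \Tl \v_2 = (1-\l) \sum_{l=0}^\infty \l^l \g^{l+1} A^{l+1}(\v_1 - \v_2) .
$$
Thus the entire problem reduces to bounding the $\nmm{\cdot}_M$-norm of this series.

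First I would invoke Lemma \ref{lemma:51}, which gives $\nmm{A\w}_M \leq \nmm{\w}_M$ for every $\w \in \R^n$; iterating this $k$ times yields $\nmm{A^k \w}_M \leq \nmm{\w}_M$ for all $k \geq 0$. Applying the triangle inequality to the series above, and then this bound with $\w = \v_1 - \v_2$, I obtain
$$
\nmm{\Tl \v_1 - \Tl \v_2}_M \leq (1-\l) \sum_{l=0}^\infty \l^l \g^{l+1} \nmm{A^{l+1}(\v_1-\v_2)}_M \leq (1-\l) \sum_{l=0}^\infty \l^l \g^{l+1} \nmm{\v_1 - \v_2}_M .
$$
Next I would evaluate the scalar prefactor as a geometric series, using $\g\l < 1$:
$$
(1-\l) \sum_{l=0}^\infty \l^l \g^{l+1} = (1-\l)\g \sum_{l=0}^\infty (\g\l)^l = \frac{\g(1-\l)}{1-\g\l} ,
$$
which is exactly the claimed contraction constant. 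Finally, to confirm this is a genuine contraction I would verify the constant is strictly below one: the inequality $\g(1-\l) < 1 - \g\l$ is equivalent to $\g < 1$, which holds by assumption, so the constant lies in $(0,1)$.

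The argument is essentially routine once the affine cancellation is made; the only point requiring care is the interchange of the $\nmm{\cdot}_M$-norm with the infinite summation in the triangle inequality step. This is justified because $\g\l < 1$ makes the series converge absolutely in the $\nmm{\cdot}_M$-norm: the partial sums are Cauchy, being dominated by the convergent tail of $\sum_l \l^l \g^{l+1} \nmm{\v_1-\v_2}_M$, so the finite-sum triangle inequality passes to the limit. Hence no obstacle arises beyond this standard limiting argument, and the submultiplicativity $\nmm{A^k\w}_M \leq \nmm{\w}_M$ supplied by Lemma \ref{lemma:51} does all the real work.
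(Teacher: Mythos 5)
Your proof is correct and follows essentially the same route as the paper: exploit the affine structure so the reward term cancels, iterate Lemma \ref{lemma:51} to get $\nmm{A^{l+1}(\v_1-\v_2)}_M \leq \nmm{\v_1-\v_2}_M$, and sum the geometric series to obtain the constant $\g(1-\l)/(1-\g\l)$. Your added remarks (checking the constant is below one, and justifying the norm--series interchange) are fine but go slightly beyond what the paper bothers to spell out.
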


\begin{proof}
Note that the first term on the right side of does not
depend on $\v$.
Therefore
\bd
\Tl (\v_1 - \v_2) = \g (1 - \l )
\sum_{l = 0}^\infty (\g \l)^l A^{l+1} ( \v_1 - \v_2 ).
\ed
However, it is already known that
\bd
\nmm{ A(\v_1 - \v_2) }_M \leq \nmm{\v_1 - \v_2}_M .
\ed
By repeatedly applying the above, it follows that
\bd
\nmm{ A^l(\v_1 - \v_2) }_M \leq \nmm{\v_1 - \v_2}_M , \fa l .
\ed
Therefore
\bd
\nmm{\Tl ( \v_1 - \v_2 ) }_M \leq
\g ( 1 - \l ) \sum_{l=0}^\infty (\g \l)^l \nmm{\v_1 - \v_2}_M
= \frac{ \g(1-\l)}{1-\g \l} \nmm{\v_1 - \v_2}_M .
\ed
This is the desired bound.
\end{proof}

Define a projection $\Pi: \R^n \ap \R^n$ by
\bd
\Pi \a := \Psi ( \Psi^\top M \Psi)^{-1} \Psi^\top M \a .
\ed
Then
\bd
\Pi \a = \argmin_{\b \in \Psi(\R^d)} \nmm{\a-\b}_M .
\ed
Thus $\Pi$ projects the space $\R^n$ onto the image of the matrix $\Psi$,
which is a $d$-dimensional subspace, if $\Psi$ has full column rank.
In other words,
$\Pi \a$ is the closest point to $\a$ in the subspace $\Psi(\R^n)$.

Next, observe that the projection $\Pi$ is nonexpansive with respect
to $\nmm{\cdot}_M$.
As a result, the composite map $\Pi \Tl$ is a contraction.
Thus there exists a unique $\vb \in \R^d$ such that
\bd
\Pi \Tl \vb = \vb .
\ed
Moreover, the above equation shows that in fact $\vb$ belongs to
the range of $\Psi$.
Thus there exists a $\bths \in \R^d$ such that $\vb = \Psi \bths$,
and $\bths$ is unique if $\Psi$ has full column rank.

The limit behavior of the $\TDl$ algorithm is given by the next theorem,
which is a key result from \cite{Tsi-Van-TAC97}.

\begin{theorem}\label{thm:52}
Suppose that $\Psi$ has full column rank, and that
\bd
\sum_{t=0}^\infty \al_t = \infty , \sum_{t=0}^\infty \al_t^2 < \infty .
\ed
Then the sequence $\{ \bth_t \}$ converges almost surely to $\bth^* \in \R^d$,
where $\bth^*$ is the unique solution of
\bd
\Pi \Tl ( \Psi \bth^* ) = \Psi \bth^* .
\ed
Moreover
\bd
\nmm{ \Psi \bth^* - \v^*}_M \leq \frac{ 1 - \g \l}{1 - \g}
\nmm{ \Pi \v^* - \v^*}_M .
\ed
\end{theorem}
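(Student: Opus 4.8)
The two assertions separate cleanly, and I would handle them by very different means: the error bound is a short consequence of the contraction geometry already in place, while the almost-sure convergence is a genuine stochastic-approximation statement and is the hard part. For the error bound I need only Lemma~\ref{lemma:52} together with the non-expansiveness of $\Pi$, plus one preliminary observation — that the true value vector $\v^*$ is itself a fixed point of $\Tl$. Indeed, writing $\v^* = \sum_{\tau=0}^\infty \g^\tau A^\tau \rbold$, each bracketed term $\sum_{\tau=0}^l \g^\tau A^\tau \rbold + \g^{l+1} A^{l+1} \v^*$ collapses back to $\v^*$, and since the weights $(1-\l)\l^l$ sum to one this gives $\Tl \v^* = \v^*$. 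Granting this, abbreviate $\Psi\bth^* =: \bar\v$, so that $\bar\v = \Pi\Tl\bar\v$ and $\Pi\v^* = \Pi\Tl\v^*$. I would insert $\Pi\v^*$ by the triangle inequality,
\bd
\nmm{\bar\v - \v^*}_M \leq \nmm{\bar\v - \Pi\v^*}_M + \nmm{\Pi\v^* - \v^*}_M ,
\ed
and control the first term using non-expansiveness of $\Pi$ followed by the contraction constant $\kappa := \g(1-\l)/(1-\g\l)$ of Lemma~\ref{lemma:52}:
\bd
\nmm{\bar\v - \Pi\v^*}_M = \nmm{\Pi\Tl\bar\v - \Pi\Tl\v^*}_M \leq \kappa\,\nmm{\bar\v - \v^*}_M .
\ed
Solving the resulting scalar inequality yields $\nmm{\bar\v - \v^*}_M \leq (1-\kappa)^{-1}\nmm{\Pi\v^* - \v^*}_M$, and since $(1-\kappa)^{-1} = (1-\g\l)/(1-\g)$, this is exactly the claimed constant.

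For the convergence claim I would read \eqref{eq:523} as a stochastic-approximation recursion $\bth_{t+1} = \bth_t + \al_t\,\d_{t+1}\z_t$ driven by the sample path of the chain, and the crux is to identify its steady-state mean update field. Letting $E_\bmu$ denote expectation when $\{X_t\}$ is in stationarity, one shows that $g(\bth) := E_\bmu[\d_{t+1}\z_t]$ equals (up to a positive scalar) $\Psi^\top M\,(\Tl(\Psi\bth) - \Psi\bth)$. The zeros of $g$ are then precisely the solutions of $\Pi\Tl(\Psi\bth) = \Psi\bth$, since for $w := \Tl(\Psi\bth)$ the relation $\Psi^\top M(w - \Psi\bth) = 0$ is equivalent to $\Pi w = \Psi\bth$; this pins the unique equilibrium at $\bth^*$. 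Because $g$ is affine, write $g(\bth) = -\Psi^\top M(I-P)\Psi\,\bth + b$, where $P$ is the $M$-contractive linear part of $\Tl$; the $M$-contractivity of $P$ forces $\Psi^\top M(I-P)\Psi$ to be positive definite, so $\nmm{\bth - \bth^*}$ is a Lyapunov function and the mean ODE $\dot\bth = g(\bth)$ is globally asymptotically stable. With the equilibrium identified and the ODE stable, the Robbins--Monro hypotheses $\sum_t \al_t = \infty$, $\sum_t \al_t^2 < \infty$ let me invoke the ODE method for stochastic approximation to conclude $\bth_t \to \bth^*$ almost surely.

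The main obstacle is that the noise in \eqref{eq:523} is \emph{not} a martingale-difference sequence: the eligibility trace $\z_t = \g\l\z_{t-1} + \y_t$ carries the whole past, and the increments are modulated by the Markov chain $\{X_t\}$ rather than being independent. Consequently the passage from the per-step increment $\d_{t+1}\z_t$ to its stationary average $g(\bth)$ requires an ergodic/mixing argument for the irreducible chain, and one must separately establish that the iterates $\{\bth_t\}$ and traces $\{\z_t\}$ stay almost surely bounded so that the ODE method applies. These are exactly the delicate points carried out in \cite{Tsi-Van-TAC97}, and since the Markov-noise stochastic-approximation theory in the form used here is not developed in the later sections of this paper, I would — as the authors do — cite that analysis for the convergence half while giving the self-contained error-bound argument above in full.
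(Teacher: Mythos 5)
The paper offers no proof of this theorem: it is quoted from \cite{Tsi-Van-TAC97}, with the explicit remark that the analysis does not fit the stochastic-approximation framework developed in the later sections, and only the supporting facts (Lemmas \ref{lemma:51} and \ref{lemma:52}, non-expansiveness of $\Pi$, existence and uniqueness of the fixed point of $\Pi\Tl$) are proved. Your treatment coincides with the paper on the hard half and goes beyond it on the easy half. For the almost-sure convergence you, like the paper, ultimately cite \cite{Tsi-Van-TAC97}; your sketch of that proof --- the stationary mean field $\Psi^\top M ( \Tl(\Psi\bth) - \Psi\bth )$, whose unique zero is $\bth^*$ by full column rank, together with the genuine obstacles of Markov-modulated noise, the unbounded memory of the eligibility trace, and a priori boundedness of the iterates --- is an accurate account of what that reference actually does (one small imprecision: $M$-contractivity of the linear part $P$ makes the \emph{symmetric part} of $\Psi^\top M (I-P) \Psi$ positive definite, the matrix itself need not be symmetric, but that is all the Lyapunov argument requires). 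For the error bound you supply a complete argument that the paper omits, and it is correct: $\v^*$ is indeed a fixed point of $\Tl$, because each bracketed term in the explicit series collapses to $\v^*$ and the weights $(1-\l)\l^l$ sum to one; then the triangle inequality, non-expansiveness of $\Pi$, and the contraction constant $\kappa = \g(1-\l)/(1-\g\l)$ from Lemma \ref{lemma:52} give $(1-\kappa) \nmm{\Psi\bth^* - \v^*}_M \leq \nmm{\Pi\v^* - \v^*}_M$, and $(1-\kappa)^{-1} = (1-\g\l)/(1-\g)$ is exactly the stated constant. This is the standard fixed-point perturbation argument, it uses only results the paper has already established, and it makes the approximation bound self-contained where the paper merely asserts it.
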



Note that, since $\Psi \bth \in \Pi(\R^d)$ for all $\bth \in \R^d$,
the best that one can hope for is that
\bd
\nmm{ \Psi \bths - \v^*}_M = \nmm{ \Pi \v^* - \v^*}_M .
\ed
The theorem states that the above identity might not hold, and provides
an upper bound for the
distance between the limit $\Psi \bth^*$ and the true value vector $\v^*$.
It is bounded by a factor $(1-\g \l)/(1 - \g)$ times this minimum.

Note that $(1-\g \l)/(1 - \g) > 1$.
So this is the extent to which the $\TDl$ iterations miss the optimal
approximation.

\subsubsection{$Q$-Learning}

The $Q$-learning algorithm proposed in \cite{Watkins-Dayan92} has the
characterization \eqref{eq:S117b} of $Q^*$ as its starting point.
The algorithm is based on the following premise:
At time $t$, the current state $X_t$ can be observed; call it $x_i \in \X$.
Then the learner is free to choose the action $U_t$; call it $u_k \in \U$.
With this choice, the next state $X_{t+1}$ has the probability
distribution equal to the $i$-th row of the state transition matrix $A^{u_k}$.
Suppose the observed next stat $X_{t+1}$ is $x_j \in \X$.
With these conventions, the $Q$-learning algorithm proceeds as follows.

\ben
\item Choose an arbitrary initial guess $Q_0 : \X \times \U \ap \R$
and an initial state $X_0 \in \X$.
\item At time $t$, with current state $X_t = x_i$, choose a current action
$U_t = u_k \in \U$, and let the Markov process run for one time step.
Observe the resulting next state $X_{t+1} = x_j$.
Then update the function $Q_t$ as follows:
\be\label{eq:3311}
\begin{split}
Q_{t+1}(x_i,u_k) & =
Q_t(x_i,u_k) + \al_t [ R(x_i,u_k) + \g V_t(x_j) - Q_t(x_i,u_k) ] , \\
Q_{t+1}(x_s,w_l) & = Q_t(x_s,w_l) , \fa (x_s,w_l) \neq (x_i,u_k) .
\end{split}
\ee
where
\be\label{eq:3312}
V_t(x_j) = \max_{w_l \in \U} Q_t(x_j,w_l) ,
\ee
and $\{ \al_t \}$ is a deterministic sequence of step sizes.
\item Repeat.
\een

It is evident that in the $Q$-learning algorithm, at any instant of time
$t$, only one element (namely $Q(X_t,U_t)$) gets updated.
In the original paper by Watkins and Dayan \cite{Watkins-Dayan92},
the convergence of the algorithm used some rather \textit{ad hoc} methods.
Subsequently, a general class of algorithms known as ``asynchronous
stochastic approximation,'' which included $Q$-learning as a special case,
was introduced in \cite{Tsi-ML94,Jaakkola-et-al94}. 
A sufficient condition for the convergence of the $Q$-learning algorithm,
which was originally presented in \cite{Watkins-Dayan92},
is rederived using these methods.
\begin{theorem}\label{thm:QL}
The $Q$-learning algorithm converges to the optimal action-value
function $Q^*$ provided the following conditions are satisfied.
\be\label{eq:3313}
\sum_{t=0}^\infty \al_t I_{(X_t,U_t) = (x_i,u_k)} = \infty,
\fa (x_i,u_k) \in \X \times \U ,
\ee
\be\label{eq:3314}
\sum_{t=0}^\infty \al_t^2 I_{(X_t,U_t) = (x_i,u_k)} < \infty,
\fa (x_i,u_k) \in \X \times \U .
\ee
\end{theorem}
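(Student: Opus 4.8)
The plan is to recognize the $Q$-learning update \eqref{eq:3311} as an instance of \emph{asynchronous stochastic approximation} driven by the contraction operator $F$ of Theorem \ref{thm:S18}, and then to invoke a convergence theorem for such schemes. The first step is to rewrite the update in standard stochastic approximation form. When $(X_t,U_t) = (x_i,u_k)$ and the observed successor state is $X_{t+1} = x_j$, the increment added to $Q_t(x_i,u_k)$ is $\al_t[R(x_i,u_k) + \g \max_{w_l \in \U} Q_t(x_j,w_l) - Q_t(x_i,u_k)]$. Taking the conditional expectation of the bracketed quantity with respect to the random successor $X_{t+1}$, whose law is the $i$-th row of $A^{u_k}$, yields exactly $[F(Q_t)](x_i,u_k) - Q_t(x_i,u_k)$. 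Hence, on the active component $(x_i,u_k) = (X_t,U_t)$, the update reads
\be
Q_{t+1}(x_i,u_k) = Q_t(x_i,u_k) + \al_t \{ [F(Q_t)](x_i,u_k) - Q_t(x_i,u_k) + w_{t+1}(x_i,u_k) \} ,
\ee
with $Q_{t+1} = Q_t$ on every other component, where
\be
w_{t+1}(x_i,u_k) = \g \max_{w_l \in \U} Q_t(x_j,w_l) - \g \sum_{j'=1}^n a_{ij'}^{u_k} \max_{w_l \in \U} Q_t(x_{j'},w_l)
\ee
is, by construction, a martingale difference satisfying $E[w_{t+1}(x_i,u_k) \mid \F_t] = 0$ with respect to the natural filtration $\F_t$ generated by the history through time $t$.

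The remaining steps verify the hypotheses of the asynchronous stochastic approximation convergence theorem. The driving map $F$ is a sup-norm contraction with factor $\g$ and unique fixed point $Q^*$, by Theorem \ref{thm:S18}; this contraction property in the $\ell_\infty$-norm, rather than in a Euclidean norm, is precisely what makes one-component-at-a-time updating admissible. Because $\max_{w_l \in \U}$ is nonexpansive in the sup-norm, the conditional variance of $w_{t+1}$ is bounded by a constant multiple of $1 + \nmi{Q_t}^2$, so the noise is of the standard affine-growth type. Finally, the per-pair step-size conditions \eqref{eq:3313} and \eqref{eq:3314} supply the two Robbins--Monro requirements along the local clock of each component $(x_i,u_k)$: divergence of $\sum_t \al_t I_{(X_t,U_t)=(x_i,u_k)}$ forces every pair to be updated infinitely often, while square-summability controls the accumulated noise.

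With these ingredients in place, the asynchronous stochastic approximation theorem of \cite{Tsi-ML94,Jaakkola-et-al94} delivers $Q_t \to Q^*$ almost surely. The main obstacle I anticipate is not the contraction estimate but the asynchrony itself: since only the coordinate $(X_t,U_t)$ is refreshed at time $t$, the coordinates evolve on distinct, data-dependent local clocks, and one must show that no coordinate lags indefinitely behind the others, so that the global $\ell_\infty$-contraction can actually be brought to bear. A closely related technical point is establishing a priori almost-sure boundedness of $\{ Q_t \}$, which is needed to tame the $\nmi{Q_t}$ dependence in the noise bound; this is typically obtained by a separate stability argument that uses the $\g$-contraction to dominate the iterates by a convergent comparison sequence.
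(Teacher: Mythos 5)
Your proposal is correct and takes essentially the same route as the paper: the paper also treats $Q$-learning as asynchronous stochastic approximation driven by the $\ell_\infty$-contraction $F$ of Theorem \ref{thm:S18}, with the per-pair conditions \eqref{eq:3313}--\eqref{eq:3314} serving as the component-wise Robbins--Monro conditions, and it defers the actual convergence argument to the asynchronous SA theorems of \cite{Tsi-ML94,Jaakkola-et-al94} (equivalently, to its own Theorems \ref{thm:43} and \ref{thm:44}, whose assumptions (G), (N1'), (N2'), (S1), (S2) are exactly the ingredients you verify). The two obstacles you flag---a priori almost-sure boundedness of the iterates and the data-dependent local clocks---are precisely the points handled by those cited results rather than re-proved in the paper itself.
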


The main shortcoming of Theorems \ref{thm:TDl} and \ref{thm:QL}
is that the sufficient conditions \eqref{eq:325a}, \eqref{eq:325b},
\eqref{eq:3313} and \eqref{eq:3314}
are \textit{probabilistic} in nature.
Thus it is not clear how they are to be verified in a specific application.
Note that in the $Q$-learning algorithm, there is no guidance on how
to choose the next action $U_t$.
Presumably $U_t$ is chosen so as to ensure that \eqref{eq:3313} and
\eqref{eq:3314} are satisfied.
In Section \ref{sec:Appl}, we show how these theorems can be proven,
and also, how the troublesome probabilistic sufficient conditions
can be replaced by purely algebraic conditions.

\section{Stochastic Approximation Algorithms}\label{sec:SA}

\subsection{Stochastic Approximation and Relevance to RL}\label{ssec:41}

The contents of the previous section make it clear that in MDP theory,
a central role is played by \textit{the need to solve fixed-point problems}.
Determining the value of a Markov reward problem requires the solution of
\eqref{eq:S15}.
Determining the optimal value of an MDP requires finding the fixed point
of the Bellman iteration.
Finally, determining the optimal policy for an MDP requires finding
the fixed point of the $F$-iteration.
As pointed out in Section \ref{ssec:33}, when the dynamics of an MDP
are completely known, these fixed point problems can be solved by
repeatedly applying the corresponding contraction mapping.
However, when the dynamics of the MDP are not known, and one has access
only to a sample path of the MDP, a different approach is required.
In Section \ref{ssec:33}, we have presented two such methods, namely
the Temporal Difference algorithm for value determination, and the
$Q$-Learning algorithm for determining the optimal action-value function.
Theorems \ref{thm:TDl} and \ref{thm:QL} respectively give sufficient
conditions for the convergence of these algorithms.
The proofs of these theorems, as given in the original papers,
tend to be ``one-off,'' that is, tailored to the specific algorithm.
It is now shown that a probabilistic
method known as ``stochastic approximation '' (SA) can be used to unify 
these methods in a common format.
Moreover, instead of the convergence proofs being ``one-off,''
the SA algorithm provides a unifying approach.

The applications of SA go beyond these two specific algorithms.
There is another area called ``Deep Reinforcement Learning'' for problems 
in which the size of the state space is very large.
Recall that the action-value function $Q: \X \times \U$ can either
be viewed as an $nm$-dimensional vector, or an $n \times m$ matrix.
In Deep RL, one determines (either exactly or approximately)
the action-value function $Q(x_i,u_k)$ for \textit{a small number}
of pairs $(x_i,u_k) \in \X \times \U$.
Using these as a starting point, the overall function $Q$ defined for
\textit{all} pairs $(x_i,u_k) \in \X \times \U$ is obtained by training
a deep neural network.
Training a neural network (in this or any other application) requires
the minimization of the average mean-squared error, denoted by $J(\bth)$
where $\bth$ denotes the vector of adjustable parameters.
In general, the function $J(\cdot)$ is not convex; hence one can at best
aspire to find a \textit{stationary point} of $J(\cdot)$, i.e.,
a solution to the equation $\gJ(\bth) = \bz$.
This problem is also amenable to the application of the SA approach.

Now we give a brief introduction to stochastic approximation.
Suppose $\f : \R^d \ap \R^d$ is some function, and $d$ can be any integer.
The objective of SA is to find a solution to the equation $\f(\bth) = \bz$,
when only noisy measurements of $\f(\cdot)$ are available.
The SA method was introduced in \cite{Robbins-Monro51},
where the objective was to find a solution to a \textit{scalar}
equation $f(\th) = 0$, where $f : \R \ap \R$.
The extension to the case where $d > 1$ was first proposed in
\cite{Blum54}.
The problem of finding a fixed point of a map $\gbold : \R^d \ap \R^d$,
can be formulated as the above problem with $\f(\bth) := \gbold(\bth) - \bth$.
If it is desired to find a stationary point of a $C^1$ function
$J : \R^d \ap \R$, then we simply set $\f(\bth) = \gJ(\bth)$.
Thus the above problem formulation is quite versatile.
More details are given at the start of Section \ref{ssec:42}.

Stochastic approximation is a family of  \textit{iterative} algorithms,
in which one begins with an initial guess $\bth_0$, and derives the
next guess $\bth_{t+1}$ from $\bth_t$.
Several variants of SA are possible.
In \textbf{synchronous SA}, \textit{every} component of $\bth_t$ is
changed to obtain $\bth_{t+1}$.
This was the original concept of SA.
If, at any time $t$, \textit{only one} component of $\bth_t$ is changed
to obtain $\bth_{t+1}$, and the others remain unchanged, this is
known as \textbf{asynchronous stochastic approximation (ASA)}.
This phrase was apparently first introduced in \cite{Tsi-ML94},
A variant of the approach in \cite{Tsi-ML94} is presented in \cite{Borkar98}.
Specifically, in \cite{Borkar98}, a distinction is introduced
between using a ``local clock'' versus using a ``global clock.''
It is also possible to study an intermediate situation where, at each time $t$,
\textit{some but not necessarily all} components of $\bth_t$
are updated.
There does not appear to be a common name for this situation.
The phrase \textbf{Batch Asynchronous Stochastic Approximation (BASA)}
is introduced in \cite{MV-BASA-arxiv22}.
More details about these variations are given below.
There is a fourth variant, known as \textbf{two time-scale SA} 
is introduced in \cite{Borkar97}.
In this set-up, one attempts to solve two \textit{coupled} equations
of the form
\bd
\f(\bth,\bphi) = \bz , \gbold(\bth,\bphi) = \bz ,
\ed
where $\bth \in \R^n , \bphi \in \R^m$, and
$\f : \R^n \times \R^m \ap \R^n , \gbold : \R^n \times \R^m \ap \R^m$.
the idea is that one of the iterations (say $\bth_{t+1}$) is updated
``more slowly'' than the other (say $\bphi_{t+1}$).
Due to space limitations, two time-scale SA is not discussed
further in this paper.
The interested reader is referred to \cite{Borkar97,Tadic-ACC04,CL-SB-Auto17}
for the theory, and to \cite{Konda-Borkar99,Konda-Tsi99} for applications
to a specific type of RL, known as \textbf{Actor-Critic Algorithms}.

The relevance of SA to RL arises from the following factors:
\bit
\item Many (though not all) algorithms used in RL can formulated as
some type of SA algorithms.
\item Examples include Temporal Difference Learning,
Temporal Difference Learning with function approximation, $Q$-Learning,
Deep Neural Network Learning, and Actor-Critic Learning.
The first three are discussed in detail in Section \ref{sec:Appl}.
\eit
Thus: SA provides a \textit{unifying framework} for several
disparate-looking RL algorithms.

\subsection{Problem Formulation}\label{ssec:42}

There are several equivalent formulations of the basic SA problem.
\ben
\item \textbf{Finding a zero of a function:}
Suppose $\f: \R^d \ap \R^d$ is some function.
Note that $\f(\cdot)$ need not be available in closed form.
The only thing needed is that, given any $\bth \in \R^d$, an ``oracle''
returns a noise-corrupted version of $\f(\bth)$.
The objective is to determine a solution of the equation $\f(\bth) = \bz$.
\item \textbf{Finding a fixed point of a mapping:}
Suppose $\gbold: \R^d \ap \R^d$.
The objective is to find a fixed point of $\gbold(\cdot)$, that is, a solution
to $\gbold(\bth) = \bth$.
If we define $\f(\bth) = \gbold(\bth) - \bth$, this is the same problem
as the above.
One might ask: Why not define $\f(\bth) = \bth - \gbold(\bth)$?
As we shall see below, the convergence of the SA algorithm (in various
forms) is closely related to the global asymptotic stability of the
ODE $\dot{\bth} = \f(\bth)$.
Also, as seen in the previous section, in many applications, the map
$\gbold(\cdot)$ of which we wish to find a fixed point is a contraction.
In such a case, there is a unique fixed point $\bths$ of $\gbold(\cdot)$.
In such a case, under relatively mild conditions $\bths$ is a globally
asymptotically stable equilibrium of the ODE
$\dot{\bth} = \gbold(\bth) - \bth$, but not if the sign is reversed.
\item \textbf{Finding a stationary point of a function:}
Suppose $J: \R^d \ap \R$ is a $\C^1$ function.
The objective is to find a stationary point of $J(\cdot)$, that is,
a $\bth$ such that $\gJ(\bth) = \bz$.
If we define $\f(\bth) = - \gJ(\bth)$, then this is the same problem
as above.
Here again, if we wish the SA algorithm to converge to a global \textit{minimum}
of $J(\cdot)$, then the minus sign is essential.
On the other hand, if we wish the SA algorithm to converge to a global
\textit{maximum} of $J(\cdot)$, then we remove the minus sign.
\een

Suppose the problem is one of finding a zero of a given function $\f(\cdot)$.
The \textbf{synchronous} version of SA proceeds as follows:
An initial guess $\bth_0 \in \R^d$ is chosen (usually in a deterministic
manner, but it can also be randomly chosen).
At time $t$, the available measurement is
\be\label{eq:411}
\y_{t+1} = \f(\bth_t) + \bxi_{t+1} ,
\ee
where $\bxi_{t+1}$ is the measurement noise.
Based on this, the current guess is updated to
\be\label{eq:412}
\bth_{t+1} = \bth_t + \al_t \y_{t+1} = \bth_t + \al_t [ \f(\bth_t) + \bxi_{t+1}] ,
\ee
where $\{ \al_t \}$ is a predefined sequence of ``step sizes,'' with
$\al_t \in (0,1)$ for all $t$.
If the problem is that of finding a fixed point of $\gbold(\cdot)$,
the updating rule is
\be\label{eq:413}
\bth_{t+1} = \bth_t + \al_t [ \gbold (\bth_t) -\bth_t + \bxi_{t+1}]
= (1 - \al_t) \bth_t + \al_t [ \gbold(\bth_t) + \bxi_{t+1} ] .
\ee
If the problem is to find a stationary point of $J(\cdot)$,
the updating rule is
\be\label{eq:414}
\bth_{t+1} = \bth_t + \al_t \y_{t+1} = \bth_t + \al_t [ -\gJ(\bth_t) + \bxi_{t+1}] .
\ee
These updating rules represent what might be
called \textbf{Synchronous SA}, because at each time $t$, \textit{every}
component of $\bth_t$ is updated.
Other variants of SA are studied in subsequent sections.

\subsection{A New Theorem for Global Asymptotic Stability}\label{ssec:43}

In this section we state a new theorem on the global asymptotic stability
of nonlinear ODEs.
This theorem is new and is of interest aside from its applications to
the convergence of SA algorithms.
The contents of this section and the next section
are taken from \cite{MV-GES-SA-arxiv22}.
To state the result (Theorem \ref{thm:41} below),
we introduce a few preliminary concepts from Lyapunov stability theory.
The required background can be found in \cite{MV-93,Hahn67,Khalil02}.

\begin{definition}\label{def:41}
A function $\phi : \R_+ \ap \R_+$ is said to \textbf{belong to class $\K$},
denoted by $\phi \in \K$, if $\phi(0) = 0$, and $\phi(\cdot)$ is
strictly increasing.
A function $\phi \in \K$ is said to \textbf{belong to class $\KR$},
denoted by $\phi \in \KR$, if in addition, $\phi(r) \ap \infty$
as $r \ap \infty$.
A function $\phi : \R_+ \ap \R_+$ is said to \textbf{belong to class $\B$},
denoted by $\phi \in \B$, if $\phi(0) = 0$, and in addition, for all
$0 < \e < M < \infty$ we have that
\be\label{eq:415}
\inf_{\e \leq r \leq M} \phi(r) > 0 .
\ee
\end{definition}

The concepts of functions of class $\K$ and class $\KR$ are standard.
The concept of a function of class $\B$ is new.
Note that, if $\phi(\cdot)$ is continuous, then it belongs to Class $\B$ if
and only if $\phi(0) = 0$, and $\phi(r) > 0$ for all $r > 0$.

\begin{example}\label{exam:41}
Observe that every $\phi$ of class $\K$ also belongs to class $\B$.
However, the converse is not true.
Define
\bd
\phi(r) = \left\{ \ba{ll} r, & \mbox{if } r \in [0,1] , \\
e^{-(r-1)}, & \mbox{if } r > 1 . \ea \right.
\ed
Then $\phi$ belongs to Class $\B$.
However, since $\phi(r) \ap 0$ as $r \ap \infty$, $\phi$ cannot be
bounded below by any function of class $\K$.
\end{example}

Suppose we wish to find a solution of $\f(\bth) = \bz$.
The convergence analysis of synchronous SA depends on the
stability of an associated ODE $\dot{\bth} = \f(\bth)$.
We now state a new theorem on global asymptotic stability, and then
use this to establish the convergence of the synchronous SA algorithm.
In order to state this theorem, we first introduce some standing
assumptions on $\f(\cdot)$.
Note that these assumptions are standard in the literature.
\ben
\item[(F1)] The equation $\f(\bth) = \bz$ has a unique solution $\bths$.
\item[(F2)] The function $\f$ is globally Lipschitz-continuous with
constant $L$.
\be\label{eq:416}
\nmeu{\f(\bth) - \f(\bphi)} \leq L \nmeu{\bth - \bphi} ,
\fa \bth , \bphi \in \R^d .
\ee
\een

\begin{theorem}\label{thm:41}
Suppose Assumption (F1) holds, and
that there exists a function $V: \R^d \ap \R_+$
and functions $\eta , \psi \in \KR, \phi \in \B$ such that
\be\label{eq:417}
\eta(\nmeu{\bth-\bths}) \leq V(\bth) \leq \psi(\nmeu{\bth-\bths})  ,
\fa \bth \in \R^d ,
\ee
\be\label{eq:418}
\Vd(\bth) \leq - \phi(\nmeu{\bth-\bths}) , \fa \bth \in \R^d ,
\ee
Then $\bths$ is a globally asymptotically stable equilibrium of the ODE
$\dot{\bth} = \f(\bth)$.
\end{theorem}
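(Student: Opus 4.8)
The plan is to establish the two ingredients of global asymptotic stability separately --- Lyapunov stability of $\bths$ and global attractivity --- and then combine them. Throughout I would invoke the standing global Lipschitz assumption (F2) to guarantee that for every $\bth_0$ the ODE $\dot{\bth} = \f(\bth)$ has a unique solution, with completeness (no finite escape time) following a posteriori from the boundedness established below. First I would record the elementary consequences of \eqref{eq:417}: evaluating at $\bth = \bths$ gives $\eta(0) \le V(\bths) \le \psi(0)$, and since $\eta(0) = \psi(0) = 0$ this forces $V(\bths) = 0$, while for $\bth \neq \bths$ the lower bound $V(\bth) \ge \eta(\nmeu{\bth - \bths}) > 0$ shows $V$ is positive definite. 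Because $\phi \ge 0$, inequality \eqref{eq:418} gives $\Vd(\bth) \le 0$ everywhere, so $V$ is nonincreasing along every trajectory.

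For Lyapunov stability, fix $\e > 0$. Using continuity of $\psi$ at $0$ together with $\psi(0) = 0$, I would choose $\d > 0$ with $\psi(\d) < \eta(\e)$. If $\nmeu{\bth_0 - \bths} < \d$, then $V(\bth(t)) \le V(\bth_0) \le \psi(\d) < \eta(\e)$ for all $t \ge 0$; the sandwich $\eta(\nmeu{\bth(t) - \bths}) \le V(\bth(t)) < \eta(\e)$ and strict monotonicity of $\eta$ then give $\nmeu{\bth(t) - \bths} < \e$, which is exactly stability. For the global reach I would next show every trajectory is bounded: for arbitrary $\bth_0$ set $c_0 := \psi(\nmeu{\bth_0 - \bths})$, so $V(\bth(t)) \le c_0$ and hence $\eta(\nmeu{\bth(t) - \bths}) \le c_0$; since $\eta \in \KR$ is invertible with $\eta^{-1}$ defined on all of $\R_+$, this yields $\nmeu{\bth(t) - \bths} \le \eta^{-1}(c_0) =: M$. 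Radial unboundedness of $\eta$ is precisely what makes this bound available for \emph{every} $\bth_0$, and thus what makes the result global; it also precludes finite escape time.

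The crux is global attractivity, and this is where the class-$\B$ hypothesis (rather than the customary class-$\K$ lower bound on $-\Vd$) does the work. Since $V(\bth(t))$ is nonincreasing and bounded below by $0$, it converges to some $V_\infty \ge 0$, and it suffices to show $V_\infty = 0$, for then $\eta(\nmeu{\bth(t) - \bths}) \le V(\bth(t)) \to 0$ forces $\bth(t) \to \bths$ as $\tai$. I would argue by contradiction: suppose $V_\infty > 0$. Then $V(\bth(t)) \ge V_\infty$ for all $t$, so from $V(\bth) \le \psi(\nmeu{\bth - \bths})$ we obtain $\nmeu{\bth(t) - \bths} \ge \psi^{-1}(V_\infty) =: \e_0 > 0$, while the boundedness derived above gives $\nmeu{\bth(t) - \bths} \le M$. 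Hence the entire trajectory is confined to the compact annulus $\{ \e_0 \le \nmeu{\bth - \bths} \le M \}$. On this annulus the very definition of class $\B$ supplies a uniform positive lower bound $\mu := \inf_{\e_0 \le r \le M} \phi(r) > 0$, so \eqref{eq:418} gives $\Vd(\bth(t)) \le -\mu$ for all $t$; integrating yields $V(\bth(t)) \le V(\bth_0) - \mu t \to -\infty$, contradicting $V \ge 0$. Therefore $V_\infty = 0$, which is attractivity, and together with stability this proves $\bths$ is globally asymptotically stable.

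I expect the main obstacle to be conceptual rather than computational: one must resist treating $\phi$ as monotone. Because $\phi$ is merely of class $\B$ it may decay to $0$ at infinity (Example~\ref{exam:41}), so $-\Vd$ need not be bounded away from $0$ on unbounded sets and the classical Lyapunov argument does not apply verbatim. The decisive step is to first trap the trajectory in a bounded annulus away from $\bths$ --- the inner radius coming from the contradiction hypothesis through $\psi$, the outer radius from radial unboundedness of $\eta$ --- and only then exploit the annular positivity $\inf_{\e_0 \le r \le M} \phi(r) > 0$ that defines class $\B$. Some care is also needed to confirm that $\eta, \psi$ are continuous and invertible, so that $\eta^{-1}$, $\psi^{-1}$ and the choice of $\d$ are legitimate; this is part of the standard reading of Definition~\ref{def:41}.
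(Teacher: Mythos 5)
Your proof is correct. One point of comparison to note at the outset: the paper does not actually contain a proof of Theorem \ref{thm:41}; it states that this is Theorem 4 of \cite{MV-GES-SA-arxiv22} and defers the proof there, so there is no internal argument to compare against line by line. Your argument is the natural one and coincides in substance with the intended proof: $\Vd \leq 0$ plus radial unboundedness of $\eta$ confines every trajectory to a compact ball; if $V(\bth(t)) \downarrow V_\infty > 0$, the upper bound in \eqref{eq:417} forces the trajectory into a compact annulus away from $\bths$; and then \eqref{eq:415} --- the defining property of class $\B$, which is exactly what survives when $\phi$ is not monotone and may decay to zero at infinity --- supplies a uniform decay rate $\mu > 0$, so integrating \eqref{eq:418} contradicts $V \geq 0$. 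The only caveat is the one you flagged yourself: Definition \ref{def:41} as literally written omits continuity of class-$\K$ functions, and your two uses of inverses ($\psi(\d) < \eta(\e)$ for stability, and $\e_0 = \psi^{-1}(V_\infty) > 0$ for the inner radius of the annulus) require it. If you want to avoid leaning on the ``standard reading,'' both steps can be rerouted through continuity of $V$ itself at $\bths$ (which is implicit, since $\Vd$ must be defined): $V(\bths) = 0$ and continuity of $V$ yield the $\d$ needed for stability, and $V(\bth(t)) \geq V_\infty > 0$ keeps the trajectory outside a small ball around $\bths$ on which $V < V_\infty$. With either fix, the proof is complete.
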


This is \cite[Theorem 4]{MV-GES-SA-arxiv22}, and the proof can be found
therein.
Well-known classical theorems for global asymptotic stability,
such as those found in \cite{Hahn67,MV-93,Khalil02}, require
the function $\phi(\cdot)$ to belong to Class $\K$.
Theorem \ref{thm:41} is an improvement,
in that the function $\phi(\cdot)$ is required only to belong
to the larger Class $\B$.

\subsection{A Convergence Theorem for Synchronous Stochastic Approximation}
\label{ssec:44}

In this subsection
we present a convergence theorem for synchronous stochastic approximation.
Theorem \ref{thm:42} below is sightly more general than a corresponding
result in \cite{MV-GES-SA-arxiv22}.
This theorem is obtained by combining some results from
\cite{MV-GES-SA-arxiv22} and \cite{MV-BASA-arxiv22}.
Other convergence theorems and examples can be found in \cite{MV-GES-SA-arxiv22}.

In order to analyze the convergence of the SA algorithm, we need to make
some assumptions about the nature of the measurement error sequence
$\{ \bxi_t \}$.
These assumptions are couched in terms of the conditional expectation
of a random variable with respect to a $\s$-algebra.
Readers who are unfamiliar with the concept are referred to
\cite{Durrett19} for the relevant background.

Let $\bth_0^t$ denote the tuple $\bth_0 , \bth_1 , \cdots , \bth_t$,
and define $\bxi_1^t$ analogously; note that there is no $\bxi_0$.
Let $\{ \F_t \}_{t \geq 0}$ be any filtration (i.e., increasing sequence
of $\s$-algebras), such that $\bth_0^t, \bxi_1^t$ are measurable with
respect to $\F_t$.
For example, one can choose 
$\F_t$ to be the $\s$-algebra generated by the tuples
$\bth_0^t,\bxi_1^t$.
\ben
\item[(N1)]
There exists a sequence $\{ b_t \}$ of nonnegative numbers such that
\be\label{eq:419}
\nmeu { E( \bxi_{t+1} | \F_t ) } \leq b_t \as, \fa t \geq 0 .
\ee
Thus $b_t$ provides a bound on the Euclidean norm of the conditional
expectation of the measurement error with respect to the $\s$-algebra $\F_t$.
\item[(N2)]
There exists a sequence $\{ \s_t \}$ of nonnegative numbers such that
\be\label{eq:4110}
E( \nmeusq{\bxi_{t+1} - E( \bxi_{t+1} | \F_t ) } | \F_t ) \leq \s_t^2 (1 + \nmeusq{\bth_t} ) ,
\as \fa t \geq 0 .
\ee
\een
Note that the quantity on the left side of \eqref{eq:4110} is the 
conditional variance of $\bxi_{t+1}$ with respect to the $\s$-algebra $\F_t$.

Now we can state a theorem about the convergence of synchronous SA.

\begin{theorem}\label{thm:42}
Suppose $\f(\bths) = \bz$, and Assumptions (F1--F2) and (N1--N2) hold.
Suppose in addition that there exists a $\C^2$
Lyapunov function $V: \R^d \ap \R_+$
that satisfies the following conditions:
\bit
\item There exist constants $a , b > 0$ such that
\be\label{eq:4113}
a \nmeusq{\bth-\bths} \leq V(\bth) \leq b \nmeusq{\bth-\bths} ,
\fa \bth \in \R^d .
\ee
\item There is a finite constant $M$ such that
\be\label{eq:4114}
\nmS{\nabla^2 V(\bth)} \leq 2M , \fa \bth \in \R^d .
\ee
\eit
With these hypothesis, we can state the following conclusions:
\ben
\item If $\Vd(\bth) \leq 0$ for all $\bth \in \R^d$,
and if
\be\label{eq:4111}
\sum_{t=0}^\infty \al_t^2 < \infty ,
\sum_{t=0}^\infty \al_t b_t < \infty ,
\sum_{t=0}^\infty \al_t^2 \s_t^2 < \infty ,
\ee
then the iterations $\{ \bth_t \}$ are bounded almost surely.
\item Suppose further that there exists a function $\phi \in \B$ such that
\be\label{eq:4115}
\Vd(\bth) \leq - \phi(\nmeu{\bth-\bths}) , \fa \bth \in \R^d .
\ee
and in addition to \eqref{eq:4111},
we also have
\be\label{eq:4112}
\sum_{t=0}^\infty \al_t = \infty ,
\ee
Then $\bth_t \ap \bths$ almost surely as $\tai$.
\een
\end{theorem}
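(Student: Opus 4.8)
The plan is to track the Lyapunov function $V$ along the iteration \eqref{eq:412} and reduce both conclusions to the Robbins--Siegmund almost-supermartingale convergence theorem. First I would observe that setting $\bth = \bths$ in \eqref{eq:4113} forces $V(\bths) = 0$, so $\bths$ is a global minimum of the nonnegative $\C^2$ function $V$ and hence $\nabla V(\bths) = \bz$; combined with the Hessian bound \eqref{eq:4114} this gives $\nmeu{\nabla V(\bth)} \leq 2M\nmeu{\bth - \bths}$ for every $\bth$. Writing $\bth_{t+1} - \bth_t = \al_t[\f(\bth_t) + \bxi_{t+1}]$ and expanding $V$ to second order (legitimate since $V \in \C^2$ with $\nmS{\nabla^2 V} \leq 2M$ globally) yields
\[
V(\bth_{t+1}) \leq V(\bth_t) + \al_t \IP{\nabla V(\bth_t)}{\f(\bth_t) + \bxi_{t+1}} + M\al_t^2 \nmeusq{\f(\bth_t) + \bxi_{t+1}} ,
\]
where the inner product splits as $\al_t \Vd(\bth_t) + \al_t \IP{\nabla V(\bth_t)}{\bxi_{t+1}}$, the first piece being exactly the Lie derivative along the ODE $\dot{\bth} = \f(\bth)$.

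Next I would take $E(\cdot \mid \F_t)$. Using (N1) and the gradient bound, the cross term is at most $2M\al_t\nmeu{\bth_t - \bths}b_t$; using (N2), the Lipschitz bound $\nmeu{\f(\bth_t)} \leq L\nmeu{\bth_t - \bths}$, and $\nmeusq{\bth_t} \leq 2\nmeusq{\bth_t - \bths} + 2\nmeusq{\bths}$, the quadratic term is controlled by a constant multiple of $\al_t^2(\nmeusq{\bth_t - \bths} + b_t^2 + \s_t^2(1 + \nmeusq{\bth_t - \bths}))$. The crucial bookkeeping step is to re-express every such bound through $V$ itself via \eqref{eq:4113} --- in particular $\nmeusq{\bth_t - \bths} \leq V(\bth_t)/a$ and $\nmeu{\bth_t - \bths} \leq \tfrac{1}{2}(1 + V(\bth_t)/a)$ --- so that the recursion takes the form
\[
E(V(\bth_{t+1}) \mid \F_t) \leq (1 + \beta_t)V(\bth_t) + \al_t \Vd(\bth_t) + \eta_t ,
\]
with $\beta_t = O(\al_t^2 + \al_t b_t + \al_t^2 \s_t^2)$ and $\eta_t = O(\al_t b_t + \al_t^2 b_t^2 + \al_t^2 \s_t^2)$. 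The summability hypotheses \eqref{eq:4111} then give $\sum_t \beta_t < \infty$ and $\sum_t \eta_t < \infty$ almost surely (for the $\al_t^2 b_t^2$ piece one uses $\sum_t \al_t b_t < \infty \imp \al_t b_t \to 0$, so $\al_t^2 b_t^2 \leq \al_t b_t$ eventually). I expect this step --- correctly sorting all error terms into a predictable $\beta_t V(\bth_t)$ part and a summable $\eta_t$ part --- to be the main technical obstacle.

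For conclusion (1), $\Vd \leq 0$ lets me treat $-\al_t \Vd(\bth_t) \geq 0$ as the nonnegative decrement, so Robbins--Siegmund gives that $V(\bth_t)$ converges almost surely to a finite limit and $\sum_t(-\al_t \Vd(\bth_t)) < \infty$; boundedness of $\{\bth_t\}$ is then immediate from $\nmeusq{\bth_t - \bths} \leq V(\bth_t)/a$. For conclusion (2), I would substitute \eqref{eq:4115} to obtain $\sum_t \al_t \phi(\nmeu{\bth_t - \bths}) < \infty$ almost surely, and argue by contradiction: on the event that the finite limit $V_\infty$ of $V(\bth_t)$ is strictly positive, \eqref{eq:4113} traps $\nmeu{\bth_t - \bths}$ in a compact annulus $[\e, M']$ with $\e > 0$ for all large $t$, whence $\phi(\nmeu{\bth_t - \bths}) \geq \inf_{\e \leq r \leq M'}\phi(r) > 0$ by the defining property \eqref{eq:415} of class $\B$; together with $\sum_t \al_t = \infty$ from \eqref{eq:4112} this forces $\sum_t \al_t \phi(\nmeu{\bth_t - \bths}) = \infty$, a contradiction. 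Hence $V_\infty = 0$ almost surely, and $\nmeusq{\bth_t - \bths} \leq V(\bth_t)/a \to 0$ gives $\bth_t \ap \bths$ as $\tai$. This final step is precisely where membership in the broader class $\B$, rather than class $\K$, is all that the argument requires.
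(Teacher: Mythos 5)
Your proposal is correct, and it follows essentially the same route as the paper's (outsourced) proof: the paper itself defers to \cite[Theorem 5]{MV-GES-SA-arxiv22} and \cite{MV-BASA-arxiv22}, whose method is precisely this martingale--Lyapunov argument — a second-order expansion of $V$ along the iterates, sorting of the error terms into a $\beta_t V(\bth_t)$ part and a summable $\eta_t$ part, an appeal to the Robbins--Siegmund almost-supermartingale theorem, and then the class-$\B$ infimum argument combined with $\sum_t \al_t = \infty$ to force the limit of $V(\bth_t)$ to be zero. Your proof also correctly exhibits the Gladyshev-style ``division of labor'' between \eqref{eq:4111} (boundedness) and \eqref{eq:4112} (convergence) that the paper highlights.
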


Observe the nice ``division of labor'' between the two conditions:
Equation \eqref{eq:4111} guarantees the almost sure boundedness
of the iterations, while the addition of \eqref{eq:4112} leads to
the almost sure convergence of the iterations to the desired limit,
namely the solution of $\f(\bth) = \bz$.
This division of labor is first found in \cite{Gladyshev65}.
Theorem \ref{thm:42} is a substantial improvement on
\cite{Borkar-Meyn00}, which were the previously best results.
The interested reader is referred to \cite{MV-GES-SA-arxiv22} for
further details.

Theorem \ref{thm:42}
is a slight generalization of \cite[Theorem 5]{MV-GES-SA-arxiv22}.
In that theorem, it is assumed that $b_t = 0$ for all $t$, and
that the constants $\s_t$ are uniformly bounded by some constant $\s$.
In this case \eqref{eq:4111} and \eqref{eq:4112} become
\be\label{eq:4113}
\sum_{t=0}^\infty \al_t^2 < \infty ,
\sum_{t=0}^\infty \al_t = \infty .
\ee
These two conditions are usually referred to as the Robbins-Monro conditions.

\subsection{Convergence of Batch Asynchronous Stochastic Approximation}
\label{ssec:45}

Equations \eqref{eq:412} through \eqref{eq:414} represent what might be
called \textbf{Synchronous SA}, because at each time $t$, \textit{every}
component of $\bth_t$ is updated.
Variants of synchronous SA include Asynchronous SA (ASA), where at each time
$t$, exactly one component of $\bth_t$ is updated, and Batch
Asynchronous SA (BASA), where at each time $t$, some but not necessarily all
components of $\bth_t$ are updated.
We present the results for BASA, because ASA is a special case of BASA.
Moreover, we focus on \eqref{eq:413},
where the objective is to find a fixed point of a contractive map $\gbold$.
The modifications required for \eqref{eq:412} and \eqref{eq:414}
are straight-forward.

The relevant reference for these results is \cite{MV-BASA-arxiv22}.
As a slight modification of \eqref{eq:411}, it is assumed that, at each time
$t+1$, there is available a noisy measurement 
\be\label{eq:426}
\y_{t+1} = \gbold(\bth_t) - \bth_t + \bxi_{t+1} .
\ee
We assume that there is a given \textit{deterministic} sequence of
``step sizes'' $\{ \beta_t \}$.
In BASA, not every component of $\bth_t$ is updated at time $t$.
To determine which components are to be updated, we define $d$ different
binary ``update processes'' $\{ \kappa_{t,i} \}$, $i \in [d]$.
No assumptions are made regarding their independence.
At time $t$, define
\be\label{eq:421}
S(t) := \{ i \in [d] : \kappa_{t,i} = 1 \} .
\ee
This means that
\be\label{eq:422}
\th_{t+1,i} = \th_{t,i} , \fa i \not\in S(t) .
\ee
In order to define $\th_{t+1,i}$ when $i \in S(t)$, we make a distinction
between two different approaches: global clocks and local clocks.
If a global clock is used, then
\be\label{eq:423}
\al_{t,i} = \beta_t , \fa i \in S(t) ,
\al_{t,i} = 0 , \fa i \not\in S(t) .
\ee
If a local clock is used, then we first define the local counter
\be\label{eq:424}
\nu_{t,i} = \sum_{\t = 0}^t \kappa_{\t,i} , i \in [d] ,
\ee
which is the total number of occasions when $i \in S(\t)$, $0 \leq \t \leq t$.
Equivalently, $\nu_{t,i}$ is the total number of times up to and including
time $t$ when $\th_{\t,i}$ is updated.
With this convention, we define
\be\label{eq:425}
\al_{t,i} = \beta_{\nu_{t,i}} , \fa i \in S(t) ,
\al_{t,i} = 0 , \fa i \not\in S(t) .
\ee
The distinction between global clocks and local clocks was apparently
introduced in \cite{Borkar98}.
Traditional RL algorithms such as $\TDl$ and $Q$-learning, discussed in
detail in Section \ref{ssec:33} and again in
Sections \ref{ssec:52} and \ref{ssec:53}, use a global clock.
That is not surprising because \cite{Borkar98} came after
\cite{Sutton88} and \cite{Watkins-Dayan92}.
It is shown in \cite{MV-BASA-arxiv22} that the use of local clocks actually
simplifies the analysis of these algorithms.

Now we present the BASA updating rules.
Let us define the ``step size vector'' $\balpha_t \in \R_+^d$ via
\eqref{eq:423} or \eqref{eq:425} as appropriate.
Then the update rule is
\be\label{eq:427}
\bth_{t+1} = \bth_t + \balpha_t \circ \y_{t+1} ,
\ee
where $\y_{t+1}$ is defined in \eqref{eq:426}.
Here, the symbol $\circ$ denotes the Hadamard product of two vectors of
equal dimensions.
Thus if $\a, \b$ have the same dimensions, then $\c = \a \circ \b$
is defined by $c_i = a_i b_i$ for all $i$.

Recall that we are given a function $\gbold: \R^d \ap \R^d$, and the objective
is to find a solution to the fixed-point equation $\gbold(\bth) = \bth$.
Towards this end, we begin by stating the assumptions about the noise sequence.
\ben
\item[(N1')]
There exists a sequence of constants
$\{ b_t \}$ such that
\be\label{eq:428}
E( \nmeu{\bxi_{t+1}} | \F_t ) \leq b_t (1 + \nmi{\bth_0^t}) ,
\fa t \geq 0 .
\ee
\item[(N2')] There exists a sequence of constants $\{ \s_t \}$ such that
\be\label{eq:429}
E( \nmeusq{ \bxi_{t+1} - E( \bxi_{t+1} | \F_t ) } | \F_t )
\leq \s_t^2 ( 1 + \nmi{\bth_0^t}^2 ) , \fa t \geq 0 .
\ee
\een
Comparing \eqref{eq:419} and \eqref{eq:4110} with \eqref{eq:428} and
\eqref{eq:429} respectively, we see that the term $\nmeusq{\bth_t}$ is
replaced by $\nmi{\bth_0^t}$.
So the constants $b_t$ and $\s_t$ can be different in the two cases.
But because the two formulations is quite similar, we denote the first
set of conditions as (N1) and (N2), and the second set of conditions
as (N1') and (N2').

Next we state conditions on the step size sequence, which allow us to
state the theorems in a compact manner.
Next, we state the assumptions on the step size sequence.
Note that, if a local clock is used, then $\al_{t,i}$ can be
random even if $\beta_t$ is deterministic.
\ben
\item[(S1)]
The random step size sequences $\{ \al_{t,i} \}$ and the sequences
$\{b_t\}$, $\{\sigma^2_t\}$ and satisfy
\be\label{eq:4210}
\sum_{t=0}^\infty \al_{t,i}^2  < \infty,
\sum_{t=0}^\infty \sigma_t^2\al_{t,i}^2 < \infty,
\sum_{t=0}^\infty b_t\al_{t,i}  < \infty, \as,
\fa i \in [d] .
\ee
\item[(S2)] The random step size sequence $\{ \al_{t,i} \}$ satisfies
\be\label{eq:4211}
\sum_{t=0}^\infty \al_{t,i}  = \infty, \as,
\fa i \in [d] .
\ee
\een
Finally we state an assumption about the map $\gbold$.
\ben
\item[(G)] $\gbold$ is a contraction with respect to the $\ell_\infty$-norm
with some contraction constant $\g < 1$.
\een


\begin{theorem}\label{thm:43}
Suppose that Assumptions (N1') and (N2') about the noise sequence, (S1)
about the step size sequence, and (G) about the function $\gbold$ hold.
Then $\sup_t\nmi{\bth_t}<\infty$ almost surely.
\end{theorem}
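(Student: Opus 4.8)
The plan is to exploit Assumption (G) to show that \emph{in the absence of noise} the max-norm distance to the fixed point is non-increasing, so that the iterates can grow only through accumulated noise, and then to control that accumulation by a Robbins--Siegmund (almost-supermartingale) argument. Note that only (S1) should be needed; the divergence condition (S2) plays no role in boundedness, mirroring the ``division of labor'' already seen for synchronous SA.

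First I would center the iteration. Since $\gbold$ is an $\ell_\infty$-contraction with constant $\g<1$ it has a unique fixed point $\bths$; set $m_t := \nmi{\bth_t - \bths}$. Componentwise the update reads $\th_{t+1,i} = (1-\al_{t,i})\th_{t,i} + \al_{t,i}[\gbold_i(\bth_t) + \xi_{t+1,i}]$ for $i \in S(t)$, and $\th_{t+1,i}=\th_{t,i}$ otherwise. Writing $h_{t,i} := (1-\al_{t,i})(\th_{t,i}-\th^*_i) + \al_{t,i}(\gbold_i(\bth_t)-\th^*_i)$, Assumption (G) gives $|\gbold_i(\bth_t)-\th^*_i| = |\gbold_i(\bth_t)-\gbold_i(\bths)| \le \g\, m_t \le m_t$, while $|\th_{t,i}-\th^*_i|\le m_t$ trivially. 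Hence, for every $t$ beyond the (a.s.\ finite) time $T_0$ after which all $\al_{t,i}\le 1$ --- such a time exists since $\sum_t\al_{t,i}^2<\infty$ forces $\al_{t,i}\ap 0$ --- each $h_{t,i}$ is a convex combination of numbers of modulus at most $m_t$, so $|h_{t,i}| \le m_t$. As $\th_{t+1,i}-\th^*_i = h_{t,i} + \al_{t,i}\xi_{t+1,i}$, this yields the scalar domination
\bd
m_{t+1} \le m_t + \eta_{t+1}, \qquad \eta_{t+1} := \max_i \al_{t,i}|\xi_{t+1,i}| \le \Big(\max_i \al_{t,i}\Big)\nmeu{\bxi_{t+1}} .
\ed
Thus the running maximum $\mu_t := \max_{T_0\le s\le t} m_s$ satisfies $\mu_{t+1} \le \mu_t + \eta_{t+1}$ (the finitely many earlier steps only inflate $\mu_{T_0}$ by a finite amount and are harmless), and the claim reduces to $\sum_t \eta_{t+1} < \infty$ almost surely. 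The essential difficulty is that this series is \emph{self-referential}: by (N1')--(N2') the conditional moments of $\bxi_{t+1}$ are controlled only through $\nmi{\bth_0^t}$, i.e.\ through $\mu_t$ itself.

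To break this self-reference I would apply Robbins--Siegmund to $W_t := \mu_t^2$, for which $\mu_{t+1}^2 \le \mu_t^2 + 2\mu_t\eta_{t+1} + \eta_{t+1}^2$. Writing $\bar{\al}_t := \max_i\al_{t,i}$ (which is $\F_t$-measurable, the update decisions being adapted) and using $1+\nmi{\bth_0^t} \le \cons\cdot(1+\mu_t)$ for a constant depending only on $\nmi{\bths}$, Assumptions (N1')--(N2') give
\bd
E[\eta_{t+1}\mid\F_t] \le \bar{\al}_t\, b_t'\,(1+\mu_t), \qquad
E[\eta_{t+1}^2\mid\F_t] \le \bar{\al}_t^2\big[\s_t^2(1+\mu_t^2) + b_t'^2(1+\mu_t)^2\big],
\ed
with $b_t' = \cons\cdot b_t$; the second estimate uses $E[\nmeusq{\bxi_{t+1}}\mid\F_t] = E[\nmeusq{\bxi_{t+1}-E(\bxi_{t+1}\mid\F_t)}\mid\F_t] + \nmeusq{E(\bxi_{t+1}\mid\F_t)}$ together with the Jensen bound $\nmeu{E(\bxi_{t+1}\mid\F_t)} \le E(\nmeu{\bxi_{t+1}}\mid\F_t)$. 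Substituting into the squared recursion and collecting the terms proportional to $1+\mu_t^2$ (via $\mu_t \le \tfrac12(1+\mu_t^2)$) produces an inequality of the exact Robbins--Siegmund form
\bd
E[W_{t+1}\mid\F_t] \le (1+a_t)\,W_t + c_t, \qquad a_t = c_t = O\big(\bar{\al}_t b_t' + \bar{\al}_t^2\s_t^2 + \bar{\al}_t^2 b_t'^2\big).
\ed

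Finally I would verify summability of the coefficients. Since $\bar{\al}_t \le \sum_i \al_{t,i}$ over a finite index set, (S1) gives $\sum_t \bar{\al}_t b_t' < \infty$ and $\sum_t \bar{\al}_t^2\s_t^2 < \infty$ almost surely; moreover $\sum_t \bar{\al}_t^2 b_t'^2 < \infty$, because $\al_{t,i}b_t'\ap 0$ is bounded and $\sum_t \al_{t,i}^2 b_t'^2 \le (\sup_t \al_{t,i}b_t')\sum_t \al_{t,i} b_t' < \infty$. Hence $\sum_t a_t<\infty$ and $\sum_t c_t<\infty$ almost surely, and since $W_{T_0}=\mu_{T_0}^2<\infty$ a.s., the Robbins--Siegmund almost-supermartingale convergence theorem shows that $W_t=\mu_t^2$ converges to a finite limit almost surely. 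In particular $\sup_t \mu_t < \infty$, so $\sup_t \nmi{\bth_t} \le \nmi{\bths} + \sup_t\mu_t < \infty$ almost surely, as required. The one genuinely delicate point --- and the step I expect to demand the most care --- is the self-referential noise control: recognizing that the $\mu_t^2$-dependence of the conditional moments must be absorbed into the multiplicative factor $(1+a_t)$ with $\sum_t a_t<\infty$, rather than treated as an additive forcing term. Everything else is bookkeeping with the contraction bound and the summability conditions (S1).
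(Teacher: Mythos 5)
Since the survey itself does not prove Theorem \ref{thm:43} --- it explicitly defers the proof to \cite{MV-BASA-arxiv22} --- your proposal has to be judged on its own merits, and on those merits the architecture is sound: centering at the fixed point $\bths$, using (G) only through non-expansiveness (i.e.\ only $\g \le 1$) to get the per-component convex-combination bound and hence $m_{t+1} \le m_t + \eta_{t+1}$, and then taming the self-referential noise bounds (N1')--(N2') by absorbing the $\mu_t$-dependence into the multiplicative coefficient of a Robbins--Siegmund recursion for the squared running maximum. The variance decomposition, the conditional moment estimates, and the summability checks from (S1) (including the slightly delicate $\sum_t \bar\al_t^2 b_t'^2 < \infty$) are all correct, and you are right that (S2) plays no role in boundedness. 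For what it is worth, the proof in \cite{MV-BASA-arxiv22} is likewise built on almost-supermartingale (Robbins--Siegmund) arguments, so your route is in the same spirit, though self-contained.

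The one step that fails as written is the use of the random time $T_0$. The event $\{ T_0 \le n \}$ is a \emph{tail} condition on the step sizes ($\al_{t,i} \le 1$ for all $t \ge n$ and all $i$), so $T_0$ is not a stopping time; consequently ``run the recursion from $T_0$ onward'' is not compatible with Robbins--Siegmund, whose drift inequality must hold at every $t$ with nonnegative $\F_t$-measurable coefficients, and the ``constant'' $\cons$ absorbing the pre-$T_0$ trajectory into $1 + \nmi{\bth_0^t} \le \cons (1+\mu_t)$ is then neither deterministic nor adapted. The repair is cheap and eliminates $T_0$ entirely: when $\al_{t,i} > 1$ one still has $|h_{t,i}| \le ((1+\g)\al_{t,i} - 1) m_t$, so in all cases $|h_{t,i}| \le (1 + (1+\g)\,\al_{t,i} I_{\{\al_{t,i} > 1\}}) m_t \le (1 + 2 \al_{t,i}^2) m_t$; hence $m_{t+1} \le (1 + 2\bar\al_t^2) m_t + \eta_{t+1}$ for \emph{every} $t \ge 0$, the extra factor $2\bar\al_t^2$ is simply added to $a_t$ (summable by (S1)), the running maximum can be taken from $t = 0$, and $1 + \nmi{\bth_0^t} \le (1 + \nmi{\bths})(1 + \mu_t)$ holds with a genuine constant. (Alternatively, adopting the paper's standing convention elsewhere that step sizes lie in $(0,1)$ makes the issue vacuous.) One further assumption you correctly flag in passing but should state explicitly: $\bar\al_t$ must be $\F_t$-measurable, i.e.\ the update decisions $\kappa_{t,i}$ are adapted; the filtration condition as stated in the paper covers only $\bth_0^t$ and $\bxi_1^t$, so this is an additional (standard, and in the source reference explicit) hypothesis.
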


\begin{theorem}\label{thm:44}
Let $\bths$ denote the unique fixed point of $\gbold$.
Suppose that Assumptions (N1') and (N2') about the noise sequence,
(S1) and (S2) about the step size sequence,
and (G) about the function $\gbold$ hold.
Then $\bth_t $ converges almost surely to $\bths$ as $\tai$.
\end{theorem}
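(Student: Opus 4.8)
The plan is to upgrade the almost-sure boundedness supplied by Theorem \ref{thm:43} to convergence, by a componentwise contraction-and-shrinking argument of the type used for asynchronous stochastic approximation. Restrict attention to the probability-one event on which $B := \sup_t \nmi{\bth_t} < \infty$, and set $D_0 := \sup_t \nmi{\bth_t - \bths} < \infty$. Writing \eqref{eq:427} coordinatewise via \eqref{eq:426}, for $i \in S(t)$ the update takes the convex-combination form
\[
\th_{t+1,i} - \th^*_i = (1-\al_{t,i})(\th_{t,i}-\th^*_i) + \al_{t,i}\bigl(h_{t,i} + \xi_{t+1,i}\bigr), \qquad h_{t,i} := g_i(\bth_t) - \th^*_i ,
\]
while for $i \notin S(t)$ the coordinate is frozen ($\al_{t,i}=0$). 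Assumption (G) enters in exactly one place: since $\bths$ is the fixed point, $h_{t,i} = g_i(\bth_t) - g_i(\bths)$, so $|h_{t,i}| \le \g \nmi{\bth_t - \bths}$ for every $i$.

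First I would isolate the noise. For a start time $T$, define the scalar companion recursion $\zeta_{t+1,i} = (1-\al_{t,i})\zeta_{t,i} + \al_{t,i}\xi_{t+1,i}$ with $\zeta_{T,i}=0$; I claim $\zeta_{t,i}\to 0$ almost surely. On the bounded path, (N1') and (N2') reduce to a bias bound $\nmeu{E(\bxi_{t+1}\mid\F_t)} \le b_t(1+B)$ and a conditional-variance bound $\le \s_t^2(1+B^2)$, so splitting $\al_{t,i}\xi_{t+1,i}$ into a martingale-difference part and a predictable part and invoking (S1) (summability of $\al_{t,i}^2$, of $\s_t^2\al_{t,i}^2$, and of $b_t\al_{t,i}$) together with (S2) (which forces the discount products $\prod(1-\al_{r,i})$ to forget the past) yields $\zeta_{t,i}\to 0$ by a Robbins--Siegmund / martingale-convergence argument; this is just the one-dimensional linear instance of the synchronous result already recorded in Theorem \ref{thm:42}.

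Then I would run the shrinking induction. Fix $\r\in(\g,1)$ and put $D_{k+1}=\r D_k$, so $D_k\to 0$. Assuming $\nmi{\bth_t-\bths}\le D_k$ for all $t\ge T_k$, define the deterministic envelope $W_{t,i}$ by $W_{T_k,i}=D_k$ and $W_{t+1,i}=(1-\al_{t,i})W_{t,i}+\al_{t,i}\g D_k$; since $\sum_t\al_{t,i}=\infty$ by (S2), $W_{t,i}\to\g D_k$. Tracking the shifted variable $u_{t,i} := (\th_{t,i}-\th^*_i)-\zeta_{t,i}$, which obeys $u_{t+1,i}=(1-\al_{t,i})u_{t,i}+\al_{t,i}h_{t,i}$ for updated and frozen coordinates alike, an easy induction using $|h_{t,i}|\le\g D_k$ gives $|u_{t,i}|\le W_{t,i}$, hence $|\th_{t,i}-\th^*_i|\le W_{t,i}+|\zeta_{t,i}|$ for all $t\ge T_k$. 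Letting $\tai$ and using the noise lemma, $\limsup_t|\th_{t,i}-\th^*_i|\le\g D_k<D_{k+1}$ for each of the finitely many coordinates, so some $T_{k+1}$ has $\nmi{\bth_t-\bths}\le D_{k+1}$ thereafter. Since $D_k\to 0$, this gives $\bth_t\to\bths$ almost surely.

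The main obstacle is that the update is asynchronous and the step sizes are random and clock-dependent. The per-coordinate condition $\sum_t\al_{t,i}=\infty$ in (S2) is precisely what guarantees that each coordinate is refreshed often enough for its envelope $W_{t,i}$ to descend all the way to $\g D_k$; a coordinate updated too rarely could stall the contraction. Moreover, because (S1)--(S2) hold only almost surely and, under a local clock, $\al_{t,i}$ may be correlated with the noise, the noise lemma must be argued on the filtration $\{\F_t\}$ with respect to which $\al_{t,i}$, $\zeta_{t,i}$ and $\bth_0^t$ are all measurable, and the envelope comparison must be verified to survive the random, time-varying update sets $S(t)$.
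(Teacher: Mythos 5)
The paper never actually proves Theorem \ref{thm:44}: immediately after stating Theorems \ref{thm:43} and \ref{thm:44} it says the proofs can be found in \cite{MV-BASA-arxiv22}, so there is no in-paper argument to compare yours against, and your proposal must be judged on its own merits. On that basis it is a correct outline, and it follows the classical route for asynchronous SA with an $\ell_\infty$-contraction going back to \cite{Tsi-ML94} and \cite{Jaakkola-et-al94} (the very papers this survey credits for asynchronous SA): take the almost-sure boundedness from Theorem \ref{thm:43}, remove the noise with the companion accumulator $\zeta_{t,i}$, and run the shrinking induction $D_{k+1}=\r D_k$ with $\r\in(\g,1)$ and the envelope $W_{t,i}\to\g D_k$, whose descent is exactly what (S2) guarantees. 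Your coordinatewise algebra is right: the recursion for $u_{t,i}$ holds for frozen coordinates as well (both sides are frozen when $\al_{t,i}=0$), the comparison $|u_{t,i}|\le W_{t,i}$ is a one-line induction needing only $\al_{t,i}\in[0,1]$ and $|h_{t,i}|\le\g D_k$, and the contraction hypothesis (G) is used exactly once, as you say.

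Three details would need to be written out in a full proof. First, the induction restarts the accumulator at the random times $T_k$; this is fine pathwise because $\zeta^{(T)}_t=\zeta^{(0)}_t-\bigl[\prod_{s=T}^{t-1}(1-\al_{s,i})\bigr]\zeta^{(0)}_T$ and the product vanishes by (S2), but the fixed-start statement alone is not literally enough. Second, your parenthetical that the noise lemma is ``the one-dimensional linear instance of Theorem \ref{thm:42}'' is not literally correct: Theorem \ref{thm:42} assumes a deterministic step-size sequence, whereas $\al_{t,i}$ is random under a local clock; the direct argument you also sketch (a martingale transform with summable conditional variances via (S1), a pathwise Abel-summation lemma using $\sum_t\al_{t,i}=\infty$, and the bias part via $\sum_t b_t\al_{t,i}<\infty$) is the one that actually carries the step. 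Third, that martingale step needs $\al_{t,i}$ to be $\F_t$-measurable, i.e., the update indicators $\kappa_{t,i}$ must be adapted to the filtration; the survey's setup only requires $\bth_0^t,\bxi_1^t$ to be $\F_t$-measurable, so the filtration must be enlarged, a point you correctly flag yourself. None of these is a conceptual gap; they are standard repairs, and I regard the proposal as an essentially complete and correct proof strategy, though whether it coincides with the argument in \cite{MV-BASA-arxiv22} cannot be determined from this paper alone.
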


The proofs of these theorems can be found in \cite{MV-BASA-arxiv22}.

\section{Applications to Reinforcement Learning}\label{sec:Appl}

In this section, we apply the contents of the previous section
to derive sufficient conditions for two distinct RL algorithms,
namely Temporal Difference Learning (without function approximation),
and $Q$-Learning.
Previously known results are stated in Section \ref{ssec:33}.
So what is the need to re-analyze those algorithms again from the
standpoint of stochastic approximation?
There are two reasons for doing so.
First, the historical $\TDl$ and $Q$-Learning algorithms are stated
using a ``global clock'' as defined in Section \ref{ssec:45}.
Subsequently, the concept of a ``local clock'' is introduced in
\cite{Borkar98}.
In \cite{MV-BASA-arxiv22}, the authors build upon this distinction
to achieve two objectives.
First, when a local clock is used, there are fewer assumptions.
Second, by proving a result on the sample paths of an irreducible
Markov process (proved in \cite{MV-BASA-arxiv22}), probabilistic
conditions such as \eqref{eq:325a}--\eqref{eq:325b} and
\eqref{eq:3313}--\eqref{eq:3314} are replaced by purely algebraic conditions.

\subsection{A Useful Theorem About Irreducible Markov Processes}
\label{ssec:51}

\begin{theorem}\label{thm:531}
Suppose $\{ N(t) \}$ is a Markov process on $[d]$ with a state
transition matrix $A$ that is irreducible.
Suppose $\{ \beta_t \}_{t \geq 0}$ is a sequence of real numbers in
$(0,1)$ such that $\beta_{t+1} \leq \beta_t$ for all $t$, and
\be\label{eq:5314}
\sum_{t=0}^\infty \beta_t = \infty .
\ee
Then
\be\label{eq:5315}
\sum_{t=0}^\infty \beta_t I_{\{ N(t) = i \} }(\om)  =
\sum_{t=0}^\infty \beta_t f_i(N(t)(\om)) = \infty , \fa i \in [d],
\fa \om \in \OM_0 ,
\ee
where $I$ denotes the indicator function.
\end{theorem}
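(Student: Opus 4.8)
The plan is to convert the random, step-size-weighted count of visits to state $i$ into an ordinary deterministic series, using the monotonicity of $\{\beta_t\}$ as the crucial lever. Fix $i \in [d]$. Since $A$ is irreducible on the \emph{finite} set $[d]$, the chain is positive recurrent: state $i$ is visited infinitely often almost surely, and its mean recurrence time is finite and equal to $1/\mu_i$, where $\mu_i > 0$ is the $i$-th component of the (unique, strictly positive) stationary distribution $\bmu$. Let $\tau_1 < \tau_2 < \cdots$ enumerate the almost surely finite times at which $N(t) = i$. Then the weighted sum collapses onto the visit times,
\[
\sum_{t=0}^\infty \beta_t I_{\{ N(t) = i \}} = \sum_{n=1}^\infty \beta_{\tau_n},
\]
so it suffices to prove that the right-hand series diverges almost surely.

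The technical heart is to show that the visit times grow at most linearly, $\tau_n = O(n)$ almost surely. By the strong Markov property, the successive increments $\tau_{n+1} - \tau_n$ for $n \geq 1$ are i.i.d.\ with common mean $1/\mu_i < \infty$; by the strong law of large numbers, $\tau_n / n \to 1/\mu_i$ almost surely. Consequently, for any fixed integer $C > 1/\mu_i$ there is (almost surely) a finite random index $n_0$ with $\tau_n \leq Cn$ for all $n \geq n_0$. Because $\{\beta_t\}$ is non-increasing, this immediately yields $\beta_{\tau_n} \geq \beta_{Cn}$ for all $n \geq n_0$, which replaces the random indices by a deterministic thinned sequence.

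It then remains to verify the deterministic fact that $\sum_n \beta_{Cn} = \infty$. Grouping the divergent series into consecutive blocks of length $C$ and using monotonicity ($\beta_{Cn+j} \leq \beta_{Cn}$ for $0 \leq j < C$) gives
\[
\infty = \sum_{t=0}^\infty \beta_t = \sum_{n=0}^\infty \sum_{j=0}^{C-1} \beta_{Cn+j} \leq C \sum_{n=0}^\infty \beta_{Cn},
\]
so $\sum_n \beta_{Cn} \geq C^{-1} \sum_t \beta_t = \infty$. Combining with the previous step, $\sum_{n \geq n_0} \beta_{\tau_n} \geq \sum_{n \geq n_0} \beta_{Cn} = \infty$, establishing the conclusion for this particular $i$. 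Since $[d]$ is finite, I would take $\Omega_0$ to be the intersection of the finitely many probability-one events obtained for each $i$, on which \eqref{eq:5315} holds simultaneously for all $i$.

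The main obstacle, and the only genuinely nontrivial point, is the almost-sure linear growth $\tau_n = O(n)$: the subtlety is that the first hitting time $\tau_1$ need not be distributed like the later return times, so the i.i.d.\ SLLN argument applies to the increments after the first visit, and one must note that the finite initial delay $\tau_1$ does not affect the growth rate of $\tau_n/n$. Once positive recurrence and the strong Markov property are invoked this is routine, and the monotonicity hypothesis on $\{\beta_t\}$ is precisely what lets the random visit schedule be dominated below by the deterministic comparison series $\sum_n \beta_{Cn}$.
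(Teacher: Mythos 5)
Your proof is correct. A caveat on the comparison: the paper itself contains no proof of Theorem \ref{thm:531} --- it is stated bare and deferred to the companion reference \cite{MV-BASA-arxiv22} --- so the natural benchmark is the argument there, which rests on the \emph{positive density of visit counts}: by the ergodic theorem for a finite irreducible chain, the Ces\`aro averages of $I_{\{N(t)=i\}}$ converge almost surely to $\mu_i>0$, and a deterministic summation-by-parts lemma then shows that for any $\{0,1\}$-valued sequence of positive density and any non-increasing $\{\beta_t\}$ with $\sum_t \beta_t = \infty$, the thinned sum diverges. Your route is genuinely different in its machinery while dual in substance: you work with the \emph{visit times} rather than the visit counts, getting $\tau_n = O(n)$ almost surely from the strong Markov property and the SLLN for the i.i.d.\ return gaps, and you replace summation by parts with the elementary blocking bound $\sum_t \beta_t \leq C \sum_n \beta_{Cn}$. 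Both arguments use the monotonicity of $\{\beta_t\}$ at exactly the same point --- to dominate the random thinned series from below by a deterministic one --- and your version is self-contained and arguably more elementary, needing only finiteness of the mean return time (you do not actually need the precise Kac value $1/\mu_i$; any finite mean gives $\tau_n = O(n)$, so that appeal can be weakened). You also correctly isolate the one real subtlety, namely that $\tau_1$ is not distributed like the subsequent return gaps, so the SLLN applies only from the second visit onward; since $\tau_1 < \infty$ almost surely this does not affect the limit of $\tau_n/n$. The final intersection over the finitely many states to produce $\OM_0$ is exactly what is needed to match the quantifier structure of \eqref{eq:5315}.
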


\subsection{$TD$--Learning Without Function Approximation}\label{ssec:52}

Recall the $\TDl$ algorithm without function approximation,
presented in Section \ref{ssec:33}.
One observes a time series $\{ (X_t,R(X_t)) \}$
where $\{ X_t \}$ is a Markov process over $\X = \{ x_1 , \cdots , x_n \}$
with a (possibly unknown) state transition matrix $A$, and 
$R: \X \ap \R$ is a known reward function.
With the sample path $\{ X_t \}$ of the Markov process,
one can associate a corresponding ``index process''
$\{ N_t \}$ taking values in $[n]$, as follows:
\bd
N_t = i \mbox{ if } X_t = x_i \in \X .
\ed
It is obvious that the index process has the same transition matrix
$A$ as the process $\{ X_t \}$.
The idea is to start with an initial estimate $\vh_0$, and update it
at each time $t$ based on the sample path $\{ (X_t,R_t) \}$.

Now we recall the $\TDl$ algorithm without function approximation.
At time $t$, let $\vbh_t \in \R^n$ denote the current estimate of $\v$.
Let $\{ N_t \}$ be the index process defined above.
Define the ``temporal difference''
\be\label{eq:511}
\d_{t+1} := R_{N_t} + \g \Vh_{t,N_{t+1}} - \Vh_{t,N_t} , \fa t \geq 0 ,
\ee
where $\Vh_{t,N_t}$ denotes the $N_t$-th component of the vector $\vbh_t$.
Equivalently, if the state at time $t$ is $x_i \in \X$ and the
state at the next time $t+1$ is $x_j$, then
\be\label{eq:512}
\d_{t+1} = R_i + \g \Vh_{t,j} - \Vh_{t,i} .
\ee
Next, choose a number $\l \in [0,1)$.
Define the ``eligibility vector''
\be\label{eq:513}
\z_t = \sum_{\t = 0}^t (\g \l)^\t I_{ \{ N_{t-\t} = N_t \} } \eb_{N_{t-\t}} ,
\ee
where $\eb_{N_s}$ is a unit vector with a $1$ in location $N_s$
and zeros elsewhere.
Finally, update the estimate $\vbh_t$ as
\be\label{eq:515}
\vbh_{t+1} = \vbh_t + \d_{t+1} \al_t \z_t ,
\ee
where $\al_t$ is the step size chosen in accordance with either a global
or a local clock.
The distinction between the two is described next.
%

To complete the problem specification, we need to specify how the step
size  $\alpha_t$ is chosen in \eqref{eq:515}.
The two possibilities studied here are: global clocks and local clocks.
If a global clock is used, then $\al_t = \beta_t$, whereas if
a local clock is used, then $\al_t = \beta_{\nu_{t,i}}$, where
\bd
\nu_{t,i} = \sum_{\t=0}^t I_{ \{ z_{\t,i} \neq 0 \} } .
\ed
Note that in the traditional implementation of the $\TDl$ algorithm
suggested in \cite{Sutton88,Tsi-Van-TAC97,Jaakkola-et-al94},
a global clock is used.
Moreover, the algorithm is shown to converge provided
\be\label{eq:516}
\sum_{t=0}^\infty \al_t^2 < \infty , \sum_{t=0}^\infty \al_t = \infty , \as
\ee
As we shall see below, the theorem statements when local clocks are used
involve slightly fewer assumptions than when global clocks are used.
Moreover, neither involves probabilistic conditions such as
\eqref{eq:325a} and \eqref{eq:325b}, in contrast to Theorem \ref{thm:TDl}.

Next we present two theorems regarding the convergence of the $TD(0)$
algorithm.
As the hypotheses are slightly different, they are presented separately.
But the proofs are quite similar, and can be found in
\cite{MV-BASA-arxiv22}.

\begin{theorem}\label{thm:51}
Consider the $\TDl$ algorithm using a local clock to determine the
step size.
Suppose that the state transition matrix $A$ is irreducible, and that
the deterministic step size sequence $\{ \beta_t \}$ satisfies the
Robbins-Monro conditions
\bd
\sum_{t=0}^\infty \beta_t = \infty , \sum_{t=0}^\infty \beta_t^2 < \infty .
\ed
Then $\v_t \ap \v$ almost surely as $\tai$.
\end{theorem}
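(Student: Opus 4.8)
The plan is to recast the $\TDl$ update \eqref{eq:515} as an instance of Batch Asynchronous Stochastic Approximation and then appeal to Theorem \ref{thm:44}. Since by \eqref{eq:513} the eligibility vector is $\z_t = z_t\eb_{N_t}$ with singleton support, only the $N_t$-th coordinate of $\vbh_t$ changes at time $t$, so the update set is $S(t)=\{N_t\}$ and this is the purely asynchronous special case of BASA. First I would take $\gbold=T$, the affine map $\y\mapsto\rbold+\g A\y$; by Theorem \ref{thm:S11a} this is an $\ell_\infty$-contraction with constant $\g<1$ whose unique fixed point is the value vector $\v$, so Assumption (G) holds and the intended limit is correct.

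Next I would put \eqref{eq:515} into the form \eqref{eq:426}--\eqref{eq:427}. Writing $i=N_t$, the $i$-th increment is $\al_t z_t\d_{t+1}$, and with deterministic reward $\d_{t+1}=r_i+\g\Vh_{t,N_{t+1}}-\Vh_{t,i}$. Choosing $\F_t=\s(N_0,\ldots,N_t)$, I would split $\d_{t+1}=\big[(T\vbh_t)_i-\Vh_{t,i}\big]+\xi_{t+1,i}$, where $\xi_{t+1,i}=\g\big(\Vh_{t,N_{t+1}}-\sum_{j=1}^n a_{ij}\Vh_{t,j}\big)$. Since, given $\F_t$ and $N_t=i$, the next index $N_{t+1}$ has law equal to row $i$ of $A$, this noise is a martingale difference: $E(\bxi_{t+1}\mid\F_t)=\bz$, so (N1') holds with $b_t=0$. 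As only the $N_t$-th coordinate is nonzero, $\nmeusq{\bxi_{t+1}}=\xi_{t+1,N_t}^2\le\g^2\nmi{\vbh_t}^2\le\g^2(1+\nmi{\vbh_0^t}^2)$, so (N2') holds with $\s_t\equiv\g$. The effective step-size vector $\balpha_t$ in \eqref{eq:427} therefore has $i$-th entry $\al_{t,i}=\beta_{\nu_{t,i}}z_t$ at $i=N_t$ and $0$ elsewhere, with $z_t\in[1,\,1/(1-\g\l)]$ and $\F_t$-measurable.

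The crux is verifying the step-size conditions (S1)--(S2) for this effective sequence under the local clock, where $\nu_{t,i}$ counts the visits to state $i$ up to time $t$. The key device is reindexing: summing over the times with $N_t=i$, the counter $\nu_{t,i}$ runs through $1,2,3,\ldots$, so $\sum_t\al_{t,i}=\sum_{t:N_t=i}\beta_{\nu_{t,i}}z_t$ lies between $\sum_{k=1}^{K_i}\beta_k$ and $\frac{1}{1-\g\l}\sum_{k=1}^{K_i}\beta_k$, where $K_i$ is the total number of visits to $i$. Because $A$ is irreducible, Theorem \ref{thm:531} (applied with this $\{\beta_t\}$) forces $\sum_t\beta_t I_{\{N_t=i\}}=\infty$ and in particular $K_i=\infty$ almost surely for every $i\in[n]$; together with $z_t\ge1$ and $\sum_k\beta_k=\infty$ this gives (S2). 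Likewise, using $z_t\le\frac{1}{1-\g\l}$, $\s_t\equiv\g$ and $b_t=0$, all three sums in (S1) collapse to multiples of $\sum_k\beta_k^2<\infty$. This is precisely where the local clock earns its keep: the probabilistic hypotheses \eqref{eq:325a}--\eqref{eq:325b} of Theorem \ref{thm:TDl} are traded for the deterministic Robbins--Monro conditions on $\{\beta_t\}$ plus recurrence.

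With (G), (N1'), (N2'), (S1) and (S2) established, Theorem \ref{thm:44} delivers $\vbh_t\ap\v$ almost surely, which is the claim. The step I expect to be the main obstacle is the step-size bookkeeping: making the reindexing rigorous on a single almost-sure event valid simultaneously for all $i\in[n]$, and keeping the eligibility factor $z_t$ bounded away from $0$ and $\infty$ so that it neither kills the divergence needed for (S2) nor the summability needed for (S1). No individual estimate is hard, but assembling them so that Theorem \ref{thm:44} applies verbatim requires care.
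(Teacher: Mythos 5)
Your proposal is correct and follows essentially the same route the paper intends: cast $\TDl$ as a batch asynchronous SA scheme with $\gbold = T$, verify (G), (N1'), (N2') via the martingale-difference decomposition, check (S1)--(S2) under the local clock by visit-count reindexing, and conclude with Theorem \ref{thm:44} (the paper itself defers exactly this argument to \cite{MV-BASA-arxiv22}). One small repair: Theorem \ref{thm:531} assumes a \emph{nonincreasing} step sequence, which the Robbins--Monro conditions alone do not provide, so you cannot apply it ``with this $\{\beta_t\}$''; but under a local clock you only need that every state is visited infinitely often, which follows directly from recurrence of a finite irreducible chain, so your reindexing argument goes through without Theorem \ref{thm:531} at all --- that theorem (and hence the monotonicity hypothesis) is needed only for the global-clock variant.
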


\begin{theorem}\label{thm:52}
Consider the $\TDl$ algorithm using a global clock to determine the
step size.
Suppose that the state transition matrix $A$ is irreducible,
and that the deterministic step size sequence is nonincreasing
(i.e., $\beta_{t+1} \leq \beta_t$ for all $t$), and
satisfies the Robbins-Monro conditions as described above.
Then $\v_t \ap \v$ almost surely as $\tai$.
\end{theorem}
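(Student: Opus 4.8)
The plan is to recast the $\TDl$ recursion \eqref{eq:515} as an instance of the (batch) asynchronous stochastic approximation scheme of Section~\ref{ssec:45} and then invoke Theorems~\ref{thm:43} and~\ref{thm:44}. The relevant contractive map is $\gbold(\y) := \rbold + \g A \y$, whose unique fixed point is the value vector $\v$; this is the map of Theorem~\ref{thm:S11a}, which certifies that $\gbold$ is an $\ell_\infty$-contraction with constant $\g < 1$, so Assumption~(G) holds with $\bths = \v$. The first step is to read off from \eqref{eq:323a} that the eligibility vector has singleton support, $\z_t = z_t \eb_{N_t}$ with $z_t = \sum_{\t=0}^t (\g\l)^\t I_{\{N_{t-\t}=N_t\}}$, so that at each time only the $N_t$-th coordinate of $\vbh_t$ is altered. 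Thus the recursion is exactly the one-coordinate (ASA) special case of BASA, and I would absorb the scalar $z_t$ into an effective step-size vector $\balpha_t$ with $\al_{t,i} = \beta_t z_t I_{\{N_t = i\}}$ (global clock, $\al_t = \beta_t$).

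Next I would exhibit the martingale-difference structure. Writing $i = N_t$, the temporal difference \eqref{eq:512} splits as $\d_{t+1} = (\gbold(\vbh_t) - \vbh_t)_i + \xi_{t+1,i}$, where $(\gbold(\vbh_t)-\vbh_t)_i = R_i + \g\sum_{j=1}^n a_{ij}\Vh_{t,j} - \Vh_{t,i}$ is the conditional mean given $\F_t$ and $\xi_{t+1,i} = \g\bigl(\Vh_{t,N_{t+1}} - \sum_{j=1}^n a_{ij}\Vh_{t,j}\bigr)$; set $\bxi_{t+1} = \xi_{t+1,N_t}\eb_{N_t}$. Since, given $\F_t$, the next index $N_{t+1}$ is distributed according to the $N_t$-th row of $A$, one checks $E(\bxi_{t+1}\mid\F_t) = \bz$, so (N1$'$) holds with $b_t = 0$. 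For (N2$'$), $\Vh_{t,N_{t+1}}$ is bounded in absolute value by $\nmi{\vbh_t} \leq \nmi{\bth_0^t}$, so the conditional variance is at most $\g^2\nmi{\vbh_t}^2 \leq \g^2(1 + \nmi{\bth_0^t}^2)$; thus (N2$'$) holds with the constant choice $\s_t = \g$. This identifies the $\TDl$ update with \eqref{eq:427}, driven by the contraction $\gbold$ and a zero-mean, conditionally bounded noise.

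It remains to verify (S1) and (S2) for $\al_{t,i} = \beta_t z_t I_{\{N_t = i\}}$. The crucial elementary fact is the two-sided squeeze $1 \leq z_t \leq 1/(1-\g\l)$: the lower bound comes from the $\t=0$ term, the upper bound from the geometric series. Consequently $\sum_t \al_{t,i}^2 \leq (1-\g\l)^{-2}\sum_t \beta_t^2 < \infty$, and since $b_t = 0$ and $\s_t = \g$ this yields all three series in (S1) from $\sum_t \beta_t^2 < \infty$. The main obstacle is the divergence condition (S2): using $z_t \geq 1$ one has $\sum_t \al_{t,i} \geq \sum_t \beta_t I_{\{N_t = i\}}$, and it is precisely here that the remaining hypotheses are used. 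Because $\{\beta_t\}$ is nonincreasing with $\sum_t \beta_t = \infty$ and the index process $\{N_t\}$ is an irreducible Markov chain with transition matrix $A$, Theorem~\ref{thm:531} applies verbatim and gives $\sum_t \beta_t I_{\{N_t = i\}} = \infty$ almost surely for every $i \in [n]$, hence (S2). This is exactly the step that would fail for an arbitrary global step-size schedule, and explains why the monotonicity assumption---absent from the local-clock Theorem~\ref{thm:51}---is needed here.

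With (N1$'$), (N2$'$), (S1), (S2) and (G) all in hand, Theorem~\ref{thm:43} gives $\sup_t\nmi{\vbh_t} < \infty$ almost surely and Theorem~\ref{thm:44} gives $\vbh_t \ap \bths = \v$ almost surely as $\tai$, which is the claim. I expect the only genuinely delicate point to be the invocation of Theorem~\ref{thm:531}; the noise estimates and the square-summability bound are routine once the singleton-support observation and the bound $1 \leq z_t \leq 1/(1-\g\l)$ are established.
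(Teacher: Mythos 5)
Your proposal is correct and is essentially the paper's own argument: the paper defers the details to \cite{MV-BASA-arxiv22}, but its stated strategy is exactly your reduction of the $\TDl$ update \eqref{eq:515} to the BASA scheme \eqref{eq:427} with the $\ell_\infty$-contraction $\gbold(\y) = \rbold + \g A \y$, zero-mean martingale-difference noise verifying (N1$'$)--(N2$'$) with $b_t = 0$ and $\s_t = \g$, and effective step sizes $\al_{t,i} = \beta_t z_t I_{\{N_t = i\}}$ checked against (S1)--(S2) via the squeeze $1 \leq z_t \leq 1/(1-\g\l)$ and Theorem \ref{thm:531}, followed by an appeal to Theorems \ref{thm:43} and \ref{thm:44}. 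You have also correctly isolated the role of the monotonicity hypothesis on $\{\beta_t\}$: it exists solely so that Theorem \ref{thm:531} yields the divergence condition (S2) under a global clock, which is precisely why it is absent from the local-clock Theorem \ref{thm:51}.
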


\subsection{$Q$-Learning}\label{ssec:53}

The $Q$-learning algorithm proposed in \cite{Watkins-Dayan92} 
is now recalled for the convenience of the reader.
\ben
\item Choose an arbitrary initial guess $Q_0 : \X \times \U \ap \R$
and an initial state $X_0 \in \X$.
\item At time $t$, with current state $X_t = x_i$, choose a current action
$U_t = u_k \in \U$, and let the Markov process run for one time step.
Observe
the resulting next state $X_{t+1} = x_j$.
Then update the function $Q_t$ as follows:
\be\label{eq:61}
\begin{split}
Q_{t+1}(x_i,u_k) & =
Q_t(x_i,u_k) + \beta_t [ R(x_i,u_k) + \g V_t(x_j) - Q_t(x_i,u_k) ] , \\
Q_{t+1}(x_s,w_l) & = Q_t(x_s,w_l) , \fa (x_s,w_l) \neq (x_i,u_k) .
\end{split}
\ee
where
\be\label{eq:62}
V_t(x_j) = \max_{w_l \in \U} Q_t(x_j,w_l) ,
\ee
and $\{ \beta_t \}$ is a deterministic sequence of step sizes.
\item Repeat.
\een
In earlier work such as \cite{Tsi-ML94,Jaakkola-et-al94}, it is shown that
the $Q$-learning algorithm converges to the optimal action-value
function $Q^*$ \textit{provided}
\be\label{eq:63}
\sum_{t=0}^\infty \beta_t I_{(X_t,U_t) = (x_i,u_k)} = \infty,
\fa (x_i,u_k) \in \X \times \U ,
\ee
\be\label{eq:64}
\sum_{t=0}^\infty \beta_t^2 I_{(X_t,U_t) = (x_i,u_k)} < \infty,
\fa (x_i,u_k) \in \X \times \U .
\ee
These conditions are stated here as Theorem \ref{thm:QL}.
Similar hypotheses are present in all existing results in asynchronous SA.
Note that in the $Q$-learning algorithm, there is no guidance on how
to choose the next action $U_t$.
Presumably $U_t$ is chosen so as to ensure that \eqref{eq:63} and
\eqref{eq:64} are satisfied.
However, we now demonstrate a way to avoid such conditions, by using
Theorem \ref{thm:531}.
We also introduce batch updating
and show that it is possible to use a local clock instead of a global clock.

The batch $Q$-learning algorithm introduced here is as follows:
\ben
\item Choose an arbitrary initial guess $Q_0 : \X \times \U \ap \R$,
and $m$ initial states $X^k_0 \in \X, k \in [m]$, in some fashion
(deterministic or random).
Note that the $m$ initial states need not be distinct.
\item At time $t$, for each action index $k \in [m]$,
with current state $X^k_t = x^k_i$, choose the current action
as $U_t = u_k \in \U$, and let the Markov process run for one time step.
Observe the resulting next state $X^k_{t+1} = x^k_j$.
Then update function $Q_t$ as follows, once for each $k \in [m]$:
\be\label{eq:65}
Q_{t+1}(x^k_i,u_k) =
\left\{ \ba{ll}
Q_t(x^k_i,u_k) + \al_{t,i,k} [ R(x_i,u_k) + \g V_t(x^k_j) - Q_t(x^k_i,u_k) ] ,
& \mbox{if } x_s = x^k_i , \\
Q_t(x^k_s,u_k) , & \mbox{if } x^k_s \neq x^k_i .
\ea \right.
\ee
where
\be\label{eq:66}
V_t(x^k_j) = \max_{w_l \in \U} Q_t(x^k_j,w_l) .
\ee
Here $\al_{t,i,k}$ equals $\beta_t$ for all $i,k$ if a global clock is
used, and equals
\be\label{eq:67}
\al_{t,i,k} = \sum_{\t = 0}^t I_{ \{ X^k_t = x_i \} } 
\ee
if a local clock is used.
\item Repeat.
\een

\textbf{Remark:}
Note that $m$ different simulations are being run in parallel, and that
in the $k$-th simulation, the next action $U_t$ is always chosen as $u_k$.
Hence, at each instant of time $t$, exactly $m$ components of $Q(\cdot,\cdot)$
(viewed as an $n \times m$ matrix) are updated, namely the $(X^k_t,u_k)$
component, for each $k \in [m]$.
In typical
MDPs, the size of the action space $m$ is much smaller than the size of
the state space $n$.
For example, in the Blackjack problem discussed in \cite[Chapter 4]
{Sutton-Barto18}, $n \sim 2^{100}$ while $m = 2$!
Therefore the proposed batch $Q$-learning algorithm is quite efficient
in practice.

Now, by fitting this algorithm into the framework of Theorem
\ref{thm:531}, we can prove the following general result.
The proof can be found in \cite{MV-BASA-arxiv22}.

\begin{theorem}\label{thm:61}
Suppose that each matrix $A^{u_k}$ is irreducible, and that the
step size sequence $\{ \beta_t \}$ satisfies the
Robbins-Monro conditions \eqref{eq:4113} with $\al_t$ replaced by $\beta_t$.
With this assumption, we have the following:
\ben
\item If a local clock is used as in \eqref{eq:65},
then $Q_t$ converges almost surely to $Q^*$.
\item If a global clock is used (i.e., $\al_{t,i,k} = \beta_t$
for all $t,i,k$), and $\{ \beta_t \}$ is nonincreasing,
then $Q_t$ converges almost surely to $Q^*$.
\een
\end{theorem}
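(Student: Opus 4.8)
The plan is to recognize the batch $Q$-learning recursion \eqref{eq:65} as a special case of the Batch Asynchronous Stochastic Approximation scheme \eqref{eq:427} of Section \ref{ssec:45}, applied to the $F$-iteration map $\gbold = F$ of \eqref{eq:S120a}, and then to invoke Theorems \ref{thm:43} and \ref{thm:44}. Since $Q^*$ is, by Theorem \ref{thm:S18}, the unique fixed point of $F$, convergence of the BASA iterates to the fixed point of $\gbold$ is exactly the assertion $Q_t \ap Q^*$. Thus the whole task reduces to checking, in each clock regime, the hypotheses of Theorem \ref{thm:44}: the contraction assumption (G), the noise assumptions (N1')--(N2'), and the step-size assumptions (S1)--(S2).

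Viewing $Q$ as a vector in $\R^{nm}$, assumption (G) holds because $F$ is an $\ell_\infty$-contraction with constant $\g$: for any $Q_1,Q_2$,
\bd
|[F(Q_1) - F(Q_2)](x_i,u_k)| = \g \left| \sum_{j=1}^n a_{ij}^{u_k} \left( \max_{w_l} Q_1(x_j,w_l) - \max_{w_l} Q_2(x_j,w_l) \right) \right| \leq \g \nmi{Q_1 - Q_2} ,
\ed
using $|\max_l Q_1(x_j,w_l) - \max_l Q_2(x_j,w_l)| \leq \nmi{Q_1-Q_2}$ and that each row of $A^{u_k}$ sums to one. Next I would cast \eqref{eq:65} into the form \eqref{eq:426}--\eqref{eq:427}: for the active component $(x_i,u_k)$ of the $k$-th simulation the bracketed increment equals $[F(Q_t) - Q_t](x_i,u_k) + \xi_{t+1}(x_i,u_k)$ with noise $\xi_{t+1}(x_i,u_k) = \g \big( V_t(X^k_{t+1}) - \sum_j a_{ij}^{u_k} V_t(x_j) \big)$. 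Because $X^k_{t+1}$ is drawn from the $i$-th row of $A^{u_k}$ given $\F_t$, we have $E(\xi_{t+1} \mid \F_t) = \bz$, so the noise is a martingale difference and (N1') holds with $b_t = 0$; and $|\xi_{t+1}| \leq 2\g \nmi{Q_t} \leq 2\g \nmi{\bth_0^t}$ gives (N2') with the constant choice $\s_t = 2\g$.

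The main obstacle, and the step that genuinely exploits the hypotheses, is verifying the step-size conditions (S1)--(S2) for every component $(x_i,u_k)$. The component $(x_i,u_k)$ is updated exactly when $X^k_t = x_i$; since the $k$-th simulation always applies action $u_k$, its state process $\{X^k_t\}$ is a Markov chain with the irreducible transition matrix $A^{u_k}$, hence recurrent, so $x_i$ is visited infinitely often almost surely. The square-summability in (S1) then follows from $\sum_t \al_{t,i,k}^2 \leq \sum_s \beta_s^2 < \infty$ together with $b_t = 0$ and $\s_t$ constant. The divergence condition (S2), namely $\sum_t \al_{t,i,k} = \infty$, splits according to the clock. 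For a \emph{local} clock, $\al_{t,i,k} = \beta_{\nu_{t,i,k}}$ with $\nu_{t,i,k}$ the visit counter of \eqref{eq:424}--\eqref{eq:425}; by recurrence this counter tends to infinity, and summing over visits gives $\sum_t \al_{t,i,k} = \sum_s \beta_s = \infty$, with no monotonicity of $\{\beta_t\}$ required, which is why part~1 omits it. For a \emph{global} clock, $\al_{t,i,k} = \beta_t I_{\{X^k_t = x_i\}}$, so (S2) is precisely $\sum_t \beta_t I_{\{X^k_t = x_i\}} = \infty$ almost surely; this is supplied by Theorem \ref{thm:531} applied to the chain $A^{u_k}$, whose hypotheses require $\{\beta_t\}$ nonincreasing — exactly the extra assumption of part~2.

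With (G), (N1')--(N2'), and (S1)--(S2) established, Theorem \ref{thm:43} yields $\sup_t \nmi{Q_t} < \infty$ almost surely and Theorem \ref{thm:44} yields $Q_t \ap Q^*$ almost surely, in both regimes. The delicate point throughout is isolating the role of irreducibility: it is what converts the single algebraic Robbins-Monro condition $\sum_t \beta_t = \infty$ into the per-component divergence demanded by (S2), and Theorem \ref{thm:531} is the device that accomplishes this for the global clock.
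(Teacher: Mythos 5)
Your proposal is correct and follows essentially the same route the paper indicates: casting the batch $Q$-learning recursion \eqref{eq:65} as a BASA iteration \eqref{eq:426}--\eqref{eq:427} for the contraction $F$ of \eqref{eq:S120a}, verifying (G), (N1')--(N2') via the martingale-difference structure of the simulation noise, and obtaining (S1)--(S2) from irreducibility of each $A^{u_k}$ (recurrence for the local clock, Theorem \ref{thm:531} for the global clock) before invoking Theorems \ref{thm:43} and \ref{thm:44}. This is precisely the argument the paper sketches and defers to \cite{MV-BASA-arxiv22}, so no further comparison is needed.
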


\textbf{Remark:}
Note that, in the statement of the theorem, it is \textit{not} assumed
that every \textit{policy} $\pi$ leads to an irreducible
Markov process -- only that every \textit{action} leads to an
an irreducible Markov process.
In other words, the assumption  is
that the $m$ different matrices $A^{u_k}, k \in [m]$ correspond to
irreducible Markov processes.
This is a substantial improvement.
It is shown in \cite{Tsi-ORL07} that the following problem is NP-hard:
Given an MDP, determine whether \textit{every policy} $\pi$ results in
a Markov process that is a unichain, that is, consists of a single set of
recurrent states with the associated state transition matrix being
irreducible, plus possibly some transient states.
Our problem is slightly different, because we don't permit any transient states.
Nevertheless, this problem is also likely to be very difficult.
By not requiring any condition of this sort, and also by dispensing
with conditions analogous to \eqref{eq:63} and \eqref{eq:64}, the
above theorem statement is more useful.


\section{Conclusions and Problems for Future Research}\label{sec:Future}

In this brief survey, we have attempted to sketch some of the highlights of
Reinforcement Learning.
Our viewpoint, which is quite mainstream, is to view RL as solving
Markov Decision Problems (MDPs) when the underlying dynamics are unknown.
We have used the paradigm of Stochastic Approximation (SA) as a unifying
approach.
We have presented convergence theorems for the standard approach, which
might be thought of as ``synchronous'' SA, as well as variants such as
Asynchronous SA (ASA) and Batch Asynchronous SA (BASA).
Many of these results are due to the author and his collaborators.

In this survey, due to length limitations,
we have \textit{not} discussed actor-critic algorithms.
These can be viewed as applications of the policy gradient theorem
\cite{Sutton-PG00,Mar-Tsi-TAC01} coupled with stochastic approximation
applied to two-time scale (i.e., singularly perturbed) systems
\cite{Borkar97,CL-SB-Auto17}.
Some other relevant references are
\cite{Konda-Borkar99,Konda-Tsi99,Konda-Tsi03}.
Also, the rapidly emerging field of Finite-Time SA has not been discussed.
FTSA can lead to estimates of the rate of convergence of various RL
algorithms, whereas conventional SA leads to only asymptotic results.
Some recent relevant papers include \cite{CMSS20-NeurIPS20,CMSS-NeurIPS21}.


\begin{thebibliography}{10}

\bibitem{Avg-MDP-Survey93}
Aristotle Arapostathis, Vivek~S. Borkar, Emmanuel Fern\'{a}ndez-Gaucherand,
  Mrinal~K. Ghosh, and Steven~I. Marcus.
\newblock Discrete-time controlled {M}arkov processes with average cost
  criterion: A survey.
\newblock {\em { SIAM Journal of Control and Optimization}}, 31(2):282--344,
  1993.

\bibitem{Benaim99}
M.~Benaim.
\newblock {\em Dynamics of stochastic approximation algorithms}.
\newblock Springer Verlag, 1999.

\bibitem{BMP92}
Albert Benveniste, Michel Metivier, and Pierre Priouret.
\newblock {\em Adaptive Algorithms and Stochastic Approximation}.
\newblock Springer-Verlag, 1990.

\bibitem{Ber-Tsi96}
D.~P. Bertsekas and J.~N. Tsitsiklis.
\newblock {\em Neuro-Dynamic Programming}.
\newblock Athena Scientific, 1996.

\bibitem{Blum54}
Julius~R. Blum.
\newblock Multivariable stochastic approximation methods.
\newblock {\em Annals of Mathematical Statistics}, 25(4):737--744, 1954.

\bibitem{Borkar98}
V.~S. Borkar.
\newblock Asynchronous stochastic approximations.
\newblock {\em {SIAM Journal on Control and Optimization}}, 36(3):840--851,
  1998.

\bibitem{Borkar-Meyn00}
V.~S. Borkar and S.~P. Meyn.
\newblock {The O.D.E. method for convergence of stochastic approximation and
  reinforcement learning}.
\newblock {\em {SIAM Journal on Control and Optimization}}, 38:447--469, 2000.

\bibitem{Borkar97}
Vivek~S. Borkar.
\newblock Stochastic approximation in two time scales.
\newblock {\em Systems \& Control Letters}, 29(5):291--294, February 1997.

\bibitem{Borkar08}
Vivek~S. Borkar.
\newblock {\em Stochastic Approximation: A Dynamical Systems Viewpoint}.
\newblock Cambridge University Press, 2008.

\bibitem{Borkar22}
Vivek~S. Borkar.
\newblock {\em Stochastic Approximation: A Dynamical Systems Viewpoint (Second
  Edition)}.
\newblock Hindustan Book Agency, 2022.

\bibitem{CMSS20-NeurIPS20}
Zaiwei Chen, Siva~Theja Maguluri, Sanjay Shakkottai, and Karthikeyan Shanmugam.
\newblock Finite-sample analysis of contractive stochastic approximation using
  smooth convex envelopes.
\newblock arxiv:2002.00874v4, October 2020.

\bibitem{CMSS-NeurIPS21}
Zaiwei Chen, Siva~Theja Maguluri, Sanjay Shakkottai, and Karthikeyan Shanmugam.
\newblock Finite-sample analysis of off-policy td-learning via generalized
  bellman operators.
\newblock arxiv:2106.12729v1, June 2021.

\bibitem{Durrett19}
Rick Durrett.
\newblock {\em Probability: Theory and Examples (5th Edition)}.
\newblock Cambridge University Press, 2019.

\bibitem{Gladyshev65}
E.~G. Gladyshev.
\newblock On stochastic approximation.
\newblock {\em Theory of Probability and Its Applications}, X(2):275--278,
  1965.

\bibitem{Hahn67}
Wolfgang Hahn.
\newblock {\em Stability of Motion}.
\newblock Springer-Verlag, 1967.

\bibitem{Jaakkola-et-al94}
Tommi Jaakkola, Michael~I. Jordan, and Satinder~P. Singh.
\newblock Convergence of stochastic iterative dynamic programming algorithms.
\newblock {\em Neural Computation}, 6(6):1185--1201, November 1994.

\bibitem{MV-BASA-arxiv22}
Rajeeva~L. Karandikar and M.~Vidyasagar.
\newblock Convergence of batch asynchronous stochastic approximation with
  applications to reinforcement learning.
\newblock arxiv:2109.03445v2, July 2022.

\bibitem{Khalil02}
Hassan~K. Khalil.
\newblock {\em Nonlinear Systems (Third Edition)}.
\newblock Prentice Hall, 2002.

\bibitem{Konda-Tsi03}
V.~Konda and J.~Tsitsiklis.
\newblock On actor-critic algorithms.
\newblock {\em {SIAM Journal on Control and Optimization}}, 42(4):1143--1166,
  2003.

\bibitem{Konda-Tsi99}
Vijay~R. Konda and John~N. Tsitsiklis.
\newblock Actor-critic algorithms.
\newblock In {\em {Neural Information Processing Systems (NIPS1999)}}, pages
  1008--1014, 1999.

\bibitem{Konda-Borkar99}
Vijaymohan~R. Konda and Vivek~S. Borkar.
\newblock Actor-critic learning algorithms for {M}arkov decision processes.
\newblock {\em {SIAM Journal on Control and Optimization}}, 38(1):94--123,
  1999.

\bibitem{Kushner-Clark78}
Harold~J. Kushner and Dean~S. Clark.
\newblock {\em Stochastic Approximation Methods for Constrained and
  Unconstrained Systems}.
\newblock Applied Mathematical Sciences. Springer-Verlag, 1978.

\bibitem{Kushner-Yin97}
Harold~J. Kushner and G.~George Yin.
\newblock {\em Stochastic Approximation and Recursive Algorithms and
  Applications}.
\newblock Springer-Verlag, 1997.

\bibitem{CL-SB-Auto17}
Chandrashekar Lakshminarayanan and Shalabh Bhatnagar.
\newblock A stability criterion for two timescale stochastic approximation
  schemes.
\newblock {\em Automatica}, 79:108--114, 2017.

\bibitem{Ljung78}
Lennart Ljung.
\newblock Strong convergence of a stochastic approximation algorithm.
\newblock {\em Annals of Statistics}, 6:680--696, 1978.

\bibitem{Mar-Tsi-TAC01}
Peter Marbach and John~N. Tsitsiklis.
\newblock Simulation-based optimization of markov reward processes.
\newblock {\em {IEEE Transactions on Automatic Control}}, 46(2):191--209,
  February 2001.

\bibitem{Puterman05}
Martin~L. Puterman.
\newblock {\em Markov Decision Processes: Discrete Stochastic Dynamic
  Programming}.
\newblock John Wiley, 2005.

\bibitem{Robbins-Monro51}
Herbert Robbins and Sutton Monro.
\newblock A stochastic approximation method.
\newblock {\em Annals of Mathematical Statistics}, 22(3):400--407, 1951.

\bibitem{Shannon-chess}
Claude~E. Shannon.
\newblock Programming a computer for playing chess.
\newblock {\em Philosophical Magazine, Ser.7}, 41(314), March 1950.

\bibitem{SHV20}
Mark~W. Spong, Seth~R. Hutchinson, and M.~Vidyasagar.
\newblock {\em Robot Modeling and Control (Second Edition)}.
\newblock John Wiley, 2020.

\bibitem{Sutton88}
R.~S. Sutton.
\newblock Learning to predict by the method of temporal differences.
\newblock {\em Machine Learning}, 3(1):9--44, 1988.

\bibitem{Sutton-PG00}
R.~S. Sutton, D.~McAllester, S.~Singh, and Y.Mansour.
\newblock Policy gradient methods for reinforcement learning with function
  approximation.
\newblock In {\em Advances in Neural Information Processing Systems 12
  (Proceedings of the 1999 conference)}, pages 1057--1063. {MIT Press}, 2000.

\bibitem{Sutton-Barto18}
Richard~S. Sutton and Andrew~G. Barto.
\newblock {\em Reinforcement Learning: An Introduction (Second Edition)}.
\newblock {MIT} Press, 2018.

\bibitem{Csaba10}
Csaba Szepesv\'{a}ri.
\newblock {\em Algorithms for Reinforcement Learning}.
\newblock Morgan and Claypool, 2010.

\bibitem{Tadic-ACC04}
Vladimir~B. Tadi\'{c}.
\newblock Almost sure convergence of two time-scale stochastic approximation
  algorithms.
\newblock In {\em Proceedings of the American Control Conference}, volume~4,
  pages 3802--3807, 2004.

\bibitem{Tsi-ML94}
John~N. Tsitsiklis.
\newblock Asynchronous stochastic approximation and q-learning.
\newblock {\em Machine Learning}, 16:185--202, 1994.

\bibitem{Tsi-ORL07}
John~N. Tsitsiklis.
\newblock {NP-Hardness of checking the unichain condition in average cost
  MDPs}.
\newblock {\em Operations Research Letters}, 35:319--323, 2007.

\bibitem{Tsi-Van-TAC97}
John~N. Tsitsiklis and Benjamin~Van Roy.
\newblock An analysis of temporal-difference learning with function
  approximation.
\newblock {\em {IEEE Transactions on Automatic Control}}, 42(5):674--690, May
  1997.

\bibitem{MV-93}
M.~Vidyasagar.
\newblock {\em {Nonlinear Systems Analysis (SIAM Classics Series)}}.
\newblock Society for Industrial and Applied Mathematics {(SIAM)}, 2002.

\bibitem{MV-HMM-14}
M.~Vidyasagar.
\newblock {\em Hidden Markov Processes: Theory and Applications to Biology}.
\newblock Princeton University Press, 2014.

\bibitem{MV-GES-SA-arxiv22}
M.~Vidyasagar.
\newblock Convergence of stochastic approximation via martingale and converse
  {L}yapunov methods.
\newblock arxiv:2205.01303v1, May 2022.

\bibitem{Watkins-Dayan92}
C.~J. C.~H. Watkins and P.~Dayan.
\newblock Q-learning.
\newblock {\em Machine Learning}, 8(3-4):279--292, 1992.

\end{thebibliography}

\end{document}